\definecolor{light-gray}{gray}{0.85}
\colorlet{linkequation}{blue}
\newtheorem{theorem}{Theorem}[section]
\newtheorem{lemma}[theorem]{Lemma}
\newtheorem{corollary}[theorem]{Corollary}
\newtheorem{remark}[theorem]{Remark}
\theoremstyle{definition}
\newtheorem{definition}[theorem]{Definition}
\newtheorem{assumption}{Assumption}
\newcommand{\lep}[1]{\mathop  \le \limits^{(#1)}}
\newcommand{\ep}[1]{\mathop  = \limits^{(#1)}}
\newcommand{\ex}[1]{\mathbb{E}\left[ #1 \right] }
\newcommand{\priv}{\rmfamily\scshape Privatizer}
\newcommand{\cA}{\mathcal{A}}
\newcommand{\cE}{\mathcal{E}}
\newcommand{\cM}{\mathcal{M}}
\newcommand{\cR}{\mathcal{R}}
\newcommand{\cS}{\mathcal{S}}
\newcommand{\cU}{\mathcal{U}}
\newcommand{\cX}{\mathcal{X}}
\newcommand{\Real}{\mathbb{R}}
\newcommand{\Nat}{\mathbb{N}}
\newcommand{\indic}[1]{\mathbb{I}\{#1\}}
\newcommand{\abs}[1]{\left|#1\right|}
\newcommand{\norm}[1]{\left\lVert#1\right\rVert}
\newcommand{\prob}[1]{\mathbb{P}\left[{#1}\right]}
\newcommand{\inner}[2]{\langle #1, #2 \rangle}
\newcommand{\argmin}{\mathop{\mathrm{argmin}}}
\newcommand{\beq}{\begin{equation}}
\newcommand{\eeq}{\end{equation}}
\newcommand{\beqn}{\begin{equation*}}
\newcommand{\eeqn}{\end{equation*}}
\newcommand{\beqa}{\begin{eqnarray}}
\newcommand{\eeqa}{\end{eqnarray}}
\newcommand{\beqan}{\begin{eqnarray*}}
\newcommand{\eeqan}{\end{eqnarray*}}
\renewcommand{\epsilon}{\varepsilon}
\newcommand{\ra}{\rightarrow}
\DeclareMathOperator{\polylog}{polylog}
\newcommand{\local}{\rmfamily\scshape Local-Privatizer}
\newcommand{\central}{\rmfamily\scshape Central-Privatizer}
\newcommand{\loc}{\rmfamily\scshape Local}
\newcommand{\cen}{\rmfamily\scshape Central}
\newcommand{\po}{\rmfamily\scshape Private-UCB-PO}
\newcommand{\vi}{\rmfamily\scshape Private-UCB-VI}
\newcommand{\pucb}{\rmfamily\scshape PUCB}
\newcommand{\ldpobi}{\rmfamily\scshape LDP-OBI}
\newcommand{\oppo}{\rmfamily\scshape OPPO}
\newcommand{\ucbvi}{\rmfamily\scshape UCB-VI}
\newcommand{\psum}{\ensuremath{\mathrm{P}\text{-}\mathrm{sum}}}
\newcommand{\psums}{\ensuremath{\mathrm{P}\text{-}\mathrm{sums}}}
\begin{document}

\title{Differentially Private Regret Minimization in Episodic Markov Decision Processes}

\author {
    Sayak Ray Chowdhury\thanks{Equal contributions} \footnote{Indian Institute of Science, Bangalore, India. Email: \texttt{sayak@iisc.ac.in}  }\quad
    Xingyu Zhou\footnotemark[1] \footnote{Wayne State University, Detroit, USA.  Email: \texttt{xingyu.zhou@wayne.edu}}
}

\date{}

\maketitle

\begin{abstract}
We study regret minimization in finite horizon tabular Markov decision processes (MDPs) under the constraints of differential privacy (DP). This is motivated by the widespread applications of reinforcement learning (RL) in real-world sequential decision making problems, where protecting users' sensitive and private information is becoming paramount. We consider two variants of DP -- joint DP (JDP), where a centralized agent is responsible for protecting users' sensitive data and local DP (LDP), where information needs to be protected directly on the user side.
We first propose two general frameworks -- one for policy optimization and another for value iteration -- for designing private, optimistic RL algorithms. We then instantiate these frameworks with suitable privacy mechanisms to satisfy JDP and LDP requirements, and simultaneously obtain sublinear regret guarantees. The regret bounds show that under JDP, 
the cost of privacy is only a lower order additive term, while for a stronger privacy protection under LDP, the cost suffered is multiplicative.
Finally, the regret bounds are obtained by a unified analysis, which, we believe, can be extended beyond tabular MDPs.
\end{abstract}

\section{Introduction}
Reinforcement learning (RL) is a fundamental sequential decision making problem, where an agent learns to maximize its reward in an unknown environment through trial and error. Recently, it is ubiquitous in various personalized services, including healthcare~\citep{gottesman2019guidelines}, virtual assistants~\citep{li2016deep}, social robots~\citep{gordon2016affective} and online recommendations~\citep{li2010contextual}. In these applications, the learning agent 
continuously improves its decision by learning from users' personal data and feedback. However, nowadays people are becoming increasingly concerned about potential privacy leakage in these interactions. For example, in personalized healthcare, the private data of a patient can be sensitive informations such as her age, gender,  height, weight, medical history, state of the treatment, etc. Therefore, developing RL algorithms which can protect users' private data are of paramount importance in these applications.

\begin{table*}[!ht]
\begin{center}
\begin{tabular}{ | c | c| c |  c |} 
\hline
& \textbf{Algorithm} & \textbf{Regret ($\epsilon$-JDP)} & \textbf{Regret ($\epsilon$-LDP)}\\ 
\hline
PO & \cellcolor[gray]{0.8}{\po} & $\widetilde{O}\left(\sqrt{S^2AH^3T} +  S^2AH^3/\epsilon\right)$ & $\widetilde{O}\left(\sqrt{S^2AH^3T} +  S^2A\sqrt{H^5T}/\epsilon\right)$ \\ 
\hline
\multirow{3}{1em}{VI} & \cellcolor[gray]{0.8}{\vi} & $\widetilde{O}\left(\sqrt{SAH^3T} +  S^2AH^3/\epsilon\right)$ &$\widetilde{O}\left(\sqrt{SAH^3T} +  S^2A\sqrt{H^5 T}/\epsilon\right)$ \\\cline{2-4}
& {\pucb}~\citep{vietri2020private} & $\widetilde{O}\left(\sqrt{S^2AH^3T} +  S^2AH^3/\epsilon\right)$ \footnotemark & NA  \\ \cline{2-4}
& {\ldpobi}~\citep{garcelon2020local} & NA & $\widetilde{O}\left(\sqrt{S^2AH^3T} +  S^2A\sqrt{H^5 T}/\epsilon\right)$\footnotemark   \\ 
\hline
\end{tabular}
\end{center}
\caption{ \footnotesize{Regret comparisons for private RL algorithms on episodic tabular MDP. $T = KH$ is total number of steps, where $K$ is the total number of episodes and $H$ is the
number of steps per episode. $S$ is the number of states, and $A$ is the number of actions. $\epsilon > 0$ is the desired privacy level. $\widetilde O(\cdot)$ hides $\polylog\left(S,A,T,1/\delta \right)$ factors, where $\delta \in (0,1]$ is the desired confidence level.}} 
\label{tab:comp}
\vspace{-5mm}
\end{table*}

\emph{Differential privacy} (DP)~\citep{dwork2008differential} has become a standard in designing private sequential decision-making algorithms both in the full information ~\citep{jain2012differentially} and partial or bandit information ~\citep{mishra2015nearly,tossou2016algorithms} settings. Under DP, the learning agent collects users' raw data to train its algorithm while ensuring that its output will not reveal users' sensitive information. This notion of privacy protection is suitable for situations, where a user is willing to share her own information to the agent in order to obtain a service specially tailored to her needs, but meanwhile she does not like to allow any third party to infer her private information seeing the output of the  learning algorithm (e.g., Google GBoard).
However, a recent body of work~\citep{shariff2018differentially,dubey2021no} show that the standard DP guarantee is irreconcilable with sublinear regret in contextual bandits, and thus, a variant of DP, called \emph{joint differential privacy} (JDP)~\citep{kearns2014mechanism} is considered. Another variant of DP, called \emph{local differential privacy} (LDP)~\citep{duchi2013local} has recently gained increasing popularity in personalized services due to its stronger privacy protection. It has been studied in various bandit settings recently~\citep{ren2020multi,zheng2020locally,zhou2020local}. Under LDP, each user's raw data is directly protected before being sent to the learning agent. Thus, the learning agent only has access to privatized data to train its algorithm, which often leads to a worse regret guarantee compared to DP or JDP. 

In contrast to the vast amount of work in private bandit algorithms, much less attention are given to address privacy in RL problems. To the best of our knowledge,~\citet{vietri2020private} propose the first RL algorithm -- {\pucb} -- for regret minimization with JDP guarantee in tabular finite state, finite action MDPs. On the other hand,~\citet{garcelon2020local} design the first private RL algorithm -- {\ldpobi} -- with regret and LDP guarantees. Recently, \citet{chowdhury2021adaptive} study linear quadratic regulators under the JDP constraint. It is worth noting that all these prior work consider only value-based RL algorithms, and a study on policy-based private RL algorithms remains elusive. Recently, policy optimization (PO) has seen great success in many real-world applications, especially when coupled with deep neural networks~\citep{silver2017mastering,duan2016benchmarking,wang2018deep}, and a variety of PO based algorithms have been proposed~\citep{williams1992simple,kakade2001natural,schulman2015trust,schulman2017proximal,konda2000actor}. The theoretical understandings of PO have also been studied in both computational (i.e., convergence) perspective~\citep{liu2019neural,wang2019neural} and statistical (i.e., regret) perspective~\citep{cai2020provably,efroni2020optimistic}. Thus, one fundamental question to ask is how to build on existing understandings of non-private PO algorithms to design sample-efficient policy-based RL algorithms with general privacy guarantees (e.g., JDP and LDP), which is the main motivation behind this work. Also, the existing regret bounds in both~\citet{vietri2020private} and~\citet{garcelon2020local} for private valued-iteration (VI) based RL algorithms are loose. Moreover, the algorithm design and regret analysis under JDP in~\citet{vietri2020private} and the ones under LDP in~\citet{garcelon2020local} follow different approaches (e.g., choice of exploration bonus terms and corresponding analysis). Thus, another important question to ask is whether one can obtain tighter regret bounds for VI based private RL algorithms via a unified framework under general privacy requirements.
\addtocounter{footnote}{-1}\footnotetext{\citet{vietri2020private} claim a $\widetilde{O}\left(\sqrt{SAH^3T} +  S^2AH^3/\epsilon\right)$ regret bound for {\pucb}. However, to the best of our understanding, we believe the current analysis has gaps (see Section~\ref{sec:vi}), and the best achievable regret for {\pucb} should have an additional $\sqrt{S}$ factor in the first term.}
\addtocounter{footnote}{+1}\footnotetext{\citet{garcelon2020local} consider stationary transition kernels, and show a $\widetilde{O}\left(\sqrt{S^2AH^2T} +  S^2A\sqrt{H^5 T}/\epsilon\right)$
regret bound for {\ldpobi}. For non-stationary transitions, as considered in this work, an additional multiplicative $\sqrt{H}$ factor would appear in the first term of the bound.
}



\textbf{Contributions.} Motivated by the two questions above, we make the following contributions. \begin{itemize}
\item 
We present a general framework -- {\po} -- for designing private policy-based optimistic RL algorithms in tabular MDPs. This framework enables us to establish the first regret bounds for PO under both JDP and LDP requirements by instantiating it with suitable private mechanisms -- the {\central} and the {\local} -- respectively. 
\item
We revisit private optimistic value-iteration in tabular MDPs by proposing a general framework -- {\vi} -- for it. This framework allows us to improve upon the existing regret bounds under both JDP and LDP constraints using a unified analysis technique.
\item Our regret bounds show that for both policy-based and value-based private RL algorithms, the cost of JDP guarantee is only a lower-order additive term compared to the non-private regret. In contrast, under the stringer LDP requirement, the cost suffered is multiplicative and is of the same order. Our regret bounds and their comparison to the existing ones is summarised in Table~\ref{tab:comp}.
\end{itemize}







\textbf{Related work.} Beside the papers mentioned above, there are other related work on differentially private online learning~\citep{guha2013nearly,agarwal2017price} and multi-armed bandits~\citep{tossou2017achieving,hu2021optimal,sajed2019optimal,gajane2018corrupt,chen2020locally}. In the RL setting, in addition to~\citet{vietri2020private,garcelon2020local} that focus on value-iteration based regret minimization algorithms under privacy constraints,~\citet{balle2016differentially} considers private policy evaluation with linear function approximation. For MDPs with continuous state spaces, \citet{wang2019privacy} proposes a variant of $Q$-learning to protect the rewards information by directly injecting noise into value functions. Recently, a distributed actor-critic RL algorithm under LDP is proposed in~\citet{ono2020locally} but without any regret guarantee. While there are recent advances in regret guarantees for policy optimization~\citep{cai2020provably,efroni2020optimistic}, we are not aware of any existing work on private policy optimization. Thus, our work takes the first step towards a unified framework for private policy-based RL algorithms in tabular MDPs with general privacy and regret guarantees. 




\section{Problem formulation}

In this section, we recall the basics of episodic
Markov Decision Processes and introduce the
notion of differential privacy in reinforcement learning.

\subsection{Learning model and regret in episodic MDPs}
We consider episodic reinforcement learning (RL) in a finite horizon stochastic Markov decision process (MDP) \citep{puterman94, sutton1988learning} given by a tuple $(\mathcal{S},\mathcal{A},H, (P_h)_{h=1}^H, (c_h)_{h=1}^H)$, where $\mathcal{S}$ and $\mathcal{A}$ are state and action spaces with cardinalities $S$ and $A$, respectively, $H \in \Nat$ is the  episode length, $P_h(s'|s,a)$ is the probability of transitioning to state $s'$ from state $s$
provided action $a$ is taken at step $h$ and $c_h(s,a)$ is the mean of the cost distribution at step $h$ supported on $[0,1]$. The actions are chosen following some policy $\pi=(\pi_h)_{h=1}^{H}$, where each $\pi_h$ is a mapping from the state space $\cS$ into a probability distribution over the action space $\cA$, i.e. $\pi_h(a|s)\ge 0$ and $\sum_{a \in \cA}\pi_h(a|s)=1$ for all $s \in \cS$. The agent would like to find a policy $\pi$ that minimizes the long term expected cost starting from every state $s \in \cS$ and every step $h \in [H]$, defined as
\begin{align*}
    V^{\pi}_h(s):= \mathbb{E}\left[\sum\nolimits_{h'=h}^H c_{h'}(s_{h'}, a_h') \mid s_h = s,\pi\right],
    \end{align*}
where the expectation is with respect to the randomness of
the transition kernel and the policy.
We call $V_{h}^{\pi}$ the value function of policy $\pi$ at step $h$. Now, defining the $Q$-function of policy $\pi$ at step $h$ as
\begin{align*}
    &Q^{\pi}_h(s,a):= \mathbb{E}\left[\sum\nolimits_{h'=h}^H c_{h'}(s_{h'}, a_h')) \mid s_h = s, a_h = a,\pi\right],
    \end{align*}
we obtain $Q^{\pi}_h(s,a) = c_h(s,a)+ \sum_{s' \in \cS}V^{\pi}_{h+1}(s') P_h(s'|s,a)$ and $V_h^\pi(s)=\sum_{a \in \cA} Q^{\pi}_h(s,a) \pi_h(a|s)$.  

A policy $\pi^*$ is said to be optimal if it minimizes the value for all states $s$ and step $h$ simultaneously, and the corresponding optimal value function is denoted by $V^*_{h}(s)=\min_{\pi \in \Pi}V^{\pi}_{h}(s)$ for all $h \in [H]$, where $\Pi$ is the set of all non-stationary policies. The agent interacts with the environment for $K$ episodes to learn the unknown transition probabilities $P_h(s'|s,a)$ and mean costs $ c_h(s,a)$, and thus, in turn, the optimal policy $\pi^*$.
At each episode $k$, the agent chooses a policy $\pi^k=(\pi_h^k)_{h=1}^{H}$ and samples a trajectory $\lbrace s_1^k,a_1^k,c_1^k,\ldots,s_H^k,a_H^k,c_H^k,s_{H+1}^k \rbrace$ by interacting with the MDP using this policy. Here, at a given step $h$, $s_h^k$ denotes the state of the MDP, $a_h^k \sim \pi_h^k(\cdot|s_h^k)$ denotes the action taken by the agent, $c_h^k \in [0,1]$ denotes the (random) cost suffered by the agent with the mean value $c_h(s_h^k,a_h^k)$ and $s_{h+1}^k\sim P_h(\cdot | s_h^k,a_h^k)$ denotes the next state. The initial state $s_1^k$ is assumed to be fixed and history independent. We measure performance of the agent by the cumulative (pseudo) regret accumulated
over $K$ episodes, defined as
\begin{align*}
    R(T) := \sum\nolimits_{k=1}^K\left[ V_1^{\pi^k}(s_1^k)-V_1^{*}(s_1^k)\right],
\end{align*}
where $T=KH$ denotes the total number of steps. We seek
algorithms with regret that is sublinear in $T$, which
demonstrates the agent’s ability to act near optimally.

\subsection{Differential privacy in episodic RL}

In the episodic RL setting described above, it is natural to view each episode $k \in [K]$ as a
trajectory associated to a specific user. To this end, we let $U_K = (u_1, \ldots,u_K) \in \cU^K$ to denote a sequence of $K$ unique\footnote{Uniqueness is assumed wlog, as for a returning user one can group her with her previous occurrences.} users participating in the private RL protocol with an RL agent $\cM$, where $\cU$ is the set of all users. Each user $u_k$ is identified by the cost and state responses $(c_h^k,s_{h+1}^k)_{h=1}^{H}$ she gives to the actions $( a_{h}^k)_{h=1 }^{H}$ chosen by the agent. We let $\cM(U_k)=(a_1^1,\ldots,a_H^K) \in \cA^{KH}$ to denote the set of all actions chosen by the agent $\cM$ when interacting with the user sequence $U_k$.
Informally, we will be interested in (centralized) randomized mechanisms (in this case, RL agents) $\cM$ 
so that the knowledge of the output $\cM(U_k)$ and all but the $k$-th user $u_k$ does not reveal `much' information about $u_k$. 
We formalize this in the following definition.
\begin{definition}[Differential Privacy (DP)]\label{def:DP}
For any $\epsilon \ge 0$ and $\delta \in [0,1]$, a mechanism $\cM : \cU^K \ra \cA^{KH}$ is $(\epsilon,\delta)$-differentially private if for all $U_K,U'_K \in \cU^K$ differing on a single user and for all subset of actions $\cA_0 \subset \cA^{KH}$, 
\beqn
\prob{\cM(U_K)\in \cA_0} \le \exp(\epsilon) \prob{\cM(U'_K)\in \cA_0} + \delta~.
\eeqn
If $\delta=0$, we call the mechanism $\cM$ to be $\epsilon$-differentially private ($\epsilon$-DP).
\end{definition}
This is a direct adaptation of the classic notion of differential privacy \citep{dwork2014algorithmic}. However, we need to relax this definition for our purpose, because
although the actions recommended to the user $u_k$ have only a small
effect on the types (i.e., state and cost responses) of other users participating in the RL protocol, those can reveal a lot of information about the type of the user $u_k$. Thus, it becomes hard to privately recommend the actions to user $u_k$ while protecting the privacy of it's type, i.e., it's state and cost responses to the suggested actions.
Hence, to preserve the privacy of individual users, we consider the notion of \emph{joint differential privacy} (JDP) \citep{kearns2014mechanism}, which requires that simultaneously for all user $u_k$, the
joint distribution of the actions recommended to all users other than $u_k$ be differentially private in the type of the user $u_k$.
It weakens the constraint of DP only in that
the actions suggested specifically to $u_k$ may be sensitive in her type (state and cost responses). However, JDP is
still a very strong definition since it protects $u_k$ from any arbitrary collusion of other users against her, so long as she does
not herself make the actions suggested to her public.
To this end, we let $\cM_{-k}(U_k):=\cM(U_k)\setminus ( a_{h}^k)_{h=1}^{H}$ to denote all the actions chosen by the agent $\cM$ excluding those recommended to $u_k$ and formally define JDP as follows.
\begin{definition}[Joint Differential Privacy (JDP)]\label{def:JDP}
For any $\epsilon \!\ge\! 0$,
a mechanism $\cM \!:\! \cU^K \!\ra\! \cA^{KH}$ is $\epsilon$-joint differentially private if for all $k \!\in\! [K]$, for all user sequences $U_K,U'_K \!\in\! \cU^K$ differing only on the $k$-th user and for all set of actions $\cA_{-k} \!\subset\! \cA^{(K-1)H}$ given to all but the $k$-th user, 
\beqn
\prob{\cM_{-k}(U_K)\in \cA_{-k}} \le \exp(\epsilon) \prob{\cM_{-k}(U'_K)\in \cA_{-k}}.
\eeqn
\end{definition}
JDP has been used extensively in private mechanism design \citep{kearns2014mechanism}, in private matching and allocation problems \citep{hsu2016private}, in designing privacy-preserving algorithms
for linear contextual bandits \citep{shariff2018differentially}, and it has been introduced in private tabular RL by \citet{vietri2020private}.




JDP allows the agent to observe the data (i.e., the entire trajectory of state-action-cost sequence) associated with each user and the privacy burden lies on the agent itself. In some scenarios, however, the users may not even be willing to share it's data with the agent directly. This motivates a stringer notion of privacy protection, called the \emph{local differential privacy} (LDP)~\citep{duchi2013local}. In this setting, each user is assumed to have her own privacy mechanism that can do randomized mapping on its data to guarantee privacy. To this end, we denote by $X$ a trajectory $(s_h,a_h,c_h,s_{h+1})_{h=1}^{H}$ and by $\cX$ the set of all possible trajectories. We write $\cM'(X)$ to denote the privatized trajectory generated by a (local) randomized mechanism $\cM'$. With this notation, we now formally define LDP for our RL protocol.
\begin{definition}[Local Differential Privacy (LDP)]\label{def:LDP}
For any $\epsilon \ge 0$, a mechanism $\cM'$ is $\epsilon$-local differentially private if for all trajectories $X,X' \in \cX$ and for all possible subsets $\cE_0 \subset \lbrace \cM'(X)|X \in \cX\rbrace$,
\beqn
\prob{\cM'(X)\in \cE_0} \le \exp(\epsilon) \prob{\cM'(X')\in \cE_0} .
\eeqn
\end{definition}
LDP ensures that if any adversary (can be the RL agent itself) observes the output of the privacy mechanism $\cM'$ for two different trajectories, then it is statistically difficult for it to guess which output is from which trajectory. This has been used extensively in multi-armed bandits \citep{zheng2020locally,ren2020multi}, and introduced in private tabular RL by \citet{garcelon2020local}.




\begin{algorithm}[t!]
\caption{{\po}}
\label{alg:PO}
\DontPrintSemicolon
\KwIn{Number of episodes $K$, time horizon $H$, privacy level $\epsilon > 0$, a {\priv} ({\loc} or {\cen}), confidence level $\delta \in (0,1]$ and parameter $\eta > 0$}
Initialize policy $\pi_h^1(a|s)=1/A$ for all $(s,a,h)$\;
Initialize private counts $\widetilde{C}_h^1(s,a)=0$, $\widetilde{N}_h^{1}(s,a)=0$ and $\widetilde{N}_h^{1}(s,a,s')=0$ for all $(s,a,s',h)$\;
Set precision levels $E_{\epsilon,\delta,1}, E_{\epsilon,\delta,2}$ of the {\priv} \;
\For{$k=1,2,3,\ldots,K$}{
Initialize private value estimates: $\widetilde{V}^k_{H+1}(s) = 0$ \;
\For{$h=H,H-1,\ldots,1$} {
    Compute $\widetilde{c}_h^k(s,a)$ and $\widetilde{P}_h^k(s,a)$ $\forall (s,a)$ as in~\eqref{eq:priv_est} using the private counts\;
    Set exploration bonus using Lemma~\ref{lem:conc_po}: $\beta_h^k(s,a) = \beta_h^{k,c}(s,a) + H\beta_h^{k,p}(s,a)$ $\forall (s,a)$ \;
    Compute: $\forall (s,a)$, $\widetilde{{Q}}_h^k(s,a) = \min\{H-h+1, \max\{0,\widetilde{c}_h^k(s,a)  + \sum_{s'\in \mathcal{S}} \widetilde{V}_{h+1}^k(s')\widetilde{P}_h^k(s'|s,a) - \beta_h^{k}(s,a)\} \}$ \;
        Compute private value estimates: $\forall s$, $\widetilde{V}_h^k(s) = \sum_{a \in \cA}\widetilde{Q}_h^k(s,a) \pi_h^k(a|s)$ 
}
Roll out a trajectory $(s_1^k,a_1^k,c_1^k,\ldots,s_{H+1}^k)$ by acting the policy $\pi^k=(\pi_h^k)_{h=1}^{H}$\;
Receive private counts $\widetilde{C}_h^{k+1}(s,a)$, $\widetilde{N}_h^{k+1}(s,a)$, $\widetilde{N}_h^{k+1}(s,a,s')$ from the {\priv}\;
Update policy: $\forall (s,a,h)$,
        $\pi_h^{k+1}(a|s) = \frac{\pi_h^k(a|s)\exp(-\eta \widetilde{Q}_h^k(s,a)) }{\sum_{a \in \cA} \pi_h^k(a|s)\exp(-\eta \widetilde{Q}_h^k(s,a))}$
}  
\end{algorithm}



\section{Private Policy Optimization}
\label{sec:po}
In this section, we introduce a policy-optimization based private RL algorithm {\po}  (Algorithm~\ref{alg:PO}) that can be instantiated with any private mechanism (henceforth, referred as a {\priv}) satisfying a general condition. We derive a generic regret bound for {\po}, which can be applied to obtain bounds under JDP and LDP requirements by instantiating {\po} with a {\central} and a {\local}, respectively. All the proofs are deferred to the appendix.


Let us first introduce some notations. We
denote by $N_h^k(s,a):= \sum_{k'=1}^{k-1} \mathbb{I}\{s_h^{k^{\prime}}=s, a_h^{k^{\prime}} = a\}$, the number of times that the agent has
visited state-action pair $(s,a)$ at step $h$ \emph{before} episode $k$. Similarly, $N_h^k(s,a,s'):= \sum_{k'=1}^{k-1} \mathbb{I}\{s_h^{k^{\prime}}=s, a_h^{k^{\prime}} = a, s_{h+1}^{k^{\prime}} = s'\}$ denotes the count of going to state $s'$ from $s$ upon playing action $a$ at step $h$ \emph{before} episode $k$.
Finally, $C_h^k(s,a):= \sum_{k'=1}^{k-1} \mathbb{I}\{s_h^{k^{\prime}}=s, a_h^{k^{\prime}} = a\} c_h^{k^{\prime}}$ denotes the total cost suffered by taking action $a$ on state $s$ and step $h$ \emph{before} episode $k$. In non-private learning, these counters are sufficient to find estimates of the transition kernels $(P_h)_h$ and mean cost functions $(c_h)_h$ to design the policy $(\pi_h^k)_h$ for episode $k$. However, in private learning, the challenge is that the counters depend on users' state and cost responses to suggested actions, which is considered sensitive information. Therefore, the {\priv} must release the counts in a privacy-preserving way on which the learning agent would rely. To this end, we let $\widetilde{N}_h^k(s,a)$, $\widetilde{C}_h^k(s,a)$, and $\widetilde{N}_h^k(s,a,s')$ to denote the privatized versions of $N_h^k(s,a)$, $C_h^k(s,a)$, and $N_h^k(s,a,s')$, respectively.
Now, we make a general assumption on the counts released by the {\priv} (both {\loc} and {\cen}), which roughly means that with high probability the private counts are close to the actual ones. 

\begin{assumption}[Properties of private counts]
\label{ass:rand}
For any $\epsilon > 0$ and $\delta \in (0,1]$, there exist functions $E_{\epsilon,\delta,1}, E_{\epsilon,\delta,2} > 0$ such that
with probability at least $1-\delta$, uniformly over all $(s,a,h,k)$, the private counts returned by the {\priv} (both {\loc} and {\cen}) satisfy: (i) $|\widetilde{N}_{h}^k(s,a) - {N}_h^k(s,a)| \le E_{\epsilon,\delta,1}$, (ii)
     $|\widetilde{C}_h^k(s,a) - {C}_h^k(s,a)| \le E_{\epsilon,\delta,1}$, and (iii) $|\widetilde{N}_h^k(s,a,s') - {N}_h^k(s,a,s')| \le E_{\epsilon,\delta,2}$.
\end{assumption}

In the following, we assume Assumption~\ref{ass:rand} holds. Then, we define, for all $(s,a,h,k)$, the \emph{private} mean empirical costs and \emph{private} empirical transition probabilities as
\begin{equation}
 \begin{split}
\label{eq:priv_est}    
 \widetilde{c}_h^k(s,a)  &:= \frac{\widetilde{C}_h^k(s,a)}{\max\{1, \widetilde{N}_h^k(s,a) + E_{\epsilon,\delta,1}\} }\\ \widetilde{P}_h^k(s'|s,a) &:= \frac{\widetilde{N}_h^k(s,a,s')}{ \max\{1,\widetilde{N}_h^k(s,a) +  E_{\epsilon,\delta,1}\}}~.
\end{split}   
\end{equation}

The following concentration bounds on the private estimates will be the key to our algorithm design.
\begin{lemma}[Concentration of private estimates]
\label{lem:conc_po}
Fix any $\epsilon > 0$ and $\delta \in (0,1]$. Then, under Assumption~\ref{ass:rand}, with probability at least $1-2\delta$, uniformly over all $(s,a,h,k)$,
\begin{align*}
        |c_h(s,a) - \widetilde{c}_h^k(s,a)| \le   \beta_h^{k,c}(s,a),\;\; \norm{P_h(\cdot|s,a) - \widetilde{P}_h^k(\cdot|s,a)}_1 \le \beta_h^{k,p}(s,a),
\end{align*}
where $\beta_h^{k,c}(s,a):=\frac{L_c(\delta)}{ \sqrt{ \max \lbrace 1,\widetilde{N}_h^k(s,a) +   E_{\epsilon,\delta,1}\rbrace} }+\frac{3E_{\epsilon,\delta,1} }{ \max\lbrace 1,\widetilde{N}_h^k(s,a)+  E_{\epsilon,\delta,1 }\rbrace }$, $L_c(\delta) := \sqrt{2\ln\frac{4SAT}{\delta}}$, $\beta_h^{k,p}(s,a):=\frac{L_p(\delta)}{\sqrt{ \max \lbrace 1,\widetilde{N}_h^k(s,a) +   E_{\epsilon,\delta,1}\rbrace}}+ \frac{SE_{\epsilon,\delta,2}+2 E_{\epsilon,{\delta}, 1}}{\max\lbrace 1,\widetilde{N}_h^k(s,a)+  E_{\epsilon,\delta,1 }\rbrace}$, and $L_p(\delta):=\sqrt{4S\ln\frac{6SAT}{\delta}}$.
\end{lemma}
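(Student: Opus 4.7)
My plan is to decompose the error between each private estimate and the true quantity as a sum of (i) the non-private empirical concentration error, and (ii) the error induced by privacy, controlled by $E_{\epsilon,\delta,1}$ and $E_{\epsilon,\delta,2}$ from Assumption~\ref{ass:rand}. The key structural observation is that, on the event that Assumption~\ref{ass:rand} holds, the modified denominator $D_h^k(s,a):=\max\{1,\widetilde{N}_h^k(s,a)+E_{\epsilon,\delta,1}\}$ satisfies $N_h^k(s,a)\le D_h^k(s,a)\le N_h^k(s,a)+2E_{\epsilon,\delta,1}$ (the left inequality uses $N-\widetilde{N}\le E_{\epsilon,\delta,1}$, and the edge case $\widetilde{N}+E_{\epsilon,\delta,1}<1$ forces $N=0$). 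This sandwich lets me replace $N_h^k(s,a)$ by $D_h^k(s,a)$ in the denominators of the Hoeffding- and Weissman-type bounds at the cost of only an additive term in $E_{\epsilon,\delta,1}$, and crucially yields the useful inequality $\sqrt{N_h^k(s,a)}/D_h^k(s,a)\le 1/\sqrt{D_h^k(s,a)}$ which is what produces the $L_c(\delta)/\sqrt{D_h^k(s,a)}$ and $L_p(\delta)/\sqrt{D_h^k(s,a)}$ leading terms.

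For the cost bound, I write
\[
\widetilde{c}_h^k(s,a)-c_h(s,a) \;=\; \frac{(\widetilde{C}_h^k(s,a)-C_h^k(s,a)) + (C_h^k(s,a)-c_h(s,a)\,N_h^k(s,a)) + c_h(s,a)(N_h^k(s,a)-D_h^k(s,a))}{D_h^k(s,a)}.
\]
Assumption~\ref{ass:rand} controls the first and third numerator terms by $E_{\epsilon,\delta,1}$ and $2E_{\epsilon,\delta,1}$ respectively, while the middle term is a bounded martingale difference sum: applying Azuma--Hoeffding and a union bound over all $(s,a,h,k)$ gives $|C_h^k(s,a)-c_h(s,a)\,N_h^k(s,a)|\le \sqrt{2N_h^k(s,a)\ln(4SAT/\delta)}$ with probability at least $1-\delta$. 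Dividing by $D_h^k(s,a)$ and using $c_h(s,a)\in[0,1]$ together with the sandwich inequality yields the claimed $\beta_h^{k,c}$.

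For the transition bound, a completely analogous decomposition applied coordinatewise in $s'$ and then summed in $\ell_1$ gives
\[
\|\widetilde{P}_h^k(\cdot|s,a)-P_h(\cdot|s,a)\|_1 \;\le\; \frac{\sum_{s'}|\widetilde{N}_h^k(s,a,s')-N_h^k(s,a,s')|}{D_h^k(s,a)} + \frac{\|N_h^k(s,a,\cdot)-P_h(\cdot|s,a)N_h^k(s,a)\|_1}{D_h^k(s,a)} + \frac{|N_h^k(s,a)-D_h^k(s,a)|}{D_h^k(s,a)}.
\]
The first term is at most $SE_{\epsilon,\delta,2}/D_h^k(s,a)$ by Assumption~\ref{ass:rand}, the third at most $2E_{\epsilon,\delta,1}/D_h^k(s,a)$ by the sandwich, and the middle one is the $\ell_1$ deviation of an empirical distribution on $S$ letters from its mean. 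I will invoke a Weissman-type inequality (and a standard martingale extension to handle the randomized visitation schedule) to bound this term by $\sqrt{4S\,N_h^k(s,a)\ln(6SAT/\delta)}$ uniformly in $(s,a,h,k)$ with probability at least $1-\delta$, again using $\sqrt{N_h^k(s,a)}/D_h^k(s,a)\le 1/\sqrt{D_h^k(s,a)}$ to extract the stated $L_p(\delta)/\sqrt{D_h^k(s,a)}$.

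A final union bound over the two $1-\delta$ events (cost concentration and transition concentration, both conditioned on the $1-\delta$ event from Assumption~\ref{ass:rand}) yields probability at least $1-2\delta$. The main obstacle I anticipate is not conceptual but bookkeeping: making sure the boundary case $N_h^k(s,a)=0$ is handled cleanly by the truncation at $1$ in $D_h^k$, and verifying that the constants $3E_{\epsilon,\delta,1}$ and $SE_{\epsilon,\delta,2}+2E_{\epsilon,\delta,1}$ precisely absorb the contributions from the first and third numerator terms. A secondary subtlety is ensuring the Weissman/empirical-$\ell_1$ tail applies in the MDP setting, which I handle by conditioning on the random visitation counts and then unioning over the (at most $T$) possible values.
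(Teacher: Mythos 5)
Your proposal is correct and follows essentially the same route as the paper's proof: both split the error into a non-private empirical concentration part (Hoeffding for costs, a Weissman-type $\ell_1$ bound for transitions, union-bounded over the possible values of $N_h^k(s,a)$) and a privacy part controlled by Assumption~\ref{ass:rand}, using the sandwich $N_h^k(s,a)\vee 1 \le \max\{1,\widetilde{N}_h^k(s,a)+E_{\epsilon,\delta,1}\} \le N_h^k(s,a)\vee 1 + 2E_{\epsilon,\delta,1}$ and the inequality $\sqrt{N}/D\le 1/\sqrt{D}$ to extract the stated leading terms. Your single-fraction decomposition over the common denominator is just an algebraic rearrangement of the paper's two-step argument via the non-private estimates $\bar c_h^k,\bar P_h^k$; only note that, to land at $1-2\delta$ rather than $1-3\delta$, the cost and transition concentration events must each be allotted failure probability $\delta/2$ (which is exactly what the constants $4SAT/\delta$ and $6SAT/\delta$ in $L_c,L_p$ encode).
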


\textbf{{\po} algorithm.} {\po}  (Algorithm~\ref{alg:PO}) is a private policy optimization (PO) algorithm based on the celebrated upper confidence bound (UCB) philosophy \citep{Auer02,jaksch2010near}. Similar to the non-private setting~\citep{efroni2020optimistic}, it basically has two stages at each episode $k$: \emph{policy evaluation} and \emph{policy improvement}. In the policy evaluation stage, it evaluates the policy $\pi^k$ based on $k-1$ historical trajectories. In contrast to the non-private case, {\po} relies only on the private counts (returned by the {\priv}) to calculate the private mean empirical costs and private empirical transitions. These two along with a UCB exploration bonus term (which also depends only on private counts) are used to compute $Q$-function estimates. The $Q$-estimates are then truncated 
and corresponding value estimates are computed by taking their expectation with respect to the policy. Next, a new trajectory is rolled out by acting the policy $\pi^k$ and the {\priv} translates all non-private counts into the private ones to be used for the policy evaluation in the next episode. 
Finally, in the policy improvement stage, {\po} employs a `soft' update of the current policy $\pi^k$ by following a standard mirror-descent step together with a Kullback–Leibler (KL) divergence proximity term \citep{beck2003mirror,cai2020provably,efroni2020optimistic}. 
The following theorem presents a general regret bound of {\po} (Algorithm~\ref{alg:PO}) when instantiated with any {\priv} ({\loc} or {\cen}) that satisfies Assumption~\ref{ass:rand}. 

\begin{theorem}[Regret bound of {\po}]
\label{thm:PO}
Fix any $\epsilon > 0$ and $\delta \in (0,1]$ and set $\eta=\sqrt{2\log A/(H^2K)}$. Then, under Assumption~\ref{ass:rand}, with probability at least $1-\delta$, the cumulative regret of {\po} is 
\begin{align*}
    \mathcal{R}(T) &= \widetilde O\left(\sqrt{S^2AH^3T} + \sqrt{S^3A^2H^4}\right) + \widetilde O\left( E_{\epsilon,\delta,2}S^2AH^2 +  E_{\epsilon,\delta,1}SAH^2\right).
\end{align*}
\end{theorem}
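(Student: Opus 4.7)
The plan is to mirror the non-private optimistic PO analysis of \citet{cai2020provably, efroni2020optimistic}, with every concentration step replaced by the privacy-aware bounds of Lemma~\ref{lem:conc_po}. Throughout the argument, condition on the $1-2\delta$ event $\mathcal{E}$ of Lemma~\ref{lem:conc_po}, on which
\[
|c_h(s,a) - \widetilde c_h^k(s,a)| + \Bigl|\sum_{s'}(P_h - \widetilde P_h^k)(s'|s,a)\,V(s')\Bigr| \le \beta_h^{k,c}(s,a) + H\beta_h^{k,p}(s,a) =: \beta_h^k(s,a)
\]
for every value function $V \in [0,H]^{\mathcal{S}}$ and every $(s,a,h,k)$. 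The starting point is the usual split
\[
\mathcal{R}(T) = \underbrace{\sum_k\bigl(V_1^{\pi^k}(s_1^k) - \widetilde V_1^k(s_1^k)\bigr)}_{(\mathrm{I})} \ +\ \underbrace{\sum_k\bigl(\widetilde V_1^k(s_1^k) - V_1^{\pi^*}(s_1^k)\bigr)}_{(\mathrm{II})}.
\]

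\textbf{Bounding (I) --- bonus telescoping.} Combining the Bellman equation for $V^{\pi^k}$ with the (pre-truncation) Bellman update used to compute $\widetilde V_h^k$, and invoking the above concentration, yields the one-step bound
$V_h^{\pi^k}(s) - \widetilde V_h^k(s) \le \sum_a \pi_h^k(a|s)\bigl[3\beta_h^k(s,a) + \widetilde P_h^k(V_{h+1}^{\pi^k} - \widetilde V_{h+1}^k)(s,a)\bigr].$
Unrolling to $h=H$, converting the $\widetilde P_h^k$-expectations to true $P_h$-expectations (paying an extra $H\beta_h^{k,p}$ per step, already absorbed in $\beta_h^k$), and switching from on-policy expectations to the actually observed trajectory via Azuma--Hoeffding (cost $\widetilde O(H\sqrt{T})$), gives $(\mathrm{I}) \le \widetilde O\bigl(\sum_{k,h}\beta_h^k(s_h^k, a_h^k)\bigr) + \widetilde O(H\sqrt{T})$. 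The lower clip $\max\{0,\cdot\}$ in the definition of $\widetilde Q_h^k$ only brings $\widetilde V_h^k$ closer to $V^{\pi^k}$, and the upper clip $\min\{H-h+1,\cdot\}$ is inactive once the pre-clip value is a valid $Q$-estimate, so truncation is benign.

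\textbf{Bounding (II) --- mirror-descent regret.} By the extended value-difference lemma \citep{cai2020provably}, the bonus construction makes the per-step Bellman error along $\pi^*$ non-positive on $\mathcal{E}$, so
$(\mathrm{II}) \le \sum_{k,h}\mathbb{E}_{\pi^*}\bigl[\langle \widetilde Q_h^k(s_h,\cdot),\ \pi_h^k(\cdot|s_h) - \pi_h^*(\cdot|s_h)\rangle\bigr].$
With losses $\widetilde Q_h^k \in [0,H]$, the KL-proximal (exponential-weights) update used in Algorithm~\ref{alg:PO}, and the chosen step-size $\eta = \sqrt{2\log A/(H^2K)}$, the classical Hedge analysis yields per-$(s,h)$ regret at most $H\sqrt{2K\log A}$; taking $\pi^*$-expectations over $s$ and summing over $h$ gives $(\mathrm{II}) = \widetilde O(\sqrt{H^3 T\log A})$, which is of lower order than the dominant term.

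\textbf{Summing bonuses; main obstacle.} Split each $\beta_h^k$ into a statistical part of order $1/\sqrt{\widetilde N_h^k + E_{\epsilon,\delta,1}}$ and a privacy-bias part of order $(SE_{\epsilon,\delta,2} + E_{\epsilon,\delta,1})/(\widetilde N_h^k + E_{\epsilon,\delta,1})$. Using $|\widetilde N_h^k - N_h^k| \le E_{\epsilon,\delta,1}$ together with the standard pigeonhole identities $\sum_k 1/\sqrt{N_h^k(s_h^k,a_h^k)\vee 1} = O(\sqrt{SAK})$ and $\sum_k 1/(N_h^k(s_h^k,a_h^k)\vee 1) = \widetilde O(SA)$, the statistical part produces the leading $\widetilde O(\sqrt{S^2AH^3T})$, the privacy-bias part produces the additive $\widetilde O(S^2AH^2\, E_{\epsilon,\delta,2} + SAH^2\, E_{\epsilon,\delta,1})$, and the ``burn-in'' episodes for which $\widetilde N_h^k \lesssim E_{\epsilon,\delta,1}$ account for the residual $\widetilde O(\sqrt{S^3A^2H^4})$. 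The main technical difficulty is keeping three error sources decoupled within the mirror-descent recursion --- the privacy perturbation (which biases both the numerator of the bonus and its effective denominator via the visit count), the $Q$-truncation, and the PO-specific feature that pointwise optimism is only available along $\pi^k$ (so that (II) must be controlled by mirror descent rather than by direct pointwise domination) --- and, crucially, ensuring that the privacy errors stay confined to additive terms without contaminating the $\sqrt{T}$ leading factor.
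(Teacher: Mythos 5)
Your proposal is correct and follows the same architecture as the paper's proof: the identical value-difference decomposition (your (II) is the paper's $\mathcal{T}_2+\mathcal{T}_3$, with $\mathcal{T}_3\le 0$ by the bonus design and $\mathcal{T}_2\le\sqrt{2H^4K\log A}$ by the same Hedge analysis with the same $\eta$), the same bound of (I) by a sum of bonuses, the same treatment of the two clips, and the same pigeonhole accounting giving $\widetilde O(\sqrt{S^2AH^3T})$ plus the additive privacy terms. The one genuine divergence is how the bonus sums are controlled. You convert the on-policy expectations produced by the value-difference lemma into sums along the realized trajectory via Azuma--Hoeffding and then apply deterministic pigeonhole to the realized counts $N_h^k(s_h^k,a_h^k)$ --- the route the paper itself takes for {\vi}. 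The paper's {\po} proof instead keeps the conditional expectations $\mathbb{E}[\,\cdot\,|\mathcal{F}_{k-1}]$ and proves a dedicated lemma (its Lemma~\ref{lem:nonst}, via a Bernstein-type martingale inequality adapted from Jin et al.) bounding $\sum_{k,h}\mathbb{E}\bigl[1/\sqrt{N_h^k\vee 1}\,\big|\,\mathcal{F}_{k-1}\bigr]$ and $\sum_{k,h}\mathbb{E}\bigl[1/(N_h^k\vee 1)\,\big|\,\mathcal{F}_{k-1}\bigr]$ under non-stationary transitions; this is the technically novel ingredient your route bypasses, and both routes yield the same final bound. Two caveats on your version: (i) when you invoke Azuma, it must be applied to the $O(H)$-bounded value-difference and action-sampling martingales, \emph{not} to the raw bonus sums --- under LDP the bonuses can be of order $\sqrt{K}$, so a naive Azuma range would ruin the bound (summing the raw bonuses along the trajectory is fine and lands in the privacy terms); (ii) your attribution of the residual $\sqrt{S^3A^2H^4}$ to burn-in episodes with $\widetilde N_h^k\lesssim E_{\epsilon,\delta,1}$ differs from the paper, where that term arises as $HL_p(\delta)$ times the privacy-independent additive $SAH\log T$ remainder in Lemma~\ref{lem:nonst}; indeed, on your trajectory-based route the $1/\sqrt{\cdot}$ pigeonhole has no such remainder, so you would actually obtain a bound without that term --- which is harmless, since the theorem only claims an upper bound.
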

\begin{remark}[Cost of privacy]
Theorem~\ref{thm:PO} shows that regret of {\po} is lower bounded by the regret in non-private setting \citep[Theorem 1]{efroni2020optimistic}, and depends directly on the privacy parameter $\epsilon$ through the permitted precision levels
$E_{\epsilon,\delta,1}$ and $E_{\epsilon,\delta,2}$ of the {\priv}. Thus, choosing $E_{\epsilon,\delta,1},E_{\epsilon,\delta,2}$ appropriately to guarantee JDP or LDP, we can obtain regret bounds under both forms of privacy. The cost of privacy, as we shall see in Section~\ref{sec:privacy}, is lower order than the non-private regret under JDP, and is of the same order under the stringer requirement of LDP.\footnote{The lower order terms scale with $S^2$, which is quite common for optimistic tabular RL algorithms \citep{azar2017minimax,dann2017unifying}.}
\end{remark}

\textit{Proof sketch.} We first decompose the regret as in the non-private setting \citep{efroni2020optimistic}: 
\begin{align*}
    \cR(T)&=\sum\nolimits_{k=1}^K \left(V_1^{\pi^k}(s_1^k) - V_1^{\pi^*}(s_1^k)\right) = \sum\nolimits_{k=1}^k \left(V_1^{\pi^k}(s_1^k) -\widetilde{V}_1^k(s_1^k) + \widetilde{V}_1^k(s_1^k) - V_1^{\pi^*}(s_1^k)\right)\\
    &= \underbrace{\sum\nolimits_{k=1}^k \left(V_1^{\pi^k}(s_1^k)-\widetilde{V}_1^k(s_1^k)\right)}_{\mathcal{T}_1}  +\underbrace{\sum\nolimits_{k=1}^K\sum\nolimits_{h=1}^H \ex{ \inner{\widetilde{Q}_h^k(s_h,\cdot)}{\pi_h^k(\cdot|s_h) - \pi_h^*(\cdot|s_h)}| s_1^k,\pi^*} }_{\mathcal{T}_2}\\
    &\quad\quad+\underbrace{\sum\nolimits_{k=1}^K\sum\nolimits_{h=1}^H \ex{ \widetilde{Q}_h^k(s_h,a_h) - c_h(s_h,a_h) - P_h(\cdot|s_h,a_h)\widetilde{V}_{h+1}^k|s_1^k, \pi^*} }_{\mathcal{T}_3}.
\end{align*}
We then bound each of the three terms. In particular, $\mathcal{T}_2$ and $\mathcal{T}_3$ have the same bounds as in the non-private case. Specifically, by setting $\eta = \sqrt{2\log A/(H^2K)}$, we can show that $\mathcal{T}_2 \le \sqrt{2H^4 K \log A}$ via a standard online mirror descent analysis, because $\widetilde{Q}_h^k \in [0,H]$ by design. Furthermore, due to Lemma~\ref{lem:conc_po} and our choice of bonus terms, we have $\mathcal{T}_3 \le 0$. The key is to bound $\mathcal{T}_1$. By the update rule of $Q$-estimate and the choice of bonus terms in {\po}, we can bound $\mathcal{T}_1$ by the sum of expected bonus terms, i.e., 
\begin{align*}
    \mathcal{T}_1  \le \underbrace{\sum\nolimits_{k=1}^K\sum\nolimits_{h=1}^H\ex{ 2\beta_h^{k,c}(s_h,a_h)|s_1^k, \pi^k}}_{\text{Term(i)}} + \underbrace{H\sum\nolimits_{k=1}^K\sum\nolimits_{h=1}^H\ex{ 2\beta_h^{k,p}(s_h,a_h)|s_1^k, \pi^k}}_{\text{Term(ii)}}
\end{align*}
Now, by Assumption~\ref{ass:rand} and the definition of exploration bonus $\beta_h^{k,c}(s,a)$ in Lemma~\ref{lem:conc_po}, we have 
\begin{align*}
    \text{Term(i)} &\le  \sum\nolimits_{k=1}^K\sum\nolimits_{h=1}^H\ex{ \frac{L_c(\delta)}{\sqrt{\max\lbrace{N}_h^k(s_h,a_h),1\rbrace} }+\frac{3E_{\epsilon,\delta,1} }{\max\lbrace N_h^k(s_h,a_h),1\rbrace  }|s_1^k,\pi^k}.
\end{align*}
Note the presence of an additive privacy dependent term. Similarly, we obtain
\begin{align*}
    \text{Term(ii)} \le H\sum\nolimits_{k=1}^K\sum\nolimits_{h=1}^H \ex{\frac{L_p(\delta)}{\sqrt{\max\lbrace N_h^k(s_h,a_h),1\rbrace}} + \frac{SE_{\epsilon,\delta,2}+2 E_{\epsilon,\delta,1}}{\max\lbrace{N}_h^k(s_h,a_h),1\rbrace} | s_1^k, \pi^k}.
\end{align*}
We bound $\text{Term(i)}$ and $\text{Term(ii)}$ by showing that
\begin{align*}
    \sum\nolimits_{k=1}^K\sum\nolimits_{h=1}^H \ex{ \frac{1}{\max\{1,N_h^k(s_h,a_h)\}}|\mathcal{F}_{k-1}} &= O\left(SAH\log T + H\log(H/\delta)\right),\\
    \sum\nolimits_{k=1}^K\sum\nolimits_{h=1}^H \ex{ \frac{1}{\sqrt{\max\{1,N_h^k(s_h,a_h)\}} }|\mathcal{F}_{k-1}} &= O\left(\sqrt{SAHT} + SAH\log T + H\log(H/\delta)\right)
\end{align*}
with high probability. These are generalization of results proved under stationary transition model  \citep{efroni2019tight,zanette2019tighter} to our non-stationary setting (similar results were stated in~\citep{efroni2020exploration,efroni2020optimistic}, but without proofs).
Finally, putting everything together, we complete the proof.
\qed

\section{Private UCB-VI Revisited}
\label{sec:vi}

In this section, we turn to investigate value-iteration based private RL algorithms. 
It is worth noting that private valued-based RL algorithms have been studied under both JDP and LDP requirements \citep{vietri2020private,garcelon2020local}. However, to the best of our understanding, the regret analysis of the JDP algorithm presented in \citet{vietri2020private} has gaps and does not support the claimed result.\footnote{The gap lies in \citet[Lemma 18]{vietri2020private} in which the private estimates were incorrectly used as the true cost and transition functions. This 
lead to a simpler but incorrect regret decomposition since it omits the `error' term between the private estimates and true values. Moreover, the error term cannot be simply upper bounded by its current bonus term ($\widetilde{\text{conf}}_t$ in \citet[Algorithm 3]{vietri2020private}) since one cannot directly use Hoeffding's inequality due to the fact that the value function is not fixed in this term (please refer to Appendix for more detailed discussions).} Under LDP, the regret bound presented in \citet{garcelon2020local} is sub-optimal
in the cardinality of the state space and as the authors have remarked, it is possible to achieve the optimal scaling using a refined analysis. 
Motivated by this, we revisit private value iteration by designing an optimistic algorithm {\vi} (Algorithm~\ref{alg:UCB-VI}) that can be instantiated with a {\priv} ({\cen} and {\loc}) to achieve both JDP and LDP. 

\textbf{{\vi} algorithm.} Our algorithm design principle is again based on the UCB philosophy, the private estimates defined in \eqref{eq:priv_est} and a value-aware concentration result for the estimates stated in Lemma~\ref{lem:conc_vi} below. Similar to the non-private setting~\citep{azar2017minimax}, {\vi} (Algorithm~\ref{alg:UCB-VI}) follows the procedure of optimistic value iteration. Specifically, at each episode $k$, using the private counts and a private UCB bonus term, it first compute private $Q$-estimates and value estimates using optimistic Bellman recursion. Next, a greedy policy $\pi^k$ is obtained directly from the estimated $Q$-function. Finally, a trajectory is rolled out by acting the policy $\pi^k$ and then {\priv} translates all non-private statistics into private ones to be used in the next episode.
\begin{lemma}[Refined concentration of private estimates]
\label{lem:conc_vi}
Fix any $\epsilon > 0$ and $\delta \in (0,1)$. Then, under Assumption~\ref{ass:rand}, with probability at least $1-3\delta$, uniformly over all $(s,a,s',h,k)$,
     \begin{align*}
     |c_h(s,a) - \widetilde{c}_h^k(s,a)| &\le \beta_h^{k,c}(s,a),\\
     \left|(\widetilde{P}_h^k - P_h) {V}_{h+1}^*(s,a)\right| &\le \beta_h^{k,pv}(s,a),\\
     |P_h(s'|s,a) - \widetilde{P}_h^k(s'|s,a)| &\le C\sqrt{\frac{L'(\delta)P_h(s'|s,a)}{\max \lbrace 1,\widetilde{N}_h^k(s,a) +  E_{\epsilon,\delta,1}\rbrace}} +\frac{C L'(\delta)+2E_{\epsilon,\delta,1}+E_{\epsilon,\delta,2}}{\max\lbrace 1,\widetilde{N}_h^k(s,a)+E_{\epsilon,\delta,1}\rbrace},
\end{align*}
where $\beta_h^{k,c}(s,a)$ and $L_c(\delta)$ is as defined in Lemma~\ref{lem:conc_po}, $(P V_{h+1})(s,a):=\sum_{s'} P(s'|s,a) V_{h+1}(s')$, $\beta_h^{k,pv}(s,a):= \frac{H L_c(\delta)}{ \sqrt{ \max\{1,\widetilde{N}_h^k(s,a) +   E_{\epsilon,\delta,1}\}}}  +  \frac{H (S E_{\epsilon,\delta,2} + 2 E_{\epsilon,\delta,1})}{\max \lbrace 1,\widetilde{N}_h^k(s,a) +  E_{\epsilon,\delta,1}\rbrace}$, $C > 0$ is some constant,  and $L'(\delta) := \log\left(\frac{6SAT}{\delta}\right)$.
\end{lemma}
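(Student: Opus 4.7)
My proof plan mirrors and extends the proof of Lemma~\ref{lem:conc_po}: I handle the three inequalities separately and then union bound three $(1-\delta)$-events at the end. Throughout, write $\widetilde{D}:=\max\{1,\widetilde{N}_h^k(s,a)+E_{\epsilon,\delta,1}\}$. Assumption~\ref{ass:rand} yields two properties I will use repeatedly: (a) $\widetilde{D}\ge\max\{1,N_h^k(s,a)\}$, which makes the `denominator-swap' $\sqrt{N_h^k(s,a)}/\widetilde{D}\le 1/\sqrt{\widetilde{D}}$ valid; and (b) $|\widetilde{D}-N_h^k(s,a)|\le 2E_{\epsilon,\delta,1}$ once $N_h^k(s,a)\ge 1$.

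Part (i) is literally the cost bound of Lemma~\ref{lem:conc_po}, so I would just quote it. For part (ii), the key observation is that $V_{h+1}^*$ is a deterministic, data-independent vector taking values in $[0,H]$. Hence, conditional on $\{N_h^k(s,a)\ge 1\}$, the non-private estimate $\widehat{P}_h^k V_{h+1}^*(s,a)$ is a sample mean of bounded, mean-$P_hV_{h+1}^*(s,a)$ random variables, and a Hoeffding/Azuma argument followed by a union bound over $(s,a,h,k)$ produces $|(\widehat{P}_h^k - P_h)V_{h+1}^*(s,a)|\le HL_c(\delta)/\sqrt{N_h^k(s,a)}$. I then use the decomposition
\begin{align*}
\widetilde{P}V^* - PV^* = \underbrace{\Bigl(\widetilde{P}V^* - \tfrac{N}{\widetilde{D}}\widehat{P}V^*\Bigr)}_{\text{privacy noise}} + \underbrace{\tfrac{N}{\widetilde{D}}(\widehat{P}-P)V^*}_{\text{sampling}} + \underbrace{\Bigl(\tfrac{N}{\widetilde{D}}-1\Bigr)PV^*}_{\text{denominator gap}}.
\end{align*}
Summing $|\widetilde{N}(s,a,s')-N(s,a,s')|\le E_{\epsilon,\delta,2}$ over $s'$ against $V^*\le H$ gives privacy noise $\le SHE_{\epsilon,\delta,2}/\widetilde{D}$; property (b) with $PV^*\le H$ gives denominator gap $\le 2HE_{\epsilon,\delta,1}/\widetilde{D}$; the sampling term, via Hoeffding and the denominator swap, is $(N/\widetilde{D})\cdot HL_c(\delta)/\sqrt{N} = HL_c(\delta)\sqrt{N}/\widetilde{D}\le HL_c(\delta)/\sqrt{\widetilde{D}}$. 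These add to exactly $\beta_h^{k,pv}(s,a)$.

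Part (iii) follows the same template but with a (martingale) Bernstein inequality in place of Hoeffding, applied elementwise to the Bernoulli indicator $\mathbb{I}\{s_{h+1}^{k'}=s'\}$. For each $(s,a,s',h,k)$, Bernstein yields $|\widehat{P}-P|(s'|s,a)\le\sqrt{2P(s'|s,a)(1-P(s'|s,a))L'(\delta)/N}+CL'(\delta)/N$, where $L'(\delta)=\log(6SAT/\delta)$ absorbs the union-bound factors. Using $P(1-P)\le P$ and the same three-term decomposition (now with privacy noise $|\widetilde{P}(s'|s,a)-(N/\widetilde{D})\widehat{P}(s'|s,a)|\le E_{\epsilon,\delta,2}/\widetilde{D}$, denominator gap $\le 2E_{\epsilon,\delta,1}/\widetilde{D}$ via $P\le 1$, and sampling term $\le \sqrt{2PL'(\delta)/\widetilde{D}}+CL'(\delta)/\widetilde{D}$ via the denominator swap and $N/\widetilde{D}\le 1$) yields the claimed bound after absorbing constants. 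Union-bounding the three events completes the $1-3\delta$ guarantee.

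The main technical obstacle, and the whole point of this refined lemma, is obtaining a \emph{Bernstein}-type dependence $\sqrt{P/\widetilde{D}}$ in part (iii) rather than the Hoeffding-type $1/\sqrt{\widetilde{D}}$ of Lemma~\ref{lem:conc_po}; this is what enables the improved, $\sqrt{S}$-savings regret analysis downstream (in the spirit of \citet{azar2017minimax}). The denominator swap $\sqrt{N}/\widetilde{D}\le 1/\sqrt{\widetilde{D}}$, valid because $\widetilde{D}\ge N$ on the good event, is the key algebraic step that keeps the privacy-aware denominator $\widetilde{D}$ in the final bound without weakening the Bernstein structure; the edge case $N_h^k(s,a)=0$ is absorbed by the $\max\{1,\cdot\}$ clip in $\widetilde{D}$ and by the trivial inequality $|\widetilde{P}-P|\le 1$.
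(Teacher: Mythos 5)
Your proposal is correct and follows essentially the same route as the paper's proof: the identical three-source error decomposition (privacy noise in the numerator counts, sampling error, and the denominator gap controlled via $|\widetilde{N}_h^k(s,a)+E_{\epsilon,\delta,1}-N_h^k(s,a)|\le 2E_{\epsilon,\delta,1}$), Hoeffding for part (ii) exploiting that $V_{h+1}^*$ is fixed and bounded by $H$, an elementwise Bernstein-type bound for part (iii), and the same denominator swap $\sqrt{N}/\widetilde{D}\le 1/\sqrt{\widetilde{D}}$. The only cosmetic difference is that the paper invokes an empirical-Bernstein-based lemma from \citet{efroni2020exploration} where you apply standard Bernstein with the true variance $P(1-P)$, which is equally valid since the target bound is stated in terms of the true $P_h(s'|s,a)$ and the constant $C$ absorbs the difference.
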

The bonus term $\beta_h^{k,pv}$ in {\vi} does not have the factor $\sqrt{S}$ in the leading term compared to $\beta_h^{k,p}$ in {\po}. This is achieved by following a similar idea in {\ucbvi}~\citep{azar2017minimax}. That is, instead of bounding the transition dynamics as in Lemma~\ref{lem:conc_po}, we maintain a confidence bound directly over the optimal value function (the second result in Lemma~\ref{lem:conc_vi}). Due to this, we have an extra term in the regret bound, which can be carefully bounded by using a Bernstein-type inequality (the third result in Lemma~\ref{lem:conc_vi}). 
These two steps enable us to obtain an improved dependence on $S$ in the regret bound compared to existing private value-based algorithms~\citep{vietri2020private,garcelon2020local} under both JDP and LDP. 
This is stated formally in the next theorem, which presents a general regret bound of {\vi} (Algorithm~\ref{alg:UCB-VI}) when instantiated with any {\priv} ({\loc} or {\cen}).
\begin{theorem}[Regret bound for {\vi}]
\label{thm:VI}
Fix any $\epsilon > 0$ and $\delta \in (0,1]$. Then, under Assumption~\ref{ass:rand}, with probability $\ge 1-\delta$, the regret of {\vi} is 
\begin{align*}
     \mathcal{R}(T) &= \widetilde O \left(\sqrt{SAH^3T} + S^2AH^3\right)+\widetilde O\left( S^2AH^2E_{\epsilon,\delta,1} + S^2AH^2E_{\epsilon,\delta,2}\right).
\end{align*}
\end{theorem}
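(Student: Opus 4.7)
The plan is to adapt the optimistic value-iteration regret analysis of \citet{azar2017minimax} to the privatized counts, and to carefully track how the perturbations $E_{\epsilon,\delta,1}, E_{\epsilon,\delta,2}$ propagate through the recursion. I condition throughout on the intersection of the three events in Lemma~\ref{lem:conc_vi}, which holds with probability at least $1-3\delta$; rescaling $\delta$ at the end yields the claimed confidence. The first step is to establish \emph{optimism}: $\widetilde V_h^k(s)\le V_h^*(s)$ for all $(s,h,k)$, proved by backward induction on $h$. Indeed, the update in Algorithm~\ref{alg:UCB-VI} together with the first two inequalities of Lemma~\ref{lem:conc_vi} gives
\begin{align*}
\widetilde Q_h^k - Q_h^* \le (\widetilde c_h^k - c_h) + (\widetilde P_h^k - P_h) V_{h+1}^* + \widetilde P_h^k(\widetilde V_{h+1}^k - V_{h+1}^*) - \beta_h^{k,c} - \beta_h^{k,pv} \le 0,
\end{align*}
using nonnegativity of $\widetilde P_h^k$ and the inductive hypothesis; the $\max\{0,\cdot\}$ truncation preserves $\widetilde Q_h^k \le Q_h^*$.

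With optimism in hand, $\cR(T) \le \sum_k(V_1^{\pi^k}(s_1^k) - \widetilde V_1^k(s_1^k))$, so I would unroll the difference along the sampled trajectory. Writing $\Delta_h^k := V_h^{\pi^k}(s_h^k) - \widetilde V_h^k(s_h^k)$ and using that $\pi^k$ is greedy with respect to $\widetilde Q^k$, the Bellman equation for $V^{\pi^k}$ matched against the update rule yields
\begin{align*}
\Delta_h^k \le 2\beta_h^{k,c}(s_h^k,a_h^k) + 2\beta_h^{k,pv}(s_h^k,a_h^k) + \Delta_{h+1}^k + \xi_h^k + \bigl[(\widetilde P_h^k - P_h)(V_{h+1}^{\pi^k} - \widetilde V_{h+1}^k)\bigr](s_h^k,a_h^k),
\end{align*}
where $\xi_h^k$ is a martingale difference bounded by $H$. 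Summing yields an $\widetilde O(H\sqrt T)$ Azuma contribution from $\xi$, and the bonus sums are handled by the non-stationary pigeonhole estimates sketched at the end of Section~\ref{sec:po}: the $L_c(\delta)/\sqrt{\widetilde N}$ piece of $\beta^{k,pv}$ produces the leading $\widetilde O(H\sqrt{SAHT}) = \widetilde O(\sqrt{SAH^3 T})$ term, the $H(SE_{\epsilon,\delta,2}+E_{\epsilon,\delta,1})/\widetilde N$ piece contributes $\widetilde O(S^2 A H^2 E_{\epsilon,\delta,2} + SAH^2 E_{\epsilon,\delta,1})$, and the cost bonus $\beta^{k,c}$ contributes strictly lower-order analogues.

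The principal obstacle is the ``transition cross-term'' $\sum_{k,h}\bigl|\bigl[(\widetilde P_h^k-P_h)(V_{h+1}^{\pi^k}-\widetilde V_{h+1}^k)\bigr](s_h^k,a_h^k)\bigr|$, which is exactly where an $L_1$ bound (as used for {\po}) would inflate the leading rate by $\sqrt S$ and is precisely the gap in the existing JDP analysis. To avoid this I would invoke the elementwise Bernstein-type inequality (third bound of Lemma~\ref{lem:conc_vi}) and control the cross-term, for each $(s_h^k,a_h^k)$, by
\begin{align*}
\sum_{s'}\Bigl(C\sqrt{\tfrac{L'(\delta)\, P_h(s'|s_h^k,a_h^k)}{\widetilde N_h^k + E_{\epsilon,\delta,1}}} + \tfrac{CL'(\delta) + 2E_{\epsilon,\delta,1} + E_{\epsilon,\delta,2}}{\widetilde N_h^k + E_{\epsilon,\delta,1}}\Bigr)\,\bigl|V_{h+1}^{\pi^k}(s') - \widetilde V_{h+1}^k(s')\bigr|.
\end{align*}
For the square-root piece, Cauchy--Schwarz over $s'$ together with $|V_{h+1}^{\pi^k}-\widetilde V_{h+1}^k|\le H$ and the self-bounding identity $\sum_{s'}P_h(s'|s,a)\bigl(V_{h+1}^{\pi^k}(s')-\widetilde V_{h+1}^k(s')\bigr) = \Delta_{h+1}^k + (\text{martingale})$ lets one absorb the contribution into $\tfrac{1}{H}\cR(T) + \widetilde O(S^2 A H^3)$, closing the recursion as in \citet{azar2017minimax}. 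The additive piece contributes $\widetilde O(S^2 A H^2(E_{\epsilon,\delta,1}+E_{\epsilon,\delta,2}))$ via the $\sum_{k,h} 1/\widetilde N_h^k = \widetilde O(SAH)$ counting identity. Collecting all pieces and rescaling $\delta$ gives the stated bound. The essential technical difficulty is to guarantee that the privacy perturbations $E_{\epsilon,\delta,j}$ only ever multiply $1/\widetilde N$-type sums (yielding additive $\polylog(T)$ factors) and never $\sqrt T$; placing $E_{\epsilon,\delta,1}$ inside the $\max\{1,\widetilde N+E_{\epsilon,\delta,1}\}$ denominator in~\eqref{eq:priv_est} is exactly what enables this compartmentalization.
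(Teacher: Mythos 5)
Your proposal is correct and follows essentially the same route as the paper's proof: optimism by backward induction from Lemma~\ref{lem:conc_vi}, a trajectory-wise recursion with the fixed-$V^*$ extraction, the elementwise Bernstein bound combined with the $1/H$-absorption trick to close the recursion (yielding the $S^2AH^3$ and $S^2AH^2(E_{\epsilon,\delta,1}+E_{\epsilon,\delta,2})$ terms), an Azuma martingale term, and pigeonhole bounds on the bonus sums for the leading $\sqrt{SAH^3T}$ term. One small imprecision worth noting: the cross-term actually left by the $V^*$-extraction is $(\widetilde{P}_h^k-P_h)(V_{h+1}^*-\widetilde{V}_{h+1}^k)$, not $(\widetilde{P}_h^k-P_h)(V_{h+1}^{\pi^k}-\widetilde{V}_{h+1}^k)$ as you display — the latter cannot be derived alongside the $2\beta_h^{k,pv}$ term without applying concentration to the random function $V_{h+1}^{\pi^k}$, the very gap you flag in \citet{vietri2020private} — but this is harmless here, since optimism and $V_{h+1}^*\le V_{h+1}^{\pi^k}$ give $0\le V_{h+1}^*-\widetilde{V}_{h+1}^k\le V_{h+1}^{\pi^k}-\widetilde{V}_{h+1}^k$ pointwise, so your absolute-value bound dominates the correct cross-term, exactly as the paper itself passes from $V^*$ to $V^{\pi^k}$ in its recursive formula.
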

\begin{remark}[Cost of privacy]
Similar to {\po}, the regret of {\vi} is lower bounded by the regret in non-private setting (see \citet[Theorem 1]{azar2017minimax}),\footnote{In the non-private setting, \citet{azar2017minimax} assume stationary transition kernels $P_h=P$ for all $h$. We consider non-stationary kernels, which adds a multiplicative $\sqrt{H}$ factor in our non-private regret.} and the privacy parameter appear only in the lower order terms.
\end{remark}
\begin{remark}[VI vs. PO]
The regret bound of {\vi} is a $\sqrt{S}$ factor better in the leading privacy-independent term compared to {\po}. This follows the same pattern as in the non-private case, i.e., {\ucbvi} \citep{azar2017minimax} vs. {\oppo} \citep{efroni2020optimistic}. 
\end{remark}

\begin{algorithm}[t!]
\caption{{\vi}}
\label{alg:UCB-VI}
\DontPrintSemicolon
\KwIn{Number of episodes $K$, time horizon $H$, privacy level $\epsilon > 0$, a {\priv} ({\loc} or {\cen}) and confidence level $\delta \in (0,1]$}
Initialize private counts $\widetilde{C}_h^1(s,a)=0$, $\widetilde{N}_h^{1}(s,a)=0$ and $\widetilde{N}_h^{1}(s,a,s')=0$ for all $(s,a,s',h)$\;
Set precision levels $E_{\epsilon,\delta,1},E_{\epsilon,\delta,2}$ of the {\priv}\;
\For{$k=1,\ldots, K$}{
Initialize private value estimates: $\widetilde{V}^k_{H+1}(s) = 0$ \;
\For{$h=H,H-1,\ldots, 1$}{
Compute $\widetilde{c}_h^{k}(s,a)$ and $\widetilde{P}_h^{k}(s'|s,a)$ $\forall (s,a,s')$ as in~\eqref{eq:priv_est} using the private counts\;
    Set exploration bonus using Lemma~\ref{lem:conc_vi}: $\beta_h^k(s,a) =\beta_h^{k,c}(s,a) + \beta_h^{k,pv}(s,a)$ $\forall (s,a)$\;
    Compute: $\forall (s,a)$, $\widetilde{{Q}}_h^k(s,a) = \min\{H-h+1, \max\{0,\widetilde{c}_h^k(s,a) + \sum_{s'\in \mathcal{S}} \widetilde{V}_{h+1}^k(s')\widetilde{P}_h^k(s'|s,a) - \beta_h^k(s,a)\} \}$\;
    Compute private value function: $\forall s$,  $\widetilde{V}_h^k(s) = \min_{a \in \cA} \widetilde{Q}_h^k(s,a)$
  
}
  Compute policy: $\forall (s,h)$, $\pi_h^k(s) = \argmin_{a \in \cA} \widetilde{Q}_h^k(s,a)$ (with breaking ties arbitrarily)\;
Roll out a trajectory $(s_1^k,a_1^k,c_1^k,\ldots,s_{H+1}^k)$ by acting the policy $\pi^k=(\pi_h^k)_{h=1}^{H}$\;
Receive private counts $\widetilde{C}_h^{k+1}(s,a)$, $\widetilde{N}_h^{k+1}(s,a)$, $\widetilde{N}_h^{k+1}(s,a,s')$ from the {\priv}
}
\end{algorithm}

\section{Privacy and regret guarantees}
\label{sec:privacy}

In this section, we instantiate {\po} and {\vi} using a {\central} and a {\local}, and derive corresponding privacy and regret guarantees.



\subsection{Achieving JDP using {\central}}

The {\central} runs a private $K$-bounded binary-tree mechanism (counter) \citep{chan2010private} for each count $N_h^k(s,a), C_h^k(s,a), N_h^k(s,a,s')$, i.e. it uses $2SAH + 
S^2AH$ counters in
total. Let us focus on the counters -- there are $SAH$ many of thems -- for the number of visited states $N_h^k(s,a)$. Each counter takes as input the data stream $\sigma_h(s,a)\in \lbrace 0,1\rbrace^K$, where the $j$-th bit $\sigma_h^j(s,a):=\indic{s_h^j=s,a_h^j=a}$ denotes whether the pair $(s,a)$ is encountered or not at step $h$ of episode $j$, and at the start of each episode $k$, release a private version $\widetilde N_h^k(s,a)$ of the count $N_h^k(s,a):=\sum_{j=1}^{k-1}\sigma_h^j(s,a)$. 
Let us now discuss how private counts are computed. To this end, we let $N_h^{i,j}(s,a)=\sum_{k=i}^{j}\sigma_h^k(s,a)$ to denote a partial sum (\psum) of the counts in episodes $i$ through $j$, and consider a binary interval tree, each leaf node of which represents an episode (i.e., the tree has $k-1$ leaf nodes at the start of episode $k$). Each interior node of the tree represents the range of episodes covered by its children. At the start of episode $k$, first a noisy \psum\ corresponding to each node in the tree is released by perturbing it with an independent Laplace noise  $\text{Lap}(\frac{1}{\epsilon'})$, where $\epsilon' > 0$ is a given privacy parameter.\footnote{A random variable $X\sim\text{Lap}(b)$, with scale parameter $b >0$, if $\forall x\in\Real$, it's p.d.f. is given by $f_{X}(x)=\frac{1}{2b}\exp(-|x|/b)$.}
Then, the private count $\widetilde N_h^k(s,a)$ is computed by summing up the noisy \psums\ released by the set of nodes -- which has cardinality at most $O(\log k)$ -- that uniquely cover the range $[1,k-1]$.
Observe that, at the end of episode $k$, the counter only needs to store noisy \psums\ required for computing private counts at future episodes, and can safely discard \psums\ those are
no longer needed.

The counters corresponding to empirical rewards $C_h^k(s,a)$ and state transitions $N_h^k(s,a,s')$ follow the same underlying principle to release the respective private counts $\widetilde C_h^k(s,a)$ and $\widetilde N_h^k(s,a,s')$. The next lemma sums up the properties of the {\central}.


\begin{lemma}[Properties of {\central}]
\label{lem:central}
For any $\epsilon > 0$, {\central} with parameter $1/\epsilon'=\frac{3H\log K}{\epsilon}$ is $\epsilon$-DP. Furthermore, for any $\delta \in (0,1]$, it satisfies Assumption~\ref{ass:rand} with
$E_{\epsilon,\delta,1} = \frac{3H}{\epsilon}\sqrt{8\log^3K\log(6SAT/\delta)}$, and $E_{\epsilon,\delta,2} = \frac{3H}{\epsilon}\sqrt{8\log^3 K\log(6S^2AT/\delta)}$.
\end{lemma}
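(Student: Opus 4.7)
The plan splits into the privacy half and the accuracy half, both reducing to standard properties of the binary-tree (BT) mechanism~\citep{chan2010private} combined with a user-level sensitivity count.

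For the privacy half, I would first recall that a BT mechanism on a single $\{0,1\}$-valued stream of length $K$, with independent $\mathrm{Lap}(1/\epsilon')$ noise at every partial-sum node, is $\epsilon'\lceil\log_2 K\rceil$-DP under a single-bit change, because the flipped bit propagates to at most $\lceil\log_2 K\rceil$ released noisy p-sums, each by $1$. Next I would quantify per-user sensitivity at the counter-stream level: a single user $u_k$ with trajectory $(s_h^k,a_h^k,c_h^k,s_{h+1}^k)_{h=1}^H$ touches exactly one visit counter, one cost counter, and one transition counter per step $h$, so at most $3H$ of the $2SAH+S^2AH$ BT counter streams see any change, and in each such stream the change is a unit (costs lie in $[0,1]$, giving sensitivity $1$ for the cost counter as well). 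Composing the per-counter $\epsilon'\log K$ guarantees yields a total privacy loss of $3H\epsilon'\log K$, and choosing $\epsilon'=\epsilon/(3H\log K)$ delivers $\epsilon$-DP of the whole sequence of released private counts; the RL agent only post-processes these.

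For the accuracy half, I would fix any one counter and any episode $k$, and note that the released count equals the true count plus the sum of at most $\lceil\log_2 K\rceil$ independent $\mathrm{Lap}(1/\epsilon')$ noises, namely those attached to the $O(\log K)$ tree nodes whose ranges uniquely cover $[1,k-1]$. A standard sub-exponential (Bernstein-style) tail bound for sums of i.i.d.\ Laplace variables gives, for any $\delta'\in(0,1]$,
\[
|\widetilde N_h^k(s,a)-N_h^k(s,a)|\le \tfrac{1}{\epsilon'}\sqrt{8\log K\log(2/\delta')}
\]
with probability at least $1-\delta'$, and the same bound applies to $|\widetilde C_h^k(s,a)-C_h^k(s,a)|$ and $|\widetilde N_h^k(s,a,s')-N_h^k(s,a,s')|$. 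I would then allot $\delta/3$ to each of the three counter families and take a union bound over $(s,a,h,k)$: $SAT$ events for the $\widetilde N$ and $\widetilde C$ counters and $S^2AT$ events for the $\widetilde N(\cdot,\cdot,\cdot)$ counter. This produces the arguments $\log(6SAT/\delta)$ and $\log(6S^2AT/\delta)$ inside $E_{\epsilon,\delta,1}$ and $E_{\epsilon,\delta,2}$, respectively, and plugging in $1/\epsilon'=3H\log K/\epsilon$ yields the claimed closed forms.

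The main obstacle I expect is the privacy bookkeeping in the first half: one must verify cleanly that the substitution of one user's trajectory affects only $3H$ of the BT streams (one per type per step) in a unit amount, so that basic composition plus the per-stream $\epsilon'\log K$ budget exactly reproduces the stated $1/\epsilon'=3H\log K/\epsilon$ without an extra factor from paired bit flips, and that the BT guarantees transfer across the adaptively-released sequence of episodes (which they do because the counter emits only its own noisy p-sums, everything downstream being post-processing). The accuracy half is routine, but it is important to invoke a sub-exponential tail bound rather than Hoeffding so as to preserve the $\sqrt{L\log(1/\delta')}$ scaling with the $\sqrt{8}$ constant visible in $E_{\epsilon,\delta,1},E_{\epsilon,\delta,2}$.
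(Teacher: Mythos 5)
Your proposal is correct and follows essentially the same route as the paper's proof: the privacy half rests on the per-stream binary-mechanism guarantee of \citet{chan2010private} combined with the fact that one user's trajectory perturbs at most $3H$ unit-sensitivity counter streams (the paper organizes this as at most $H$ streams per counter family, each family made $\epsilon/3$-DP via \citet[Lemma 34]{hsu2016private}, and then composes the three families, whereas you compose all $3H$ streams in one shot — same arithmetic, same conclusion), and the accuracy half is exactly the paper's argument, namely the sub-exponential concentration of a sum of at most $\log K$ i.i.d.\ $\mathrm{Lap}(3H\log K/\epsilon)$ variables (\citet[Theorem 3.6]{chan2010private}) followed by union bounds over $SAT$ and $S^2AT$ events with $\delta/3$ allotted to each family, which reproduces the $\log(6SAT/\delta)$ and $\log(6S^2AT/\delta)$ factors. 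The subtlety you flag about paired bit flips under user substitution is handled in the paper by the same $\|\sigma_h(s,a)-\sigma'_h(s,a)\|_1$ bookkeeping you propose, so no gap separates your argument from the published one.
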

Lemma~\ref{lem:central} follows from the privacy guarantee of the Laplace mechanism, and the concentration bound on the sum of i.i.d. Laplace random variables \citep{dwork2014algorithmic}. Using Lemma~\ref{lem:central}, as corollaries of Theorem~\ref{thm:PO} and Theorem~\ref{thm:VI}, we obtain the regret and privacy guarantees for {\po} and {\vi} with the {\central}.

\begin{corollary}[Regret under JDP] 
\label{cor:JDP}
For any $\epsilon > 0$ and $\delta \in (0,1]$, both {\po} and {\vi}, if instantiated using {\central} with parameter $1/\epsilon'=\frac{3H\log K}{\epsilon}$, satisfy $\epsilon$-JDP. Furthermore, with probability at least $1-\delta$, we obtain the regret bounds:
\begin{align*}
    \mathcal{R}^{\text{{\po}}}(T) &= \widetilde O\left(\sqrt{S^2AH^3T} +  S^2AH^3/\epsilon\right) ,\\
    \mathcal{R}^{\text{{\vi}}}(T) &= \widetilde O\left(\sqrt{SAH^3T} +  S^2AH^3/\epsilon  \right) .
\end{align*}
\end{corollary}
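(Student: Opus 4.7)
The proof decomposes into two independent pieces: verifying the $\epsilon$-JDP guarantee, and substituting the {\central} precision levels into the generic regret bounds of Theorem~\ref{thm:PO} and Theorem~\ref{thm:VI}. The plan is to handle privacy first, then plug into regret.

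For the JDP guarantee, I would start with a single binary-tree counter and work outward by composition. Each of the $O(\log K)$ partial sums stored in the tree is released with independent $\mathrm{Lap}(1/\epsilon')$ noise, and changing a single bit in the input stream alters at most $\log K$ of those partial sums; basic composition of the Laplace mechanism then yields that the entire output stream of one counter is $(\epsilon' \log K)$-DP in its input. Next, I would observe that a single user $u_k$'s trajectory contributes a nonzero bit to at most $H$ of the $SAH$ counters tracking $N_h^k(s,a)$, at most $H$ of those tracking $C_h^k(s,a)$, and at most $H$ of the $S^2AH$ counters tracking $N_h^k(s,a,s')$, for a total of at most $3H$ affected counters. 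Basic composition over these $3H$ counters gives that the full sequence of private counts released by {\central} over $K$ episodes is $(3H\epsilon' \log K)$-DP in $u_k$'s data, which equals $\epsilon$ for the choice $1/\epsilon' = 3H\log K/\epsilon$. Finally, the actions $a_h^{k'}$ recommended to any user $u_{k'}$ with $k' \neq k$ depend only on (i) the released private counts (a ``billboard'' that is $\epsilon$-DP in $u_k$) and (ii) the trajectories of users other than $u_k$ (which are independent of $u_k$); so the billboard lemma of \citet{hsu2016private} (as used in \citet{shariff2018differentially}) promotes DP on the counts to $\epsilon$-JDP on $\cM_{-k}(U_K)$, matching Definition~\ref{def:JDP}.

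For the regret bound, the plan is a direct plug-in. By Lemma~\ref{lem:central}, $E_{\epsilon,\delta,1} = \widetilde O(H/\epsilon)$ and $E_{\epsilon,\delta,2} = \widetilde O(H/\epsilon)$. Substituting into Theorem~\ref{thm:PO}, the privacy-dependent contribution becomes $\widetilde O(S^2 A H^2 \cdot H/\epsilon + SAH^2 \cdot H/\epsilon) = \widetilde O(S^2 A H^3/\epsilon)$, and the non-private lower-order term $\sqrt{S^3 A^2 H^4} = S^{3/2} A H^2$ is absorbed since $S^{3/2}AH^2 \leq S^2AH^3/\epsilon$ for $\epsilon \leq 1$. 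This yields $\mathcal{R}^{\text{{\po}}}(T) = \widetilde O(\sqrt{S^2AH^3T} + S^2AH^3/\epsilon)$. Substituting the same values into Theorem~\ref{thm:VI} gives a privacy-dependent contribution of $\widetilde O\bigl(S^2AH^2(E_{\epsilon,\delta,1}+E_{\epsilon,\delta,2})\bigr) = \widetilde O(S^2AH^3/\epsilon)$, which also absorbs the non-private lower-order $S^2AH^3$ term, producing $\mathcal{R}^{\text{{\vi}}}(T) = \widetilde O(\sqrt{SAH^3T} + S^2AH^3/\epsilon)$.

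The main obstacle I anticipate is the privacy accounting, specifically the careful bookkeeping of the two layers of composition: $\log K$ noisy partial sums per counter and $3H$ counters affected per user. A subtle point is that the privacy argument for JDP must not be confused with ordinary DP on the full output $\cM(U_K)$: one must explicitly invoke the billboard structure, noting that the private counts alone are $\epsilon$-DP in $u_k$ and that $u_k$'s own recommended actions $(a_h^k)_{h=1}^H$ are allowed to depend sensitively on her own type. The regret substitution itself is routine, but one should double-check that the precision-level dependence in Theorems~\ref{thm:PO} and~\ref{thm:VI} indeed appears only as an additive lower-order term, so that the leading $\sqrt{T}$ scaling is inherited from the non-private analysis.
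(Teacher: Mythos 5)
Your proposal is correct and follows essentially the same route as the paper: the privacy accounting reproduces Lemma~\ref{lem:central} (binary-tree counters, $\log K$ noisy \psums\ per bit, composition over the at most $3H$ counters a single user's trajectory touches — the paper organizes this as $\epsilon/3$ per counter family via \citet[Lemma 34]{hsu2016private} and then composes the three families, which is the same computation), the lift from $\epsilon$-DP counts to $\epsilon$-JDP actions uses the same billboard lemma of \citet{hsu2016private} together with post-processing, and the regret bounds are the same direct substitution of $E_{\epsilon,\delta,1}, E_{\epsilon,\delta,2} = \widetilde O(H/\epsilon)$ into Theorems~\ref{thm:PO} and~\ref{thm:VI}. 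Your explicit note that the lower-order non-private terms are absorbed (e.g., $\sqrt{S^3A^2H^4} \le S^2AH^3/\epsilon$ for $\epsilon \le 1$) is a detail the paper leaves implicit in the $\widetilde O(\cdot)$ notation, and is a fair accounting of the regime in which the stated bounds are read.
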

We prove the JDP guarantees using the \emph{billboard model} \citep[Lemma 9]{hsu2016private} which, informally, states that an algorithm is JDP if the output sent to each user is a function of the user’s private data and a common quantity computed using a standard DP mechanism. Note that by Lemma~\ref{lem:central} and the post-processing property of DP \citep{dwork2014algorithmic}, the sequence of policies $(\pi^k)_{k}$ are $\epsilon$-DP. Therefore, by the billboard model, the actions $(a_h^k)_{h,k}$ suggested to all the users are $\epsilon$-JDP.

\begin{remark}
Corollary~\ref{cor:JDP}, to the best of our understanding, provides the first regret bound for private PO, and a correct regret bound for private VI as compared to \citet{vietri2020private}, under the requirement of JDP. 

\end{remark}

\subsection{Achieving LDP using {\local}}

The {\local}, at each episode $k$, release the private counts by injecting
Laplace noise into the aggregated statistics computed from the trajectory generated in that episode. Let us discuss how private counts for the number of visited states are computed.
At each episode $j$, given privacy parameter $\epsilon'>0$, {\local} perturbs $\sigma_h^j(s,a)$ with an independent Laplace noise $\text{Lap}(\frac{1}{\epsilon'})$, i.e. it makes $SAH$ noisy perturbations in total. The private counts for the $k$-th episode are computed as $\widetilde N_h^k(s,a)= \sum_{j=1}^{k-1}\widetilde \sigma_h^j(s,a)$, where $\widetilde \sigma_h^j(s,a)$ denotes the noisy perturbations. The private counts corresponding to empirical rewards $C_h^k(s,a)$ and state transitions $N_h^k(s,a,s')$ are computed similarly. The next lemma sums up the properties of the {\local}.



\begin{lemma}[Properties of {\local}]
\label{lem:local}
For any $\epsilon > 0$, {\local} with parameter $1/\epsilon'=\frac{3H}{\epsilon}$ is $\epsilon$-LDP. Furthermore, for any $\delta \in (0,1]$, it satisfies Assumption~\ref{ass:rand} with
$E_{\epsilon,\delta,1} = \frac{3H}{\epsilon}\sqrt{8K\log(6SAT/\delta)}$ and $E_{\epsilon,\delta,2} = \frac{3H}{\epsilon}\sqrt{8 K\log(6S^2AT/\delta)}$.
   \end{lemma}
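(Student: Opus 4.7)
The proof splits into two parts: establishing $\epsilon$-LDP and verifying the concentration bounds of Assumption~\ref{ass:rand}.

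\textbf{Privacy.} For each user $j$ with trajectory $X^j = (s_h^j, a_h^j, c_h^j, s_{h+1}^j)_{h=1}^{H}$, the {\local} releases a length-$(2SAH + S^2AH)$ vector comprising the visit indicators $\sigma_h^j(s,a) = \indic{s_h^j = s, a_h^j = a}$, the cost-weighted indicators $c_h^j \sigma_h^j(s,a)$, and the transition indicators $\sigma_h^j(s,a,s')$, each perturbed with an independent $\mathrm{Lap}(1/\epsilon')$ noise. The key step is the $L_1$-sensitivity calculation: under any trajectory swap $X^j \to X'^j$, exactly one $(s,a)$ pair is non-zero per step in the visit-indicator sub-vector (and similarly for the cost-weighted and transition sub-vectors), so at most two entries per sub-vector flip per step, with the cost-weighted sub-vector using $c_h^j \in [0,1]$. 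Summed over the $H$ steps and the three sub-vectors, the total $L_1$ sensitivity is $O(H)$. The Laplace mechanism is then $(\text{sensitivity} \cdot \epsilon')$-DP, and with the prescribed choice $1/\epsilon' = 3H/\epsilon$ the release is $\epsilon$-DP in each user's trajectory, which is exactly the $\epsilon$-LDP guarantee of Definition~\ref{def:LDP}.

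\textbf{Concentration.} For each fixed $(s,a,h,k)$,
\[
\widetilde{N}_h^k(s,a) - N_h^k(s,a) = \sum_{j=1}^{k-1} Z_{h,j}^{(s,a)}, \qquad Z_{h,j}^{(s,a)} \stackrel{\mathrm{i.i.d.}}{\sim} \mathrm{Lap}(1/\epsilon'),
\]
and analogous identities hold for $\widetilde{C}_h^k(s,a) - C_h^k(s,a)$ and $\widetilde{N}_h^k(s,a,s') - N_h^k(s,a,s')$. Sums of independent Laplace variables are sub-exponential, and a standard Bernstein-type tail bound yields
\[
\Pr\!\left[\Big|\textstyle\sum_{j=1}^{n} Z_j\Big| \ge t\right] \le 2\exp\!\left(-\frac{t^2}{8n/(\epsilon')^2}\right)
\]
in the Gaussian regime covering our parameter choice. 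Substituting $n \le K$ and choosing $t = (1/\epsilon')\sqrt{8K\log(6SAT/\delta)}$ makes this bound $\le \delta/(3SAT)$ for each fixed tuple. A union bound over the $SAHK = SAT$ tuples $(s,a,h,k)$ then controls the visit-count (and, via the same argument, the cost-count since $c_h^j \in [0,1]$) deviation by $E_{\epsilon,\delta,1}$ with failure probability at most $\delta/3$ each. The transition counts follow identically except the union runs over the larger index set $[S]^2 \times [A] \times [H] \times [K]$, producing the $\log(6S^2AT/\delta)$ factor in $E_{\epsilon,\delta,2}$. A final union bound over the three count types certifies Assumption~\ref{ass:rand} with overall failure probability at most $\delta$.

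\textbf{Main obstacle.} Conceptually the argument parallels the {\central} analysis in Lemma~\ref{lem:central} but is significantly simpler, since only a single per-episode Laplace noise is involved rather than a binary-tree hierarchy, so no $\log K$ factor appears in the sensitivity calculation and no partial-sum concentration is needed. The only point requiring care is the bookkeeping: tracking the simultaneous effect of a trajectory swap on all $2SAH + S^2AH$ perturbations so as to justify the scale $1/\epsilon' = 3H/\epsilon$, and aligning union-bound cardinalities so that the logarithmic factors $\log(6SAT/\delta)$ and $\log(6S^2AT/\delta)$ emerge exactly as stated.
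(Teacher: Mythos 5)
Your proposal is correct and takes essentially the same route as the paper's own proof: for privacy, the same $L_1$-sensitivity bookkeeping over the $H$ steps and three count types for the per-episode Laplace-perturbed indicators (the paper phrases this as per-bit $\frac{\epsilon}{3H}$-DP Laplace mechanisms composed via \citet[Lemma 34]{hsu2016private} and then composition across the three counter groups), and for utility, the same sub-exponential tail bound for a sum of at most $K$ i.i.d.\ $\mathrm{Lap}(3H/\epsilon)$ variables (the paper invokes \citet[Corollary 12.4]{dwork2014algorithmic}) with exactly the union-bound cardinalities $SAT$ and $S^2AT$ you use, yielding $E_{\epsilon,\delta,1}$ and $E_{\epsilon,\delta,2}$. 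Your direct whole-vector sensitivity calculation and the paper's groupwise composition are equivalent formulations of the same Laplace-mechanism argument, so there is no substantive difference to report.
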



\begin{corollary}[Regret under LDP] 
\label{cor:LDP}
For any $\epsilon > 0$ and $\delta \in (0,1]$, instantiating {\po} and {\vi} using {\local} with parameter $1/\epsilon'=\frac{3H}{\epsilon}$, we obtain, with probability  $\ge 1-\delta$, the regret bounds:
\begin{align*}
    \mathcal{R}^{\text{{\po}}}(T) &= \widetilde O\left(\sqrt{S^2AH^3T} +  S^2A\sqrt{H^5T}/\epsilon\right) ,\\
    \mathcal{R}^{\text{{\vi}}}(T) &= O\left(\sqrt{SAH^3T} +  S^2A\sqrt{H^5 T}/\epsilon \right).
\end{align*}
\end{corollary}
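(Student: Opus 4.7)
The plan is to derive Corollary~\ref{cor:LDP} as a direct composition of Lemma~\ref{lem:local} (which characterizes {\local}) with the generic regret bounds of Theorem~\ref{thm:PO} and Theorem~\ref{thm:VI}. First, I would invoke Lemma~\ref{lem:local} with the choice $1/\epsilon' = 3H/\epsilon$ to conclude two things: (a) {\local} is $\epsilon$-LDP, and (b) it satisfies Assumption~\ref{ass:rand} with precision levels
\[
E_{\epsilon,\delta,1}, E_{\epsilon,\delta,2} \;=\; \widetilde O\!\left(H\sqrt{K}/\epsilon\right).
\]
With this in hand, the two target regret bounds are obtained by plug-in.

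For the LDP guarantee itself, I would argue that since {\local} perturbs each user's own aggregated trajectory statistics with independent Laplace noise before anything is transmitted to the agent, the channel from a user's trajectory $X$ to what leaves the user is $\epsilon$-LDP in the sense of Definition~\ref{def:LDP}. Because both {\po} and {\vi} interact with the data only through the privatized counts $\widetilde N_h^k, \widetilde C_h^k, \widetilde N_h^k(\cdot,\cdot,\cdot)$, all subsequent bonus computation, value iteration / mirror-descent updates, and action recommendations are post-processings of LDP outputs, and LDP is preserved by the standard post-processing property.

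For the regret, I would substitute the $E_{\epsilon,\delta,1}, E_{\epsilon,\delta,2}$ above into the privacy-dependent terms of Theorems~\ref{thm:PO} and~\ref{thm:VI} and simplify using $T=KH$. In Theorem~\ref{thm:PO}, the dominant privacy term is $\widetilde O(S^2 A H^2 E_{\epsilon,\delta,2})$, which becomes
\[
\widetilde O\!\left(S^2 A H^2 \cdot \tfrac{H\sqrt{K}}{\epsilon}\right) \;=\; \widetilde O\!\left(\tfrac{S^2 A H^3 \sqrt{K}}{\epsilon}\right) \;=\; \widetilde O\!\left(\tfrac{S^2 A \sqrt{H^5 T}}{\epsilon}\right),
\]
using the identity $H^3\sqrt{K} = H^{5/2}\sqrt{HK} = \sqrt{H^5 T}$. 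The non-private part $\widetilde O(\sqrt{S^2AH^3T})$ is unchanged, and the $\sqrt{S^3A^2H^4}$ and $E_{\epsilon,\delta,1}SAH^2$ contributions are of strictly lower order and absorbed in $\widetilde O(\cdot)$. The same substitution in Theorem~\ref{thm:VI}, whose privacy contribution is $\widetilde O(S^2 A H^2 (E_{\epsilon,\delta,1} + E_{\epsilon,\delta,2}))$, yields the same $S^2 A\sqrt{H^5T}/\epsilon$ term, while the non-private leading term $\widetilde O(\sqrt{SAH^3T})$ is retained. The high-probability event of Theorems~\ref{thm:PO}/\ref{thm:VI} (which itself holds on the good event of Assumption~\ref{ass:rand}) is combined with the high-probability event of Lemma~\ref{lem:local} by rescaling $\delta$, which only perturbs the logarithmic factors.

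There is no serious technical obstacle here, as the result is a corollary; the only conceptual point worth flagging is the scaling $E_{\epsilon,\delta,i} \sim \sqrt{K}/\epsilon$ produced by {\local}. Because every user injects independent noise and the agent has no opportunity to cancel it across episodes via a binary-tree--style aggregation (as in the JDP case), the per-count error inevitably scales as $\sqrt{K}$ rather than $\polylog(K)$. This is exactly what causes the cost of privacy to enter the regret multiplicatively as $\sqrt{T}/\epsilon$ under LDP, in contrast to the additive $\polylog(T)/\epsilon$ obtained under JDP in Corollary~\ref{cor:JDP}.
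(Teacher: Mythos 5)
Your proposal is correct and follows essentially the same route as the paper, which proves Corollary~\ref{cor:LDP} exactly as a direct consequence of Lemma~\ref{lem:local} (yielding $\epsilon$-LDP and Assumption~\ref{ass:rand} with $E_{\epsilon,\delta,1},E_{\epsilon,\delta,2}=\widetilde O(H\sqrt{K}/\epsilon)$) plugged into Theorems~\ref{thm:PO} and~\ref{thm:VI}. Your substitution arithmetic, including the identity $H^3\sqrt{K}=\sqrt{H^5T}$, the absorption of the lower-order terms, the post-processing argument for the LDP guarantee, and the union bound over the high-probability events, all match the paper's intended (and in its appendix, largely implicit) argument.
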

\begin{remark}
Corollary~\ref{cor:LDP}, to the best of our knowledge, provides the first regret guarantee for private PO, and an improved regret bound for private VI as compared to \citet{garcelon2020local},
under the requirement of LDP.
\end{remark}
\begin{remark}[JDP vs. LDP]
The noise level in the private counts is $O(\log k)$ under JDP and $O(k)$ under LDP. Due to this, the privacy cost for LDP is $\tilde O(\sqrt{T}/\epsilon)$, whereas for JDP it is only $\tilde O(1/\epsilon)$.
\end{remark}
\begin{remark}[Alternative LDP mechanisms]
Other than the Laplace noise, one can also use Bernoulli and Gaussian noise in the {\local} to achieve LDP \citep{kairouz2016discrete,wang2019locally}. Thanks to Theorem~\ref{thm:PO} and Theorem~\ref{thm:VI}, the regret bounds are readily obtained by plugging in the corresponding $E_{\epsilon,\delta,1}$ and $E_{\epsilon,\delta,2}$.
\end{remark}

\section{Experiments}
In this section, we conduct simple numerical experiments to verify our theoretical results for both policy-based and value-based algorithms\footnote{The code is available at \url{https://github.com/XingyuZhou989/PrivateTabularRL}}. 

\subsection{Settings}
Our experiment is based on the standard tabular MDP environment \texttt{RiverSwim}~\citep{strehl2008analysis,osband2013more}, illustrated in Fig.~\ref{fig:river}. It consists of six states and two actions `left' and `right', i.e., $S = 6$ and $A=2$. It starts with the left side and tries to reach the right side. At each step, if it chooses action `left', it will always succeed (cf. the dotted arrow). Otherwise it often fails (cf. the solid arrow). It only receives a small amount of reward if it reaches the leftmost side while obtaining a larger reward once it arrives at the rightmost state. Thus, this MDP naturally requires a sufficient exploration to obtain the optimal policy. Each episode is reset every $H=20$ steps.

\begin{figure}[h]
\centering
\includegraphics[width=6in]{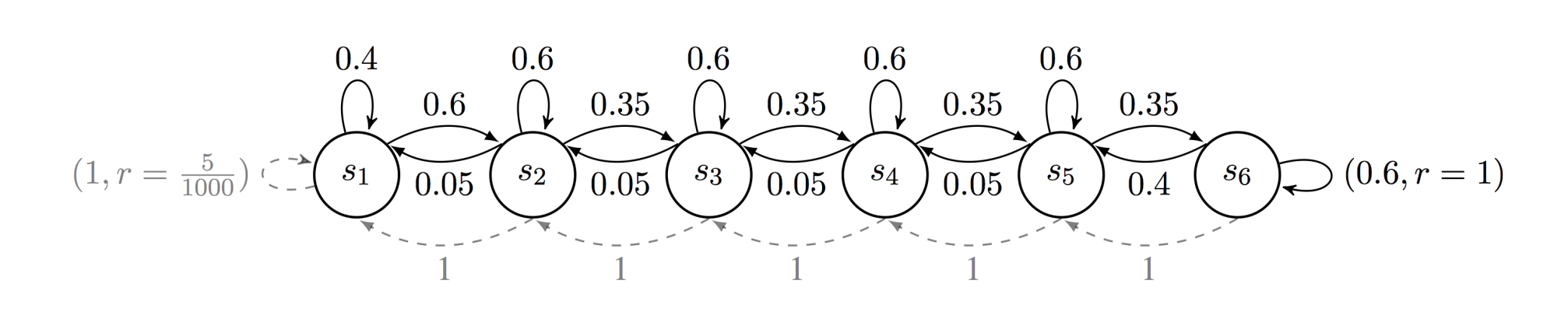}
\caption{\texttt{RiverSwim} MDP -- solid and dotted arrows denote the transitions under actions `right' and `left', respectively~\citep{osband2013more}. }
\label{fig:river}
\end{figure}

\subsection{Results}
We evaluate both {\po} and {\vi} under different privacy budget $\epsilon$ and also compare them with the corresponding non-private algorithms {\ucbvi}~\citep{azar2017minimax} and {\oppo} \citep{efroni2020optimistic}, respectively. 
We set all the parameters in our proposed algorithms as the same order as the theoretical results and tune the learning rate $\eta$ and the scaling of the confidence interval. We run $20$ independent experiments, each consisting of $K = 2 \cdot 10^4$ episodes. We plot the the average cumulative regret along with standard deviation for each setting, as shown in Fig.~\ref{fig:sim}

\begin{figure}[ht]
\centering
\begin{subfigure}{.45\textwidth}
  \centering
  \includegraphics[width=1\linewidth]{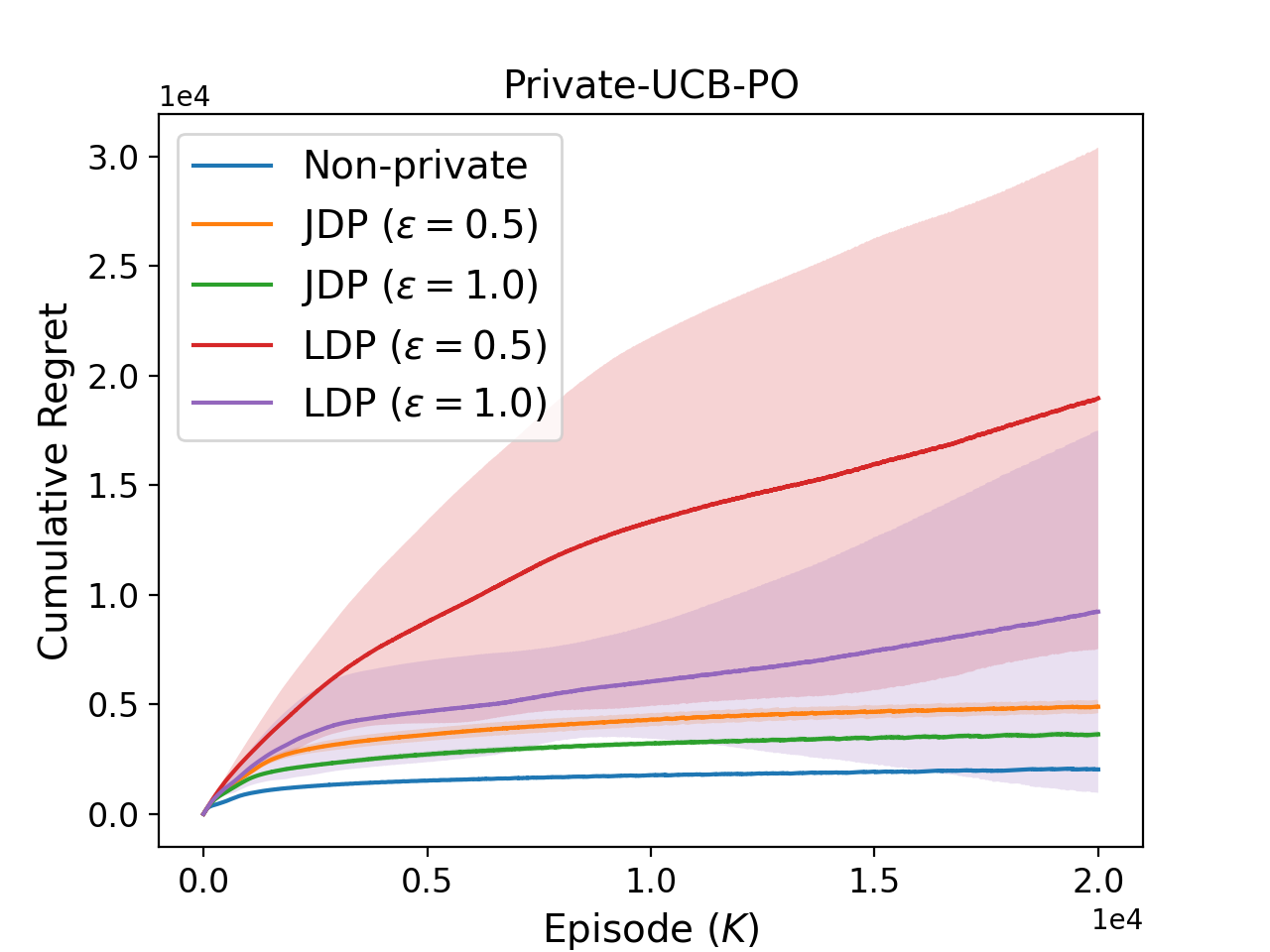}  
  \label{fig:sub-first}
\end{subfigure}
\begin{subfigure}{.45\textwidth}
  \centering
  \includegraphics[width=1\linewidth]{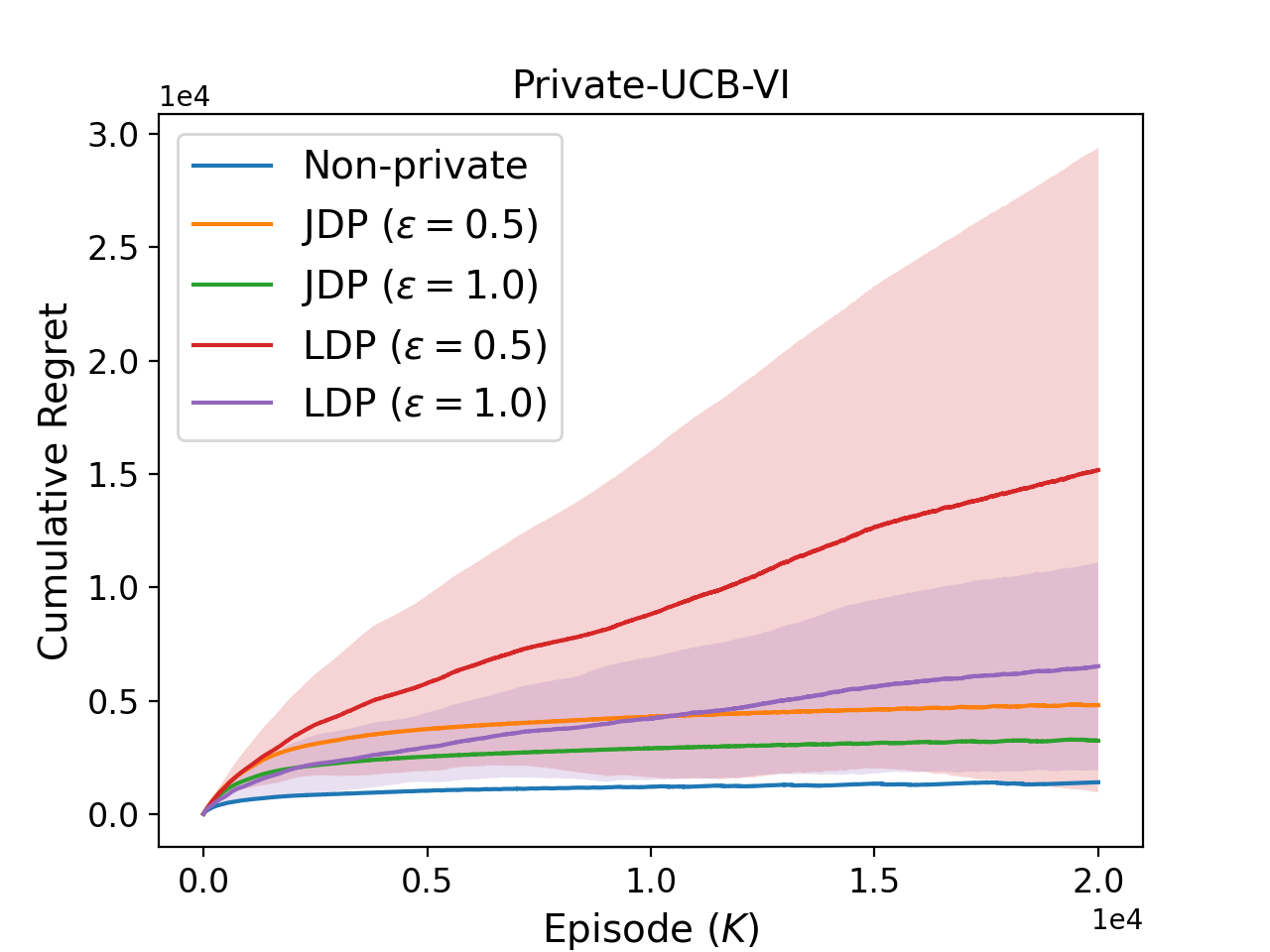}  
  \label{fig:sub-second}
\end{subfigure}
\caption{Cumulative regret vs. Episode under {\po} and  {\vi}}
\label{fig:sim}
\end{figure}

As suggested by theory, in both PO and VI cases,  we can see that the cost of privacy under the JDP requirement becomes negligible as number of episodes increases (since the cost is only a lower order additive term). But, under the stricter LDP requirement, the cost of privacy remains high (since the cost is multiplicative and is of the same order). Furthermore, it is worth noting that the cost of privacy increases with increasing protection level, i.e. with decreasing $\epsilon$.
\section{Conclusions}
In this work, we presented the first  private policy-optimization algorithm in tabular MDPs with regret guarantees under both JDP and LDP requirements. We also revisited private value-iteration algorithms by improving the regret bounds of existing results. These are achieved by developing a general framework for algorithm design and regret analysis in private tabular RL settings. Though we focus on statistical guarantees of private RL algorithms, it will be helpful to understand these from a practitioner's perspective. We leave this as a possible future direction. Another important direction is to apply our general framework to MDPs with function approximation, e.g., linear MDPs \citep{jin2019provably}, kernelized MDPs \citep{chowdhury2019online} and generic MDPs \citep{ayoub2020model}.

\bibliographystyle{unsrtnat}
\bibliography{main,2018library,bandit_RL}

\begin{appendix}
\onecolumn
\begin{center}
    {\huge Appendix}
\end{center}

\section{Proofs for Section~\ref{sec:po} }

We first define the \emph{non-private} mean empirical costs and empirical transition probabilities as follows. 
\begin{align*}
 \bar{c}_h^k(s,a)  := \frac{{C}_h^k(s,a)}{{{N}_h^k(s,a)}\vee 1 }, \quad \bar{P}_h^k(s'|s,a):= \frac{{N}_h^k(s,a,s')}{ {{N}_h^k(s,a)}\vee 1}~.
\end{align*}
Based on them, we define the following events
\begin{align*}
    &F^c = \left\{ \exists s,a,h,k: |c_h(s,a) - \bar{c}_h^k(s,a) | \ge \sqrt{\frac{ 2\ln\frac{4SAT}{\delta'}}{ {N_h^k(s,a) \vee 1} } } \right\}\\
    &F^p = \left\{ \exists s,a,h,k: \norm{P_h(\cdot|s,a) - \bar{P}_h^k(\cdot|s,a)}_1  \ge \sqrt{\frac{ 4S\ln\frac{6SAT}{\delta'}}{ {N_h^k(s,a) \vee 1} } } \right\}
\end{align*}
and $\bar{G} := F^c \cup F^p$, which is the the complement of the good event $G$. The next lemma states that the good event happens with high probability. 


\begin{lemma}
For any $\delta \in (0,1]$, $\prob{G} \ge 1-\delta$. 
\end{lemma}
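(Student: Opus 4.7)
The plan is to prove $\Pr(\bar G)=\Pr(F^c\cup F^p)\le\delta$ by controlling the cost-concentration event $F^c$ and the transition-concentration event $F^p$ separately, then finishing with a union bound. For each event, the strategy is a ``Hoeffding/Weissman + triple union bound'' over $(s,a,h)$, over the episode index $k$, and over the possible values of the visitation counter $N_h^k(s,a)\in\{1,\dots,K\}$. Since $\bar c_h^k$ and $\bar P_h^k$ only change when the counter is incremented, union bounding over the counter value is tight.

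First I would handle $F^c$. Fix $(s,a,h)$ and denote by $X_1,X_2,\ldots$ the sequence of costs observed on successive visits to $(s,a,h)$. By the Markov property, conditioned on the visit times these are i.i.d.\ with mean $c_h(s,a)$ and support in $[0,1]$; whenever $N_h^k(s,a)=n\ge 1$, the empirical mean $\bar c_h^k(s,a)$ equals $\frac{1}{n}\sum_{i=1}^n X_i$. Hoeffding's inequality gives, for every fixed $n$, a failure probability at most $2\exp\!\left(-4\ln(4SAT/\delta')\right)$ at the threshold $\sqrt{2\ln(4SAT/\delta')/n}$. Taking a union bound over the $SAH$ triples and the at most $K$ counter values (so $SAHK=SAT$ terms) keeps the total well below $\delta/2$ after choosing $\delta'$ appropriately in terms of $\delta$. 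To avoid subtleties with random stopping times one can equivalently invoke an Azuma--Hoeffding argument on the martingale difference sequence $c_h^{k'}-c_h(s_h^{k'},a_h^{k'})$ restricted to the indices where $(s_h^{k'},a_h^{k'})=(s,a)$.

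Next I would handle $F^p$ in the same way but with Weissman's $L_1$ concentration inequality for empirical distributions on a set of size $S$: $\Pr(\|\hat P_n-P\|_1\ge t)\le(2^S-2)\exp(-nt^2/2)$. Applied to the next-state samples collected at each visit to $(s,a,h)$ (i.i.d.\ from $P_h(\cdot|s,a)$ by the Markov property), setting $t=\sqrt{4S\ln(6SAT/\delta')/n}$ gives a per-$n$ failure probability of order $2^S(\delta'/(6SAT))^{2S}$. Union bounding over $(s,a,h,n)$ then keeps $\Pr(F^p)\le\delta/2$.

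A final union bound yields $\Pr(\bar G)\le\Pr(F^c)+\Pr(F^p)\le\delta$, so $\Pr(G)\ge 1-\delta$ as claimed. The only real obstacle is the one noted above, namely the non-i.i.d.\ appearance of the data due to adaptively chosen policies; this is resolved cleanly either by indexing samples by the visit number (and invoking the Markov property to recover i.i.d.-ness) or by a martingale-concentration version of the same bounds. Everything else is a bookkeeping exercise in choosing $\delta'$ so the constants align with the thresholds stated in the definitions of $F^c$ and $F^p$.
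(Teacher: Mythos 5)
Your proposal is correct and follows essentially the same route as the paper's proof: Hoeffding's inequality for $F^c$ and Weissman's $L_1$ inequality for $F^p$, each with a union bound over $(s,a,h)$ and the possible counter values, followed by a final union bound giving $\prob{\bar G}\le\delta/2+\delta/2$. If anything, you are more careful than the paper, since you explicitly justify applying these i.i.d.\ concentration bounds to adaptively collected data (via the fixed-visit-index/Markov-property device or a martingale variant), a point the paper's proof passes over silently; the only detail you leave implicit is the trivial $N_h^k(s,a)=0$ case, which the paper notes holds because the thresholds then exceed the maximal possible deviation.
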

\begin{proof}
We first have $\prob{F^c} \le \delta/2$. This follows from Hoeffding’s inequality and union bound over all $s,a$ and all possible values of $N_h^k(s, a)$ and $k$. Note that when $N_h^k(s,a) = 0$, this bound holds trivially. We also have $\prob{F^p} \le \delta/2$. This holds by~\cite[Theorem 2.1]{weissman2003inequalities} along with the application of union bound over all $s,a$ and all possible values of $N_h^k(s, a)$ and $k$. Note that when $N_h^k(s,a) = 0$, this bound holds trivially.
\end{proof}

Now, we are ready to present the proof for Lemma~\ref{lem:conc_po}.
\begin{proof}[Proof of Lemma~\ref{lem:conc_po}]
Assume that both the good event $G$ and the event in Assumption~\ref{ass:rand} hold. We first study the concentration of the private cost estimate. Note that under the event in  Assumption~\ref{ass:rand}, 
 \begin{align*}
    \left|\frac{\widetilde{C}_h^k(s,a)}{(\widetilde{N}_h^k(s,a) +  E_{\epsilon,\delta,1}) \vee 1 } - \frac{{C}_h^k(s,a)}{(\widetilde{N}_h^k(s,a) +  E_{\epsilon,\delta,1}) \vee 1} \right| \le \frac{E_{\epsilon,\delta,1}}{(\widetilde{N}_h^k(s,a) +  E_{\epsilon,\delta,1}) \vee 1},
\end{align*}
since $\widetilde{N}_h^k(s,a) +  E_{\epsilon,1} \ge N_h^k(s,a) \ge 0$ and $|\widetilde{C}_h^k(s,a) - {C}_h^k(s,a) | \le E_{\epsilon,\delta,1}$. Moreover, we have 
    \begin{align*}
        &\left|\frac{{C}_h^k(s,a)}{(\widetilde{N}_h^k(s,a) +  E_{\epsilon,\delta,1})\vee 1} - c_h(s,a) \right| \\
        \le &\left| c_h(s,a) \left(\frac{N_h^k(s,a) \vee 1}{(\widetilde{N}_h^k(s,a) +  E_{\epsilon,\delta,1})\vee 1} - 1\right)\right| + \left|\frac{N_h^k(s,a)\vee 1}{(\widetilde{N}_h^k(s,a) + E_{\epsilon,\delta,1}) \vee 1} \left(\frac{C_h^k(s,a)}{N_h^k(s,a)\vee 1} - c_h(s,a)\right) \right|\\
        \lep{a} &c_h(s,a) \left|1-\frac{N_h^k(s,a)\vee 1}{(\widetilde{N}_h^k(s,a) +  E_{\epsilon,\delta,1}) \vee 1}  \right| + \frac{N_h^k(s,a)\vee 1}{(\widetilde{N}_h^k(s,a) +  E_{\epsilon,\delta,1})\vee 1} \frac{L_c(\delta)}{\sqrt{N_h^k(s,a)\vee 1}}\\
        \le & \frac{2E_{\epsilon,\delta,1} }{(\widetilde{N}_h^k(s,a) + E_{\epsilon,\delta,1})\vee 1} + \frac{L_c(\delta)\sqrt{N_h^k(s,a)\vee 1}  }{(\widetilde{N}_h^k(s,a) + E_{\epsilon,\delta,1}) \vee 1},
    \end{align*}
    where (a) holds by the concentration of the cost estimate under good event $G$ with $L_c(\delta) := \sqrt{2\ln\frac{4SAT}{\delta}}$.
    Furthermore, we have 
    \begin{align*}
        \frac{L_c(\delta)\sqrt{N_h^k(s,a)\vee 1}  }{(\widetilde{N}_h^k(s,a) +  E_{\epsilon,\delta,1})\vee 1} \le \frac{L_c(\delta)\sqrt{ (\widetilde{N}_h^k(s,a) +  E_{\epsilon,\delta,1}) \vee 1}  }{(\widetilde{N}_h^k(s,a) +  E_{\epsilon,\delta,1})\vee 1} = \frac{L_c(\delta)}{\sqrt{(\widetilde{N}_h^k(s,a) +   E_{\epsilon,\delta,1})\vee 1}}.
    \end{align*}
    Putting everything together, yields
    \begin{align*}
        |c_h(s,a) - \widetilde{c}_h^k(s,a)| \le \frac{3E_{\epsilon,\delta,1} }{(\widetilde{N}_h^k(s,a) +  E_{\epsilon,\delta,1})\vee 1} + \frac{L_c(\delta)}{\sqrt{(\widetilde{N}_h^k(s,a) +  E_{\epsilon,\delta,1}) \vee 1}}.
    \end{align*}
    Now, we turn to bound the transition dynamics. The error between the true transition probability and the private estimate can be decomposed as 
    \begin{align*}
        &\sum_{s'} |P_h(s'|,s,a) - \widetilde{P}_h^k(s'|s,a)|\\
        =&\sum_{s'} \left| \frac{\widetilde{N}_h^k(s,a,s')}{(\widetilde{N}_h^k(s,a) +  E_{\epsilon,\delta,1})\vee 1} - P_h(s'|s,a)\right|\\
        \le& \underbrace{\sum_{s'}\left|  \frac{{N}_h^k(s,a,s')}{(\widetilde{N}_h^k(s,a) +  E_{\epsilon,\delta,1})\vee 1} - P_h(s'|s,a) \right|}_{\mathcal{P}_1} +  \underbrace{\sum_{s'}\left|  \frac{\widetilde{N}_h^k(s,a,s') - {N}_h^k(s,a,s')}{(\widetilde{N}_h^k(s,a) +  E_{\epsilon,\delta,1})\vee 1} \right|}_{\mathcal{P}_2}.
    \end{align*}
        For $\mathcal{P}_1$, we have 
    \begin{align*}
        &\mathcal{P}_1=\sum_{s'}\left|\frac{N_h^k(s,a,s')}{N_h^k(s,a)\vee 1}\frac{N_h^k(s,a)\vee 1}{(\widetilde{N}_h^k(s,a) +  E_{\epsilon,\delta,1})\vee 1} - P_h(s'|s,a)\right|\\
        =&\sum_{s'}\left|\left( \frac{N_h^k(s,a,s')}{N_h^k(s,a)\vee 1} - P_h(s'|s,a) \right) \frac{N_h^k(s,a)\vee 1}{(\widetilde{N}_h^k(s,a) +  E_{\epsilon,\delta,1})\vee 1} + P_h(s'|s,a)\left(\frac{N_h^k(s,a)\vee 1}{(\widetilde{N}_h^k(s,a) +  E_{\epsilon,\delta,1})\vee 1}-1 \right) \right|\\
        \le& \frac{N_h^k(s,a)\vee 1}{(\widetilde{N}_h^k(s,a) +  E_{\epsilon,\delta,1})\vee 1} \norm{\bar{P}_h^k(\cdot|s,a) - P_h(\cdot|s,a)}_1 + \sum_{s'}\left( P_h(s'|s,a) \frac{2 E_{\epsilon,\delta,1}}{(\widetilde{N}_h^k(s,a) +  E_{\epsilon,\delta,1})\vee 1}\right)\\
        \lep{a}&\frac{N_h^k(s,a)\vee 1}{(\widetilde{N}_h^k(s,a) +  E_{\epsilon,\delta,1})\vee 1} \frac{L_p(\delta)}{\sqrt{N_h^k(s,a)\vee 1}} + \frac{2 E_{\epsilon,\delta,1}}{(\widetilde{N}_h^k(s,a) + E_{\epsilon,\delta,1})\vee 1}\\
        \le & \frac{L_p(\delta)}{\sqrt{(\widetilde{N}_h^k(s,a) +  E_{\epsilon,\delta,1} )\vee 1}} + \frac{2 E_{\epsilon,\delta,1}}{(\widetilde{N}_h^k(s,a) +  E_{\epsilon,\delta,1})\vee 1},
    \end{align*}
   where (a) holds by concentration of transition probability under good event $G$ with $L_p(\delta):=\sqrt{4S\ln\frac{6SAT}{\delta'}}$.
    For $\mathcal{P}_2$, we have
    \begin{align*}
        \mathcal{P}_2 \le \sum_{s'}\frac{|E_{\epsilon,\delta,2}|}{(\widetilde{N}_h^k(s,a) +  E_{\epsilon,\delta,1})\vee 1} = \frac{S E_{\epsilon,\delta,2}}{(\widetilde{N}_h^k(s,a) +  E_{\epsilon,\delta,1})\vee 1}.
    \end{align*}
    Putting together $\mathcal{P}_1$ and $\mathcal{P}_2$, yields
    \begin{align*}
       \norm{P_h(\cdot|s,a) - \widetilde{P}_h^k(\cdot|s,a)}_1 \le  \frac{L_p(\delta)}{\sqrt{(\widetilde{N}_h^k(s,a) +  E_{\epsilon,1} )\vee 1}}+ \frac{SE_{\epsilon,\delta,2}+2E_{\epsilon,\delta,1}}{(\widetilde{N}_h^k(s,a) +  E_{\epsilon,\delta,1})\vee 1}.
    \end{align*}
    Finally, applying union bound over good event $G$ and the event in Assumption~\ref{ass:rand}, yields the required result in Lemma~\ref{lem:conc_po}. 
\end{proof}

We turn to present the proof for Theorem~\ref{thm:PO} as follows.
\begin{proof}[Proof of Theorem~\ref{thm:PO}]
As in the non-private case, we first decompose the regret by using the extended value difference lemma~\cite[Lemma 1]{efroni2020optimistic}.
\begin{align*}
    \cR(T)&=\sum\nolimits_{k=1}^K \left(V_1^{\pi^k}(s_1^k) - V_1^{\pi^*}(s_1^k)\right) = \sum\nolimits_{k=1}^k \left(V_1^{\pi^k}(s_1^k) -\widetilde{V}_1^k(s_1^k) + \widetilde{V}_1^k(s_1^k) - V_1^{\pi^*}(s_1^k)\right)\\
    &= \underbrace{\sum\nolimits_{k=1}^k \left(V_1^{\pi^k}(s_1^k)-\widetilde{V}_1^k(s_1^k)\right)}_{\mathcal{T}_1}  +\underbrace{\sum\nolimits_{k=1}^K\sum\nolimits_{h=1}^H \ex{ \inner{\widetilde{Q}_h^k(s_h,\cdot)}{\pi_h^k(\cdot|s_h) - \pi_h^*(\cdot|s_h)}| s_1^k,\pi^*} }_{\mathcal{T}_2}\\
    &\quad\quad+\underbrace{\sum\nolimits_{k=1}^K\sum\nolimits_{h=1}^H \ex{ \widetilde{Q}_h^k(s_h,a_h) - c_h(s_h,a_h) - P_h(\cdot|s_h,a_h)\widetilde{V}_{h+1}^k|s_1^k, \pi^*} }_{\mathcal{T}_3}.
\end{align*}
We then need to bound each of the three terms. 

\textbf{Analysis of $\mathcal{T}_2$}. To start with, we can bound $\mathcal{T}_2$ by following standard mirror descent analysis under KL divergence. Specifically, by~\cite[Lemma 17]{efroni2020optimistic}, we have for any $h \in [H]$, $s \in \mathcal{S}$ and any policy $\pi$
\begin{align*}
    \sum_{k=1}^K \inner{\widetilde{Q}_h^k(s,\cdot)}{\pi_h^k(\cdot|s) - \pi_h(\cdot|s)} &\le \frac{\log A}{\eta} + \frac{\eta}{2}\sum_{k=1}^K\sum_a \pi_h^k(a|s) (Q_h^k(s,a))^2\\
    &\lep{a}  \frac{\log A}{\eta} + \frac{\eta H^2 K}{2},
\end{align*}
where (a) holds by  $Q_h^k(s,a) \in [0,H]$, which follows from the truncated update of $Q$-value in Algorithm~\ref{alg:PO} (line 9). Thus, we can bound $\mathcal{T}_2$ as follows.
\begin{align*}
    \mathcal{T}_2 &= \sum_{k=1}^K \sum_{h=1}^H \ex{\inner{\widetilde{Q}_h^k(s_h^k,\cdot)}{\pi_h^k(\cdot|s_h^k) - \pi_h^*(\cdot|s_h^k)}|s_1^k, \pi^* } \le  \frac{H\log A}{\eta} + \frac{\eta H^3 K}{2}.
\end{align*}
Choosing $\eta = \sqrt{2\log A/(H^2K)}$, yields
\begin{align}
    \mathcal{T}_2 \le \sqrt{2H^4 K \log A}.
\end{align}

\textbf{Analysis of $\mathcal{T}_3$}. First, by the update rule of $Q$-value in Algorithm~\ref{alg:PO} and  $P_h(\cdot|s,a)V_{h+1}:=\sum_{s'} P_h(s'|s,a) V_{h+1}(s')$, we have 
\begin{align*}
    \widetilde{Q}_h^k(s,a) &= \min\{H, \max\{0,\widetilde{c}_h^k(s,a)  + \sum_{s'\in \mathcal{S}} \widetilde{V}_{h+1}^k(s')\widetilde{P}_h^k(s'|s,a) - \beta_h^{k}(s,a)\} \}\\
    &= \min\{H, \max\{0,\widetilde{c}_h^k(s,a)  +  \widetilde{P}_h(\cdot|s_h^k,a_h^k)\widetilde{V}_{h+1}^k - \beta_h^{k}(s,a)\} \}\\
    &\le \max\left\{0,{\widetilde{c}_h^k(s,a)}-\beta_h^{k,c}(s,a) +  \widetilde{P}_h(\cdot|s_h^k,a_h^k)\widetilde{V}_{h+1}^k - H\beta_h^{k,p}(s,a) \right\}\nonumber\\
    &\lep{a} \max\left\{0, {\widetilde{c}_h^k(s,a)}-\beta_h^{k,c}(s,a) \right\} + \max\left\{0, \widetilde{P}_h(\cdot|s_h^k,a_h^k)\widetilde{V}_{h+1}^k- \beta_h^{k,pv}(s,a)   \right\}
\end{align*}
where (a) holds since for any $a,b$, $\max\{a+b,0\} \le \max\{a,0\} + \max\{b,0\}$. Thus, for any $(k,h,s,a)$, we have 
\begin{align}
    &\widetilde{Q}_h^k(s,a) - c_h(s,a) - P_h(\cdot|s,a)\widetilde{V}_{h+1}^k\nonumber\\
    \le &\max\left\{0, {\widetilde{c}_h^k(s,a)}-\beta_h^{k,c}(s,a) \right\} + \max\left\{0,\widetilde{P}_h^{k}(\cdot|s,a)\widetilde{V}_h^k - H\beta_h^{k,p}(s,a)   \right\} - c_h(s,a) - P_h(\cdot|s,a)\widetilde{V}_{h+1}^k\nonumber\\
    =&\max\left\{ -c_h(s,a), {\widetilde{c}_h^k(s,a)}- c_h(s,a)-\beta_h^{k,c}(s,a)\right\}\nonumber\\
    &+ \max\left\{ - P_h(\cdot|s,a)\widetilde{V}_{h+1}^k, \widetilde{P}_h^{k}(\cdot|s,a)\widetilde{V}_h^k - P_h(\cdot|s,a)\widetilde{V}_{h+1}^k- H\beta_h^{k,p}(s,a) \right\}\nonumber\\
    \le& \max\left\{ 0, {\widetilde{c}_h^k(s,a)}- c_h(s,a)-\beta_h^{k,c}(s,a)\right\} \label{eq:c_t3}\\
    &+ \max\left\{0, \widetilde{P}_h^{k}(\cdot|s,a)\widetilde{V}_h^k - P_h(\cdot|s,a)\widetilde{V}_{h+1}^k- H\beta_h^{k,p}(s,a)  \right\}\label{eq:p_t3}.
\end{align}
We are going to show that both~\eqref{eq:c_t3} and~\eqref{eq:p_t3} are less than zero for all $(k,h,s,a)$ with high probability by Lemma~\ref{lem:conc_po}. First, conditioned on the first result in Lemma~\ref{lem:conc_po}, we have~\eqref{eq:c_t3} is less than zero. Further, we have conditioned on the second result in Lemma~\ref{lem:conc_po}
\begin{align}
    &\widetilde{P}_h^{k}(\cdot|s,a)\widetilde{V}_h^k - P_h(\cdot|s,a)\widetilde{V}_{h+1}^k- H\beta_h^{k,p}(s,a) \nonumber \\
    \lep{a} &\norm{\widetilde{P}_h^{k}(\cdot|s,a) - P_h(\cdot|s,a)}_1 \norm{\widetilde{V}_{h+1}^k}_{\infty}- H\beta_h^{k,p}(s,a)\nonumber\\
    \lep{b} & H \norm{\widetilde{P}_h^{k}(\cdot|s,a) - P_h(\cdot|s,a)}_1- H\beta_h^{k,p}(s,a)\nonumber\\
    \lep{c} & 0\label{eq:conc_pv}
\end{align}
where (a) holds by Holder's inequality; (b) holds since $0\le \widetilde{V}_{h+1}^k \le H$ based on our update rule; (c) holds by Lemma~\ref{lem:conc_po}. Thus, we have shown that 
\begin{align}
    \mathcal{T}_3 \le 0.
\end{align}

\textbf{Analysis of $\mathcal{T}_1$}. Assume the good event $G$ and the event in Assumption~\ref{ass:rand} hold (which implies the concentration results in Lemma~\ref{lem:conc_po}). We have 
\begin{align}
    \mathcal{T}_1 = &\sum_{k=1}^K V_1^{\pi_k}(s_1) - \widetilde{V}_1^k(s_1)\nonumber\\
   \ep{a}& \sum_{k=1}^K\sum_{h=1}^H\ex{ c_h(s_h,a_h) + P_h(\cdot |s_h,a_h) \widetilde{V}_{h+1}^k - \widetilde{Q}_{h+1}^k(s_h,a_h)|s_1^k, \pi_k}\nonumber\\
   \ep{b}&\sum_{k=1}^K\sum_{h=1}^H\ex{ c_h(s_h,a_h) + P_h(\cdot |s_h,a_h) \widetilde{V}_{h+1}^k|s_1^k, \pi_k} \nonumber\\
  -&\sum_{k=1}^K\sum_{h=1}^H\ex{ \max\left \{ \widetilde{c}_h^k(s_h,a_h) - \beta_h^{k,c}(s_h,a_h)+ \widetilde{P}_h^k(\cdot|s_h,a_h)\widetilde{V}_{h+1}^k - H\beta_h^{k,p}(s_h,a_h) ,0\right\} |s_1^k, \pi_k}\label{eq:t1_po}
\end{align}
where (a) holds by the extended value difference lemma~\cite[Lemma 1]{efroni2020optimistic}; (b) holds by the update rue of Q-value in Algorithm~\ref{alg:PO}. Note that here we can directly remove the the truncation at $H$ since by Lemma~\ref{lem:conc_po}, $\widetilde{c}_h^k(s_h,a_h) - \beta_h^{k,c}(s_h,a_h)+ \widetilde{p}(\cdot|s_h,a_h)\widetilde{V}_{h+1}^k - H\beta_h^{k,p}(s_h,a_h) \le c_h(s,a) + P_h(\cdot|s,a) \widetilde{V}_{h+1}^k \le 1+H-h \le H$. 

Now, observe that for any $(k,h,s,a)$, we have 
\begin{align}
     &c_h(s,a) + P_h(\cdot |s,a) \widetilde{V}_{h+1}^k - \max\left \{ \widetilde{c}_h^k(s_h,a_h) - \beta_h^{k,c}(s,a)+ \widetilde{P}_h^k(\cdot|s,a)\widetilde{V}_{h+1}^k - H\beta_h^{k,p}(s,a) ,0\right\}\nonumber\\
     &\le c_h(s,a)-{\widetilde{c}_h^k(s,a)} + \beta_h^{k,c}(s,a) +  P_h(\cdot |s,a) \widetilde{V}_{h+1}^k- \widetilde{P}_h^k(\cdot |s,a) \widetilde{V}_{h+1}^k + H\beta_h^{k,p}(s,a)\nonumber\\
     &\lep{a} 2 \beta_h^{k,c}(s,a) + 2H \beta_h^{k,p}(s,a)\label{eq:t1_po_all},
\end{align}
where (a) holds by Lemma~\ref{lem:conc_po} and a similar analysis as in~\eqref{eq:conc_pv}. 
Plugging~\eqref{eq:t1_po_all} into~\eqref{eq:t1_po}, yields 
\begin{align}
    \mathcal{T}_1  \le \underbrace{\sum_{k=1}^K\sum_{h=1}^H\ex{ 2\beta_h^{k,c}(s_h,a_h)|s_1^k, \pi^k}}_{\text{Term(i)}} + \underbrace{H\sum_{k=1}^K\sum_{h=1}^H\ex{ 2\beta_h^{k,p}(s_h,a_h)|s_1^k, \pi^k}}_{\text{Term(ii)}}\label{eq:t1_final}
\end{align}
By the definition of $\beta_h^{k,c}$ and $\beta_h^{k,p}$ in Lemma~\ref{lem:conc_po} and Assumption~\ref{ass:rand}, we have with probability $1-2\delta$,
\begin{align*}
    \text{Term(i)} &\le  \sum_{k=1}^K\sum_{h=1}^H\ex{ \frac{L_c(\delta)}{\sqrt{\max\lbrace{N}_h^k(s_h,a_h),1\rbrace} }+\frac{3E_{\epsilon,\delta,1} }{\max\lbrace N_h^k(s_h,a_h),1\rbrace  }|s_1^k,\pi^k}\\
    \text{Term(ii)} &\le H\sum_{k=1}^K\sum_{h=1}^H \ex{\frac{L_p(\delta)}{\sqrt{\max\lbrace N_h^k(s_h,a_h),1\rbrace}} + \frac{SE_{\epsilon,\delta,2}+2 E_{\epsilon,\delta,1}}{\max\lbrace{N}_h^k(s_h,a_h),1\rbrace} | s_1^k, \pi^k}.
\end{align*}

We bound the two terms above by using the following lemma. These are generalization of results proved under stationary transition model  \cite{efroni2019tight,zanette2019tighter} to our non-stationary setting. The proof is given at the end of this section. 
\begin{lemma}
\label{lem:nonst}
With probability $1-2\delta$, we have 
\begin{align*}
    \sum_{k=1}^K\sum_{h=1}^H \ex{ \frac{1}{\max\{1,N_h^k(s_h,a_h)\}}|\mathcal{F}_{k-1}} = O\left(SAH\ln KH + H\ln(H/\delta)\right),
\end{align*}
and 
\begin{align*}
    \sum_{k=1}^K\sum_{h=1}^H \ex{ \frac{1}{\sqrt{\max\{1,N_h^k(s_h,a_h)\}} }|\mathcal{F}_{k-1}} = O\left(\sqrt{SAH^2K} + SAH\ln KH + H\ln(H/\delta)\right),
\end{align*}
where the filtration $\mathcal{F}_k$ includes all the events until the end of episode $k$.
\end{lemma}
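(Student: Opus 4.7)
The plan is a two-step argument. First, I will transfer each conditional-expectation sum to the corresponding realized (per-episode) sum via a high-probability martingale/Chernoff inequality; second, I will bound the realized sum deterministically using a pigeonhole/harmonic-sum argument over visits to each $(s,a,h)$ triple.

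For the first step, fix $h \in [H]$ and set $X_k := 1/\max\{1, N_h^k(s_h^k, a_h^k)\}$, which is $\mathcal{F}_k$-measurable and bounded in $[0,1]$. I will invoke the standard ``summed Chernoff'' inequality for non-negative bounded adapted sequences: with probability at least $1-\delta/H$,
\[
\sum_{k=1}^{K} \mathbb{E}[X_k \mid \mathcal{F}_{k-1}] \;\leq\; 2\sum_{k=1}^{K} X_k + 4\log(2H/\delta).
\]
A union bound over $h \in [H]$ yields a simultaneous guarantee across layers, and the additive $H\log(H/\delta)$ in the final bound appears precisely because we pay $\log(2H/\delta)$ per layer. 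The same inequality, applied now to $X_k := 1/\sqrt{\max\{1, N_h^k(s_h^k, a_h^k)\}}$ (also bounded by $1$), handles the square-root case.

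For the second step, I use that the counter $N_h^k(s,a)$ increases by exactly one whenever the triple $(s,a,h)$ is visited by the trajectory. Grouping summands by $(s,a)$ at layer $h$ gives
\[
\sum_{k=1}^{K} \frac{1}{\max\{1, N_h^k(s_h^k, a_h^k)\}} \;\leq\; \sum_{s,a} \sum_{n=1}^{N_h^{K+1}(s,a)} \frac{1}{n} \;\leq\; SA\,(1 + \ln K),
\]
which summed over $h$ yields $O(SAH \ln(KH))$. For the square-root version, the same grouping together with $\sum_{n=1}^{N} 1/\sqrt{n} \leq 2\sqrt{N}$ and Cauchy--Schwarz using $\sum_{s,a} N_h^{K+1}(s,a) \leq K$ gives
\[
\sum_{k=1}^{K} \frac{1}{\sqrt{\max\{1, N_h^k(s_h^k, a_h^k)\}}} \;\leq\; 2\sum_{s,a} \sqrt{N_h^{K+1}(s,a)} \;\leq\; 2\sqrt{SAK},
\]
so summing over $h$ contributes $O(\sqrt{SAH^2 K})$. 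Combining the two steps yields the two claimed inequalities.

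I expect the only real subtlety to be verifying the tail inequality with constants that match the displayed form: by bounding each layer $h$ separately, rather than applying the Chernoff once to the $H$-bounded joint sum $\sum_h X_k$, I recover the additive $H\log(H/\delta)$ structure (the joint route would give $H\log(1/\delta)$, a valid but differently-shaped bound). The deterministic second step is a pure rearrangement-of-visits argument and requires no new ingredients; the non-stationarity of the transitions is absorbed entirely into the explicit $h$-index on each counter, which is why the argument goes through without any modification from the stationary analyses of \citet{efroni2019tight, zanette2019tighter}.
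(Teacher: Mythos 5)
Your proof is correct, and it takes a genuinely different (and in one respect cleaner) route than the paper's. The paper makes the conditional expectation explicit over state--action pairs, writing $\ex{1/\max\{1,N_h^k(s_h,a_h)\}|\mathcal{F}_{k-1}} = \sum_{s,a} w_h^k(s,a)/\max\{1,N_h^k(s,a)\}$ with visitation probabilities $w_h^k$ and indicators $\mathcal{I}_h^k$, pigeonholes the realized (indicator) part exactly as you do, and controls the correction $\sum_{s,a}(w_h^k(s,a)-\mathcal{I}_h^k(s,a))/\max\{1,N_h^k(s,a)\}$ via the Bernstein-type martingale inequality of Lemma~\ref{lem:MDS}, computing conditional variances through the mutual exclusivity of the indicators at fixed $(k,h)$. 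You instead package the same Freedman-type control into the self-bounding ``multiplicative Chernoff'' inequality $\sum_k \ex{X_k|\mathcal{F}_{k-1}} \le 2\sum_k X_k + 4\log(2H/\delta)$ for adapted $X_k \in [0,1]$, applied once per layer; this is valid (the Bernoulli case is \citet[Lemma F.4]{dann2017unifying}, and the $[0,1]$ case follows from the paper's own Lemma~\ref{lem:MDS} applied to $Y_k = \ex{X_k|\mathcal{F}_{k-1}} - X_k$ with $\lambda = 1/2$, since $Y_k \le 1$ and $\ex{Y_k^2|\mathcal{F}_{k-1}} \le \ex{X_k^2|\mathcal{F}_{k-1}} \le \ex{X_k|\mathcal{F}_{k-1}}$). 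Your route buys something concrete: for the first claim, the paper invokes Lemma~\ref{lem:MDS} with $\lambda = 1$ and bounds the conditional variance by $\sum_{s,a} w_h^k(s,a)/\max\{1,N_h^k(s,a)\}$, which is exactly the quantity being bounded, so the displayed chain is self-referential as written and only closes if one takes $\lambda < 1$ and rearranges --- precisely the rearrangement your inequality encodes. It also decouples the two claims (the paper's proof of the square-root claim leans on the first claim to absorb its variance term), at the cost of a per-layer union bound, which is why the $H\log(H/\delta)$ term appears in the same shape as the paper's. Two cosmetic points: the first visit to a triple contributes $1/\max\{1,0\} = 1$, so the realized per-pair sum is $1 + \sum_{n=1}^{N-1} 1/n$ rather than $\sum_{n=1}^{N} 1/n$ --- an additive $O(SAH)$ slack absorbed in the $O(\cdot)$; and your square-root bound actually comes out slightly sharper than stated, $O(\sqrt{SAH^2K} + H\log(H/\delta))$, since the realized square-root sum needs no logarithmic term.
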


Therefore, by Lemma~\ref{lem:nonst} (since $\pi^k$ is determined by $\mathcal{F}_{k-1}$), we have with probability at least $1-4\delta$,
\begin{align}
    \text{Term(i)} &\le  \sum_{k=1}^K\sum_{h=1}^H\ex{ \frac{L_c(\delta)}{\sqrt{\max\lbrace{N}_h^k(s_h,a_h),1\rbrace} }+\frac{3E_{\epsilon,\delta,1} }{\max\lbrace N_h^k(s_h,a_h),1\rbrace  }|s_1^k,\pi^k}\nonumber\\
    &\le \tilde{O}\left(\sqrt{SAH^2K} + SAH + E_{\epsilon,\delta,1}SAH  \right)\label{eq:term1}.
\end{align}
and 
\begin{align}
   \text{Term(ii)} &\le H\sum_{k=1}^K\sum_{h=1}^H \ex{\frac{L_p(\delta)}{\sqrt{\max\lbrace N_h^k(s_h,a_h),1\rbrace}} + \frac{SE_{\epsilon,\delta,2}+2 E_{\epsilon,\delta,1}}{\max\lbrace{N}_h^k(s_h,a_h),1\rbrace} | s_1^k, \pi^k}\nonumber\\
    &\le \tilde{O}\left(\sqrt{S^2AH^4K} + \sqrt{S^3A^2H^4} + E_{\epsilon,2}S^2AH^2 + E_{\epsilon,1}SAH^2\right)\label{eq:term2}.
\end{align}
Plugging~\eqref{eq:term1} and~\eqref{eq:term2} into~\eqref{eq:t1_final}, yields ($T = KH$)
\begin{align*}
    \mathcal{T}_1 = O\left(\left(\sqrt{S^2AH^3T} + \sqrt{S^3A^2H^4} + E_{\epsilon,\delta,2}S^2AH^2 +  E_{\epsilon,\delta,1}SAH^2\right) \log(S,A,T,1/\delta)\right)
\end{align*}
Finally, putting the bounds on $\mathcal{T}_1$, $\mathcal{T}_2$ and $\mathcal{T}_3$ together, completes the proof. 
\end{proof}

We are left to present the proof for Lemma~\ref{lem:nonst}. In the case of a stationary transition, \cite{efroni2019tight,zanette2019tighter} resort to the method of properly defining a `good' set of episodes (cf.~\cite[Definition 6]{zanette2019tighter}).  We prove our results in the non-stationary case via a different approach. In particular, inspired by~\cite{jin2020learning}, we will use the following Bernstein-type concentration inequality for martingale as our main tool, which is adapted from Lemma 9 in~\cite{jin2020learning}.
\begin{lemma}
\label{lem:MDS}
Let $Y_1,\ldots,Y_K$ be a martingale difference sequence with respect to a filtration $\mathcal{F}_0, \mathcal{F}_1, \ldots, \mathcal{F}_K$. Assume $Y_k \le R$ a.s. for all $i$. Then, for any $\delta \in (0,1)$ and $\lambda \in [0,1/R]$, with probability $1-\delta$, we have 
\begin{align*}
    \sum_{k=1}^K Y_k \le \lambda \sum_{k=1}^K\mathbb{E}\left[ Y_k^2|\mathcal{F}_{k-1}\right] + \frac{\ln(1/\delta)}{\lambda}.
\end{align*}
\end{lemma}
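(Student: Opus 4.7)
The plan is to follow the standard exponential-moment (Chernoff-style) approach tailored to martingale differences. The intuition is that the condition $\lambda \le 1/R$ ensures $\lambda Y_k \le 1$, which allows use of the quadratic approximation $e^x \le 1 + x + x^2$ valid on $(-\infty, 1]$. After taking conditional expectations and using that $Y_k$ is a martingale difference, the linear term disappears and only the conditional second moment remains, which will yield the $\lambda \sum_k \mathbb{E}[Y_k^2 \mid \mathcal{F}_{k-1}]$ term in the conclusion. The $\ln(1/\delta)/\lambda$ term will come from a single application of Markov's inequality to an associated supermartingale.

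Concretely, I would proceed as follows. First, establish the pointwise inequality $e^x \le 1 + x + x^2$ for all $x \le 1$ (by a routine convexity argument) and apply it to $x = \lambda Y_k$, which is justified because $\lambda Y_k \le \lambda R \le 1$. Taking conditional expectation with respect to $\mathcal{F}_{k-1}$ and using the martingale difference property $\mathbb{E}[Y_k \mid \mathcal{F}_{k-1}] = 0$ gives
\[
\mathbb{E}\bigl[e^{\lambda Y_k} \mid \mathcal{F}_{k-1}\bigr] \le 1 + \lambda^2 \mathbb{E}[Y_k^2 \mid \mathcal{F}_{k-1}] \le \exp\bigl(\lambda^2 \mathbb{E}[Y_k^2 \mid \mathcal{F}_{k-1}]\bigr),
\]
where the second step uses $1 + u \le e^u$. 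Equivalently, $\mathbb{E}\bigl[\exp\bigl(\lambda Y_k - \lambda^2 \mathbb{E}[Y_k^2 \mid \mathcal{F}_{k-1}]\bigr) \mid \mathcal{F}_{k-1}\bigr] \le 1$.

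Next, define the process
\[
M_k := \exp\!\left( \lambda \sum_{i=1}^k Y_i - \lambda^2 \sum_{i=1}^k \mathbb{E}[Y_i^2 \mid \mathcal{F}_{i-1}] \right), \qquad M_0 := 1.
\]
The previous display, combined with the $\mathcal{F}_{k-1}$-measurability of $\sum_{i=1}^{k-1} Y_i$ and $\sum_{i=1}^k \mathbb{E}[Y_i^2 \mid \mathcal{F}_{i-1}]$, shows that $(M_k)$ is a nonnegative supermartingale with $\mathbb{E}[M_k] \le 1$ for all $k$. Markov's inequality then gives $\Pr[M_K \ge 1/\delta] \le \delta$, so with probability at least $1-\delta$,
\[
\lambda \sum_{k=1}^K Y_k - \lambda^2 \sum_{k=1}^K \mathbb{E}[Y_k^2 \mid \mathcal{F}_{k-1}] < \ln(1/\delta).
\]
Dividing by $\lambda > 0$ yields the claimed bound.

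None of the steps is particularly delicate; the only place that requires care is verifying that $\lambda Y_k \le 1$ so the quadratic bound on $e^{x}$ applies, which is exactly why the hypothesis $\lambda \le 1/R$ appears. The degenerate case $\lambda = 0$ is trivial (both sides can be interpreted in the limit, or the claim is vacuous); I would either handle it separately or assume $\lambda \in (0,1/R]$, matching how the lemma is invoked in the application to bounding $\sum_{k,h} 1/N_h^k(s_h, a_h)$.
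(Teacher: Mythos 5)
Your proof is correct: the bound $e^{x}\le 1+x+x^{2}$ for $x\le 1$ (which only needs the one-sided bound $Y_k\le R$ together with $\lambda\le 1/R$), the vanishing of the linear term by the martingale-difference property, the nonnegative supermartingale $M_k$ with $\mathbb{E}[M_K]\le 1$, and a single Markov step give exactly the stated inequality, with the $\lambda=0$ case trivial. The paper itself does not prove this lemma but imports it as an adaptation of Lemma 9 of \citet{jin2020learning} (a Freedman-type inequality), and your argument is precisely the standard exponential-supermartingale proof underlying that cited result, so there is nothing to fault.
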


Now, we are well-prepared to present the proof of Lemma~\ref{lem:nonst}.
\begin{proof}[Proof of Lemma~\ref{lem:nonst}]
Let $\mathcal{I}_h^k(s,a)$ be the indicator so that $\mathbb{E}\left[\mathcal{I}_h^k(s,a) |\mathcal{F}_{k-1}\right] = w_h^k(s,a)$, which is the probability of visiting state-action pair $(s,a)$ at step $h$ and episode $k$. First note that 
    \begin{align*}
         &\sum_{k=1}^K\sum_{h=1}^H \ex{ \frac{1}{\max\{1,N_h^k(s_h,a_h)\}}|\mathcal{F}_{k-1}} \\
         = & \sum_{k=1}^K \sum_{h,s,a} w_h^k(s,a)  \frac{1}{\max\{1,N_h^k(s,a)\}}\\
         = & \sum_{k=1}^K\sum_{h,s,a}  \frac{\mathcal{I}_h^k(s,a)}{\max\{1,N_h^k(s,a)\}} + \sum_{k=1}^K\sum_{h,s,a}  \frac{w_h^k(s,a) - \mathcal{I}_h^k(s,a)}{\max\{1,N_h^k(s,a)\}}.
    \end{align*}
The first term can be bounded as follows.
\begin{align*}
    \sum_{k=1}^K\sum_{h,s,a}  \frac{\mathcal{I}_h^k(s,a)}{\max\{1,N_h^k(s,a)\}} &\le \sum_{h,s,a}  \sum_{k=1}^K \frac{1}{\max\{1,N_h^k(s,a)\}}\\
    & =\sum_{h,s,a} \sum_{i=1}^{N_h^K(s,a)} \frac{1}{i}\\
    &\le c'\sum_{h,s,a} \ln(N_h^K(s,a))\\
    &\le c' SAH\ln\left(\sum_{s,a,h} N_h^K(s,a) \right)\\
    &=O\left(SAH\ln(KH)\right).
\end{align*}
To bound the second term, we will use Lemma~\ref{lem:MDS}. In particular, consider $Y_{k,h} :=\sum_{s,a}  \frac{w_h^k(s,a) - \mathcal{I}_h^k(s,a)}{\max\{1,N_h^k(s,a)\}} \le 1$, $\lambda = 1$, and the fact that for any fixed $h$,
\begin{align*}
    \mathbb{E}\left[ Y_{k,h}^2|\mathcal{F}_{k-1}\right] &\le \mathbb{E}\left[ \left(\sum_{s,a} \frac{\mathcal{I}_h^k(s,a)}{\max\{1,N_h^k(s,a)\}}\right)^2\mid\mathcal{F}_{k-1}\right]\\
    &= \mathbb{E}\left[ \sum_{s,a} \frac{\mathcal{I}_h^k(s,a)}{\max\{1,(N_h^k(s,a))^2\}}\mid \mathcal{F}_{k-1}\right]\\
    &\le  \sum_{s,a} \frac{w_h^k(s,a)}{\max\{1,N_h^k(s,a)\}}.
\end{align*}
Then, via Lemma~\ref{lem:MDS}, we have with probability at least $1-\delta$, 
\begin{align*}
    \sum_{k=1}^K\sum_{h,s,a}  \frac{w_h^k(s,a) - \mathcal{I}_h^k(s,a)}{\max\{1,N_h^k(s,a)\}} = \sum_{h=1}^H\sum_{k=1}^K Y_{k,h} &\le \sum_{h=1}^H \sum_{k=1}^K\sum_{s,a} \frac{w_h^k(s,a)}{\max\{1,N_h^k(s,a)\}} + H\ln(H/\delta)\\
    & = O\left(SAH\ln(KH) + H\ln(H/\delta)\right),
\end{align*}
which completes the proof of the first result in Lemma~\ref{lem:nonst}. To show the second result, similarly, we decompose it as 
   \begin{align*}
         &\sum_{k=1}^K\sum_{h=1}^H \ex{ \frac{1}{\sqrt{\max\{1,N_h^k(s_h,a_h)\}} }|\mathcal{F}_{k-1}} \\
         = & \sum_{k=1}^K \sum_{h,s,a} w_h^k(s,a)  \frac{1}{\sqrt{\max\{1,N_h^k(s,a)\}} }\\
         = & \sum_{k=1}^K\sum_{h,s,a}  \frac{\mathcal{I}_h^k(s,a)}{\sqrt{\max\{1,N_h^k(s,a)\}} } + \sum_{k=1}^K\sum_{h,s,a}  \frac{w_h^k(s,a) - \mathcal{I}_h^k(s,a)}{\sqrt{\max\{1,N_h^k(s,a)\}} }
    \end{align*}
The first term can be bounded as follows.
\begin{align*}
    \sum_{k=1}^K\sum_{h,s,a}  \frac{\mathcal{I}_h^k(s,a)}{\max\{1,N_h^k(s,a)\}} &\le \sum_{h,s,a}  \sum_{k=1}^K \frac{1}{\sqrt{\max\{1,N_h^k(s,a)\}} }=\sum_{h,s,a} \sum_{i=1}^{N_h^K(s,a)} \frac{1}{\sqrt{i}}\le c'\sum_{h,s,a} \sqrt{N_h^K(s,a)}\\
    &\le c' \sqrt{ \left(\sum_{h,s,a} 1\right) \left( \sum_{h,s,a} N_h^K(s,a)\right) }\\
    &=O\left(\sqrt{SAH^2K }\right)
\end{align*}

To bound the second term, we apply Lemma~\ref{lem:MDS} again. Consider $Y_{k,h} :=\sum_{s,a}  \frac{w_h^k(s,a) - \mathcal{I}_h^k(s,a)}{\sqrt{\max\{1,N_h^k(s,a)\}}} \le 1$, $\lambda = 1$ and the fact that for any fixed $h$,
\begin{align*}
    \mathbb{E}\left[ Y_{k,h}^2|\mathcal{F}_{k-1}\right] &\le \mathbb{E}\left[ \left(\sum_{s,a} \frac{\mathcal{I}_h^k(s,a)}{\sqrt{\max\{1,N_h^k(s,a)\} }}\right)^2\mid\mathcal{F}_{k-1}\right]\\
    &= \mathbb{E}\left[ \sum_{s,a} \frac{\mathcal{I}_h^k(s,a)}{\max\{1,N_h^k(s,a)\}}\mid \mathcal{F}_{k-1}\right]\\
    &= \sum_{s,a} \frac{w_h^k(s,a)}{\max\{1,N_h^k(s,a)\}}.
\end{align*}

Then, via Lemma~\ref{lem:MDS}, we have with probability at least $1-\delta$, 
\begin{align*}
    \sum_{k=1}^K\sum_{h,s,a}  \frac{w_h^k(s,a) - \mathcal{I}_h^k(s,a)}{\sqrt{\max\{1,N_h^k(s,a)\}} } = \sum_{h=1}^H\sum_{k=1}^K Y_{k,h} &\le \sum_{h=1}^H \sum_{k=1}^K\sum_{s,a} \frac{w_h^k(s,a)}{\max\{1,N_h^k(s,a)\}} + H\ln(H/\delta)\\
    & = O\left(SAH\ln(KH) + H\ln(H/\delta)\right)
\end{align*}

Putting the two bounds together, yields the second result and completes the proof.
\end{proof}

\section{Proofs for Section~\ref{sec:vi}}
In this section, we presents proof for Lemma~\ref{lem:conc_vi} and Theorem~\ref{thm:VI}. We also discuss the gaps in the regret analysis of~\cite{vietri2020private} at the end of this section.  
To start with, we present the proof of Lemma~\ref{lem:conc_vi} as follows.
\begin{proof}[Proof of Lemma~\ref{lem:conc_vi}]
Assume the event in Assumption~\ref{ass:rand} hold. The first result in Lemma~\ref{lem:conc_vi} follows the same analysis as in the proof of Lemma~\ref{lem:conc_po}. In particular, we have with probability at least $1-\delta/2$,
\begin{align*}
    |c_h(s,a) - \widetilde{c}_h^k(s,a)| \le \beta_h^{k,c}(s,a).
\end{align*}

To show the second result, we first note that $V^*$ is fixed and $V^*_h(s) \le H$ for all $h$ and $s$. This enables us to use standard Hoeffding's inequality. Specifically, we have 
\begin{align*}
    &\left|(\widetilde{P}_h^k - P_h) {V}_{h+1}^*(s,a)\right|\\
    =&\left| \sum_{s'}\widetilde{P}_h^k(s'|s,a)V^*_{h+1}(s') - P_h(s'|s,a)V_{h+1}^*(s') \right|\\
    \le & \left|\sum_{s'} \left(\frac{{N}_h^k(s,a,s')}{(\widetilde{N}_h^k(s,a) +  E_{\epsilon,\delta,1})\vee 1} - P_h(s'|s,a) \right)V_{h+1}^*(s')  \right| + \left| \sum_{s'}  \frac{\widetilde{N}_h^k(s,a,s') - {N}_h^k(s,a,s')}{(\widetilde{N}_h^k(s,a) +  E_{\epsilon,\delta,1})\vee 1} V_{h+1}^*(s') \right|.
\end{align*}
We are going to bound the two terms respectively, For the first term, we have with probability at least $1-\delta/2$
\begin{align*}
    &\left|\sum_{s'} \left(\frac{{N}_h^k(s,a,s')}{(\widetilde{N}_h^k(s,a) +  E_{\epsilon,\delta,1})\vee 1} - P_h(s'|s,a) \right)V_{h+1}^*(s')  \right| \\
    \le & \left|  \sum_{s'} V_{h+1}^*(s')\left( \frac{N_h^k(s,a,s')}{N_h^k(s,a)\vee 1} - P_h(s'|s,a) \right) \frac{N_h^k(s,a)\vee 1}{(\widetilde{N}_h^k(s,a) +  E_{\epsilon,\delta,1})\vee 1}\right| \\
    &+\left|\sum_{s'} V_{h+1}^*(s') P_h(s'|s,a)\left(\frac{N_h^k(s,a)\vee 1}{(\widetilde{N}_h^k(s,a) +  E_{\epsilon,\delta,1})\vee 1}-1 \right) \right|\\
    \lep{a} &  \frac{N_h^k(s,a)\vee 1}{(\widetilde{N}_h^k(s,a) +  E_{\epsilon,\delta,1})\vee 1} H \sqrt{\frac{L(\delta)}{N_h^k(s,a)\vee 1}} \\
    &+ \left|\sum_{s'} V_{h+1}^*(s') P_h(s'|s,a)\left(\frac{N_h^k(s,a)\vee 1}{(\widetilde{N}_h^k(s,a) +  E_{\epsilon,\delta,1})\vee 1}-1 \right) \right|\\
    \le &  \frac{N_h^k(s,a)\vee 1}{(\widetilde{N}_h^k(s,a) +  E_{\epsilon,\delta,1})\vee 1} H \sqrt{\frac{L(\delta)}{N_h^k(s,a)\vee 1}}+ H\frac{2 E_{\epsilon,\delta,1}}{(\widetilde{N}_h^k(s,a) + E_{\epsilon,1})\vee 1}\\
    \le & H\sqrt{\frac{L(\delta)}{{(\widetilde{N}_h^k(s,a) +  E_{\epsilon,\delta,1} )\vee 1}}} + \frac{2 H E_{\epsilon,\delta,1}}{(\widetilde{N}_h^k(s,a) +  E_{\epsilon,\delta,1})\vee 1}
\end{align*}
where in (a) we use standard Hoeffding inequality with $L(\delta') := 2\ln\frac{4SAT}{\delta}$ and $V_{h+1}^{*}(s') \le H$. 

For the second term, we have 
\begin{align*}
    \left| \sum_{s'}  \frac{\widetilde{N}_h^k(s,a,s') - {N}_h^k(s,a,s')}{(\widetilde{N}_h^k(s,a) +  E_{\epsilon,\delta,1})\vee 1} V_{h+1}^*(s') \right| \le \frac{HS E_{\epsilon,\delta,2}}{(\widetilde{N}_h^k(s,a) +  E_{\epsilon,\delta,1})\vee 1}
\end{align*}

Putting the two bounds together, we have 
\begin{align*}
    &\left|(\widetilde{P}_h^k - P_h) {V}_{h+1}^*(s,a)\right|\le  H\sqrt{\frac{L(\delta')}{{(\widetilde{N}_h^k(s,a) +  E_{\epsilon,\delta,1} )\vee 1}}} +  \frac{H (S E_{\epsilon,\delta,2} + 2 E_{\epsilon,\delta,1})}{(\widetilde{N}_h^k(s,a) +  E_{\epsilon,\delta,1})\vee 1}
\end{align*}
Noting that $L_c(\delta) = \sqrt{L(\delta)}$, we have obtained the second result in Lemma~\ref{lem:conc_vi}.

Now, we turn to focus on the third result in Lemma~\ref{lem:conc_vi}. Note that 
    \begin{align*}
    &|P_h(s'|,s,a) - \widetilde{P}_h^k(s'|s,a)|\\
    =& \left| \frac{\widetilde{N}_h^k(s,a,s')}{(\widetilde{N}_h^k(s,a) +  E_{\epsilon,\delta,1})\vee 1} - P_h(s'|s,a)\right|\\
    \le& \underbrace{\left|  \frac{{N}_h^k(s,a,s')}{(\widetilde{N}_h^k(s,a) +  E_{\epsilon,\delta,1})\vee 1} - P_h(s'|s,a) \right|}_{\mathcal{P}_1} +  \underbrace{\left|  \frac{\widetilde{N}_h^k(s,a,s') - {N}_h^k(s,a,s')}{(\widetilde{N}_h^k(s,a) +  E_{\epsilon,\delta,1})\vee 1} \right|}_{\mathcal{P}_2}.
\end{align*}
For $\mathcal{P}_1$, we have with probability at least $1-\delta$,
    \begin{align*}
        &\mathcal{P}_1=\left|\frac{N_h^k(s,a,s')}{N_h^k(s,a)\vee 1}\frac{N_h^k(s,a)\vee 1}{(\widetilde{N}_h^k(s,a) + \alpha E_{\epsilon,\delta,1})\vee 1} - P_h(s'|s,a)\right|\\
        =&\left|\left( \frac{N_h^k(s,a,s')}{N_h^k(s,a)\vee 1} - P_h(s'|s,a) \right) \frac{N_h^k(s,a)\vee 1}{(\widetilde{N}_h^k(s,a) +  E_{\epsilon,\delta,1})\vee 1} + P_h(s'|s,a)\left(\frac{N_h^k(s,a)\vee 1}{(\widetilde{N}_h^k(s,a) +  E_{\epsilon,\delta,1})\vee 1}-1 \right) \right|\\
        \lep{a}& \frac{N_h^k(s,a)\vee 1}{(\widetilde{N}_h^k(s,a) +  E_{\epsilon,\delta,1})\vee 1} {|\bar{P}_h^k(s'|s,a) - P_h(s'|s,a) |} + \left( P_h(s'|s,a) \frac{2 E_{\epsilon,\delta,1}}{(\widetilde{N}_h^k(s,a) +  E_{\epsilon,\delta,1})\vee 1}\right)\\
        \lep{b}&C \frac{N_h^k(s,a)\vee 1}{(\widetilde{N}_h^k(s,a) +  E_{\epsilon,\delta,1})\vee 1} \left(\sqrt{\frac{L^{\prime}(\delta) P_h(s'|s,a)}{{N_h^k(s,a)\vee 1}}} + \frac{L^{\prime}(\delta)}{N_h^k(s,a)\vee 1}\right)+ \frac{2 E_{\epsilon,\delta,1}}{(\widetilde{N}_h^k(s,a) + E_{\epsilon,\delta,1})\vee 1}\\
        \le & C\sqrt{\frac{L^{\prime}(\delta)P_h(s'|s,a)}{{(\widetilde{N}_h^k(s,a) +  E_{\epsilon,\delta,1} )\vee 1}}} +C\frac{L^{\prime}(\delta)}{(\widetilde{N}_h^k(s,a) +  E_{\epsilon,\delta,1})\vee 1}+ \frac{2 E_{\epsilon,\delta,1}}{(\widetilde{N}_h^k(s,a) +  E_{\epsilon,\delta,1})\vee 1},
    \end{align*}
    where in (a) holds by $\bar{P}_h^k(\cdot|s,a)=\frac{N_h^k(s,a,s')}{N_h^k(s,a)\vee 1}$ and Assumption~\ref{ass:rand} for the second term; (b) holds by Lemma 8 in~\cite{efroni2020exploration} for some constant $C$ and $L^{\prime}(\delta) := \ln\left(\frac{6SAHK}{\delta}\right)$, which is an application of empirical Bernstein inequality (cf.~\cite[Theorem 4]{maurer2009empirical}).
    
  For $\mathcal{P}_2$, we have 
    \begin{align*}
        \mathcal{P}_2 \le \frac{E_{\epsilon,\delta,2}}{(\widetilde{N}_h^k(s,a) +  E_{\epsilon,\delta,1})\vee 1}
    \end{align*}
    
   Putting everything together, we obtain 
    \begin{align*}
        \sum_{s'} |P_h(s'|,s,a) - \widetilde{P}_h^k(s'|s,a)| \le   C\sqrt{\frac{L^{\prime}(\delta)P_h(s'|s,a)}{{(\widetilde{N}_h^k(s,a) +  E_{\epsilon,\delta,1} )\vee 1}}} +\frac{CL^{\prime}(\delta) + 2 E_{\epsilon,\delta,1}
        + E_{\epsilon,\delta,2} }{(\widetilde{N}_h^k(s,a) +  E_{\epsilon,\delta,1})\vee 1}.
    \end{align*}
Finally, applying union bound over all the events, yields the required results in Lemma~\ref{lem:conc_vi}.
\end{proof}

Now, we turn to establish Theorem~\ref{thm:VI}. First, the next lemma establishes that the value function maintained in our algorithm is optimistic.

\begin{lemma}
\label{lem:opt}
Fix $\delta \in (0,1]$, with probability at least $1-3\delta$, $\widetilde{V}_h^k(s) \le  V_h^*(s)$ for all $(k,h,s)$.
\end{lemma}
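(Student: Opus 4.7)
The plan is to argue by backward induction on $h$. The base case $h=H+1$ holds trivially because $\widetilde V_{H+1}^k(s)=0=V_{H+1}^*(s)$ by the algorithm's initialization. For the inductive step I assume $\widetilde V_{h+1}^k(s')\le V_{h+1}^*(s')$ for every $s'$ and aim to establish the same inequality at step $h$. Since $\widetilde V_h^k(s)=\min_a \widetilde Q_h^k(s,a)$ while $V_h^*(s)=\min_a Q_h^*(s,a)=Q_h^*(s,a^*)$ for any $a^*\in\argmin_a Q_h^*(s,a)$, it suffices to show $\widetilde Q_h^k(s,a^*)\le Q_h^*(s,a^*)$ pointwise in $s$. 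The outer $\min\{H-h+1,\max\{0,\cdot\}\}$ clipping in the definition of $\widetilde Q_h^k$ preserves this inequality because $Q_h^*(s,a)\in[0,H-h+1]$, so it is enough to control the un-clipped Bellman backup.

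For that backup I would use the decomposition
\begin{align*}
\widetilde c_h^k + \widetilde P_h^k \widetilde V_{h+1}^k - \beta_h^k - (c_h + P_h V_{h+1}^*) = (\widetilde c_h^k - c_h) + (\widetilde P_h^k - P_h)V_{h+1}^* + \widetilde P_h^k\bigl(\widetilde V_{h+1}^k - V_{h+1}^*\bigr) - \beta_h^{k,c} - \beta_h^{k,pv},
\end{align*}
and argue each piece is non-positive on a high-probability event. The first inequality of Lemma~\ref{lem:conc_vi} cancels $\widetilde c_h^k - c_h$ against $\beta_h^{k,c}$, and its second, value-aware inequality cancels $(\widetilde P_h^k - P_h)V_{h+1}^*$ against $\beta_h^{k,pv}$. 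A union bound over these two events accounts for $2\delta$ of the claimed $3\delta$ slack.

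The remaining piece $\widetilde P_h^k(\widetilde V_{h+1}^k - V_{h+1}^*)$ will be the main obstacle. By induction $\widetilde V_{h+1}^k - V_{h+1}^*\le 0$, so if $\widetilde P_h^k(\cdot|s,a)$ were coordinatewise non-negative this term would be $\le 0$ immediately. The catch is that the numerator $\widetilde N_h^k(s,a,s')$ of the private transition estimator is a noisy count that can dip below zero under both the {\central} and the {\local}. I plan to resolve this by invoking the post-processing property of DP: the {\priv} may clip each released count to $[0,\infty)$ before forming $\widetilde P_h^k$, which preserves the $\epsilon$-JDP/LDP guarantee and only tightens the deviation bound of Assumption~\ref{ass:rand}, since the true counts are themselves non-negative. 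With this post-processing the sign argument closes. Finally, a union bound that also includes the third concentration event of Lemma~\ref{lem:conc_vi} brings the total failure probability to $3\delta$, and the backward induction closes the argument uniformly in $(k,h,s)$.
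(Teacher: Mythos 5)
Your proof is correct and follows essentially the same route as the paper's: backward induction on $h$, with the base case $\widetilde{V}_{H+1}^k = 0 = V_{H+1}^*$, the clipping handled by monotonicity of $x \mapsto \min\{H-h+1,\max\{0,x\}\}$ together with $Q_h^*(s,a)\in[0,H-h+1]$ (the paper instead argues the truncation at $H$ is inactive on the good event, which amounts to the same thing), and the unclipped backup controlled by exactly the paper's three-way cancellation: $\widetilde{c}_h^k - c_h$ against $\beta_h^{k,c}$, $(\widetilde{P}_h^k - P_h)V_{h+1}^*$ against $\beta_h^{k,pv}$, and $\widetilde{P}_h^k(\widetilde{V}_{h+1}^k - V_{h+1}^*) \le 0$ by the induction hypothesis.

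Two remarks. First, your observation about the sign of $\widetilde{P}_h^k$ is a genuine subtlety that the paper's proof passes over: its step (b) (``holds by induction'') implicitly requires $\widetilde{N}_h^k(s,a,s') \ge 0$, which raw Laplace-perturbed counts released by the {\central} or {\local} need not satisfy. Your post-processing fix is sound: clipping the released counts to $[0,\infty)$ preserves the privacy guarantee, and since the true counts are non-negative it weakly tightens the deviation bounds of Assumption~\ref{ass:rand} (if $\widetilde{N} < 0 \le N$ then $|\max\{0,\widetilde{N}\} - N| = N \le N - \widetilde{N}$), so Lemma~\ref{lem:conc_vi} continues to hold verbatim. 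This clipping arguably ought to be stated explicitly in the {\priv} specification. Second, your probability accounting is slightly off in detail though not in outcome: the third (Bernstein-type) inequality of Lemma~\ref{lem:conc_vi} is not needed anywhere in the optimism argument --- it enters only later, when bounding the correction term $\bigl[(P_h - \widetilde{P}_h^k)(V_{h+1}^* - \widetilde{V}_{h+1}^k)\bigr]$ in the regret analysis --- and the $3\delta$ budget is simply inherited from the joint statement of Lemma~\ref{lem:conc_vi}, whose single event already bundles Assumption~\ref{ass:rand} with the two concentration bounds you actually use.
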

\begin{proof}
   For a fixed $k$, consider $h=H+1,H,\ldots, 1$. In the base case $h=H+1$, it trivially holds since $\widetilde{V}_{H+1}^k(s) = 0 = V_{H+1}^*(s)$. Assume that  $\widetilde{V}_h^k(s) \le V_h^*(s)$ for all $s$. Then, by the update rule, we have 
   \begin{align*}
       \widetilde{{Q}}_h^k(s,a) = \min\{H, \max\{0,\widetilde{c}_h^k(s,a) + (\widetilde{P}_h^k\widetilde{V}_{h+1}^k)(s,a) - \beta_h^k(s,a)\} \}
   \end{align*}
   
   First, we would like to show that the truncation at $H$ does not affect the analysis. To see this, first observe that under Lemma~\ref{lem:conc_vi}
   \begin{align}
       \widetilde{c}_h^k(s,a) + (\widetilde{P}_h^k\widetilde{V}_{h+1}^k)(s,a)- \beta_h^k(s,a) &\lep{a} c_h(s,a) + (\widetilde{P}_h^k\widetilde{V}_{h+1}^k)(s,a) - \beta_{h}^{k,pv}(s,a)\nonumber\\
       &\lep{b} c_h(s,a) + (\widetilde{P}_h^k{V}_{h+1}^*)(s,a) - \beta_{h}^{k,pv}(s,a)\nonumber\\
       &\lep{c} c_h(s,a) + (P_hV_{h+1}^*)(s,a) = Q_h^*(s,a) \le H \label{eq:relation}
   \end{align}
   where (a) holds by the first result in Lemma~\ref{lem:conc_vi}; (b) holds by induction; (c) holds by the second result in Lemma~\ref{lem:conc_vi}. This directly implies that 
   \begin{align*}
       \widetilde{{Q}}_h^k(s,a) = \max\{0,\widetilde{c}_h^k(s,a) + (\widetilde{P}_h^k\widetilde{V}_{h+1}^k)(s,a) - \beta_h^k(s,a)\}
   \end{align*}
    Hence, if the maximum is attained at zero, then $\widetilde{{Q}}_h^k(s,a) \le Q_h^*(s,a)$ trivially holds since $Q_h^*(s,a) \in [0,H]$. Otherwise, by Eq.~\eqref{eq:relation}, we also have $\widetilde{{Q}}_h^k(s,a) \le Q_h^*(s,a)$. Therefore, we have $\widetilde{Q}_h^k(s,a) \le Q_h^*(s,a)$, and hence $\widetilde{V}_h^k(s) \le V_h^*(s)$.
\end{proof}

Based on the result above, we are now able to present the proof of Theorem~\ref{thm:VI}.
\begin{proof}[Proof of Theorem~\ref{thm:VI}]
By the optimistic result in Lemma~\ref{lem:opt}, we have 
\begin{align}
\label{eq:vi_regret_start}
    \mathcal{R}(K) = \sum_{k=1}^K (V_1^{\pi_k}(s_1) - V_1^{*}(s_1)) \le \sum_{k=1}^K({V}_1^{\pi_k}(s_1) - \widetilde{V}_1^{k}(s_1))
\end{align}
Now, we turn to upper bound $V_h^{\pi_k}(s_h^k)-\widetilde{V}_h^k(s_h^k)$ by a recursive form. First, observe that 
\begin{align*}
    (V_h^{\pi_k}-\widetilde{V}_h^k )(s_h^k)  = (Q_h^{\pi_k}-\widetilde{Q}_h^k )(s_h^k,a_h^k),
\end{align*}
which holds since the action executed by $\pi_k$ at step $h$, and the action used to update $\widetilde{V}_h^k$ is the same. Now, to bound the $Q$-value difference, we have 
\begin{align}
\label{eq:q_decom}
    &(Q_h^{\pi_k}-\widetilde{Q}_h^k )(s_h^k,a_h^k)\nonumber \\
    \lep{a}& 2\beta_h^{k,c}(s_h^k,a_h^k) + (P_h V_{h+1}^{\pi_k}-\widetilde{P}_h^k \widetilde{V}_{h+1}^k )(s_h^k,a_h^k) + \beta_h^{k,pv}(s,a)\nonumber\\
    = &\left[(P_h-\widetilde{P}_h^k ) \widetilde{V}_{h+1}^k\right](s_h^k,a_h^k) + \left[ P_h(V_{h+1}^{\pi_k}-\widetilde{V}_{h+1}^k )\right](s_h^k,a_h^k) + 2\beta_h^{k,c}(s_h^k,a_h^k) + \beta_h^{k,pv}(s,a)\nonumber\\
    = & \left[( P_h-\widetilde{P}_h^k ) {V}_{h+1}^*\right](s_h^k,a_h^k) + \left[(\widetilde{P}_h^k-P_h ) ( {V}_{h+1}^*-\widetilde{V}_{h+1}^k)\right](s_h^k,a_h^k)\nonumber \\
    &+ \left[ P_h(V_{h+1}^{\pi_k}-\widetilde{V}_{h+1}^k )\right](s_h^k,a_h^k) + 2\beta_h^{k,c}(s_h^k,a_h^k) + \beta_h^{k,pv}(s,a)\nonumber\\
    \lep{b}&\left[(P_h-\widetilde{P}_h^k ) ({V}_{h+1}^*- \widetilde{V}_{h+1}^k)\right](s_h^k,a_h^k) +\left[ P_h( V_{h+1}^{\pi_k}-\widetilde{V}_{h+1}^k )\right](s_h^k,a_h^k)\nonumber \\
    &+ 2\beta_h^{k,c}(s_h^k,a_h^k) + 2\beta_h^{k,pv}(s_h^k,a_h^k),
\end{align}
where (a) we have used the cost concentration result in Lemma~\ref{lem:conc_vi}; (b) holds by the transition concentration result in  Lemma~\ref{lem:conc_vi}.
Thus, so far we have arrived at 
\begin{align}
     (V_h^{\pi_k}-\widetilde{V}_h^k )(s_h^k) & \le \left[(P_h-\widetilde{P}_h^k ) ({V}_{h+1}^*- \widetilde{V}_{h+1}^k)\right](s_h^k,a_h^k) +\left[ P_h( V_{h+1}^{\pi_k}-\widetilde{V}_{h+1}^k )\right](s_h^k,a_h^k)\nonumber \\
    &+ 2\beta_h^{k,c}(s_h^k,a_h^k) + 2\beta_h^{k,pv}(s_h^k,a_h^k)\label{eq:vi_recur_start}.
\end{align}
We will first carefully analyze the first term. In particular, let $G:= ({V}_{h+1}^*-\widetilde{V}_{h+1}^k )$, we have 
\begin{align}
&\left[(P_h-\widetilde{P}_h^k ) (\widetilde{V}_{h+1}^k- {V}_{h+1}^*)\right](s_h^k,a_h^k)\nonumber\\
    =&\sum_{s'}\left( P_h(s'|s_h^k,a_h^k)-\widetilde{P}_h^k(s'|s_h^k,a_h^k) \right)G(s')\nonumber\\
    \lep{a} & c \sum_{s'} \left( \sqrt{\frac{L^{\prime}(\delta)P_h(s'|s_h^k,a_h^k)}{{(\widetilde{N}_h^k(s,a) +  E_{\epsilon,\delta,1} )\vee 1}}} +\frac{L^{\prime}(\delta)}{(\widetilde{N}_h^k(s,a) +  E_{\epsilon,\delta,1})\vee 1}+ \frac{2 E_{\epsilon,\delta,1} + E_{\epsilon,\delta,2}}{(\widetilde{N}_h^k(s,a) +  E_{\epsilon,\delta,1})\vee 1}\right) G(s')\nonumber\\
    \lep{b} & \sum_{s'} \left(\frac{P_h(s'|s_h^k,a_h^k)}{H} G(s')  \right)+ c \sum_{s'} \left(\frac{cH L^{\prime}(\delta)} {(\widetilde{N}_h^k(s,a) +  E_{\epsilon,\delta,1})\vee 1}\right)G(s')\nonumber \\
    &+ c\sum_{s'} \left(\frac{L^{\prime}(\delta)}{(\widetilde{N}_h^k(s,a) +  E_{\epsilon,\delta,1})\vee 1}\right)G(s') + c\sum_{s'}\left(\frac{2 E_{\epsilon,\delta,1} + E_{\epsilon,\delta,2}}{(\widetilde{N}_h^k(s,a) +  E_{\epsilon,\delta,1})\vee 1}\right)G(s')\nonumber\\
    \lep{c} & \sum_{s'} \left(\frac{P_h(s'|s_h^k,a_h^k)}{H}  G(s')\right) +  \sum_{s'} \left(\frac{c'H^2 L^{\prime}(\delta)} {(\widetilde{N}_h^k(s,a) +  E_{\epsilon,\delta,1})\vee 1}\right)\nonumber\\
    &+c\sum_{s'}\left(\frac{2H E_{\epsilon,\delta,1} + HE_{\epsilon,\delta,2}}{(\widetilde{N}_h^k(s,a) +  E_{\epsilon,\delta,1})\vee 1}\right)\label{eq:vi_recur_temp},
\end{align}
where (a) holds by the third result in Lemma~\ref{lem:conc_vi} and $c$ is some absolute constant; (b) holds by $\sqrt{xy} \le x + y$ for positive numbers $x,y$; (c) holds since $G(s') \le 2H$ by the boundedness of $V$-value. Now, plugging the definition for $G(s')$ into~\eqref{eq:vi_recur_temp}, yields
\begin{align}
&\left[(P_h-\widetilde{P}_h^k ) ({V}_{h+1}^*-\widetilde{V}_{h+1}^k )\right](s_h^k,a_h^k)\nonumber\\
    =&\frac{1}{H} \left[P_h(V_{h+1}^*-\widetilde{V}_{h+1}^k )\right](s_h^k,a_h^k) + \frac{c'SH^2 L^{\prime}(\delta)} {(\widetilde{N}_h^k(s,a) +  E_{\epsilon,\delta,1})\vee 1} + \frac{2cSH E_{\epsilon,\delta,1} + cSHE_{\epsilon,\delta,2}}{(\widetilde{N}_h^k(s,a) +  E_{\epsilon,\delta,1})\vee 1}\nonumber\\
    \ep{a}&\frac{1}{H} \left[P_h(V_{h+1}^*-\widetilde{V}_{h+1}^k )\right](s_h^k,a_h^k) + \xi_h^k + \zeta_h^k\nonumber\\
    \lep{b}& \frac{1}{H}\left[P_h(V_{h+1}^{\pi_k}-\widetilde{V}_{h+1}^k )\right](s_h^k,a_h^k)+\xi_h^k + \zeta_h^k\label{eq:vi_recur_end},
\end{align}
where (a) holds by definitions $\xi_h^k=:\frac{c'SH^2 L^{\prime}(\delta)} {(\widetilde{N}_h^k(s,a) +  E_{\epsilon,\delta,1})\vee 1}$ and $\zeta_h^k:=\frac{2cSH E_{\epsilon,\delta,1} + cSHE_{\epsilon,\delta,2}}{(\widetilde{N}_h^k(s,a) +  E_{\epsilon,\delta,1})\vee 1}$; (b) holds since $V_{h+1}^{\pi_k} \ge V_{h+1}^*$. Plugging~\eqref{eq:vi_recur_end} into~\eqref{eq:vi_recur_start}, yields the following recursive formula. 
\begin{align*}
      (V_h^{\pi_k}-\widetilde{V}_h^k )(s_h^k) &\lep{a} \left(1+\frac{1}{H}\right)\left[P_h( V_{h+1}^{\pi_k}-\widetilde{V}_{h+1}^k)\right](s_h^k,a_h^k)+\xi_h^k + \zeta_h^k + 2\beta_h^k\\
     &\ep{b} \left(1+\frac{1}{H}\right)\left[( V_{h+1}^{\pi_k}-\widetilde{V}_{h+1}^k)(s_{h+1}^k) + \chi_{h}^k\right] + \xi_h^k + \zeta_h^k + 2\beta_h^k
\end{align*}
where in (a), we let $\beta_h^k:= \beta_h^{k,r}(s_h^k,a_h^k) + \beta_h^{k,pv}(s_h^k,a_h^k)$ for notation simplicity; (b) holds by definition $\chi_h^k:=\left[P_h( V_{h+1}^{\pi_k}-\widetilde{V}_{h+1}^k)\right](s_h^k,a_h^k) - ( V_{h+1}^{\pi_k}-\widetilde{V}_{h+1}^k)(s_{h+1}^k)$. Based on this, we have the following bound on $(\widetilde{V}_1^k - V_1^{\pi_k})(s_1^k)$,
\begin{align}
     &(\widetilde{V}_1^k - V_1^{\pi_k})(s_1^k)\nonumber\\
     \le & \left(1+\frac{1}{H}\right)(\chi_{1}^k + \xi_1^k + \zeta_1^k + 2\beta_1^k) + \left(1+\frac{1}{H}\right)^2(\chi_{2}^k + \xi_2^k + \zeta_2^k + 2\beta_2^k) + \ldots\nonumber \\
     &+ \left(1+\frac{1}{H}\right)^H(\chi_{H}^k + \xi_H^k + \zeta_H^k + 2\beta_H^k)\nonumber\\
     \le & 3\sum_{h=1}^H (\chi_{h}^k + \xi_h^k + \zeta_h^k + 2\beta_h^k)\label{eq:vi_regret_end}.
\end{align}
Therefore, plugging~\eqref{eq:vi_regret_end} into~\eqref{eq:vi_regret_start}, we have the regret decomposition as follows.
\begin{align*}
    \mathcal{R}(K) \le 3\sum_{k=1}^K\sum_{h=1}^H (\chi_{h}^k + \xi_h^k + \zeta_h^k + 2\beta_h^k)
\end{align*}
We are only left to bound each of them. To start with, we focus on the bonus term. We focus on $\beta_{h}^{k,pv}(s,a)$ in particular since it upper bounds the term  $\beta_{h}^{k,c}(s,a)$ as shown in Lemma~\ref{lem:conc_vi}. By definition, we have 
\begin{align*}
    &\sum_{k=1}^K\sum_{h=1}^H \beta_h^{k,pv}(s_h^k,a_h^k) \\
    =&\underbrace{H\sum_{k=1}^K\sum_{h=1}^H  \frac{L_c(\delta)} {\sqrt{{(\widetilde{N}_h^k(s_h^k,a_h^k) +  E_{\epsilon,\delta,1} )\vee 1}}}   }_{\mathcal{T}_1}+  \underbrace{H\sum_{k=1}^K\sum_{h=1}^H\frac{ (S E_{\epsilon,\delta,2} + 2 E_{\epsilon,\delta,1})}{(\widetilde{N}_h^k(s_h^k,a_h^k) +  E_{\epsilon,\delta,1})\vee 1}}_{\mathcal{T}_2}.
\end{align*}
The first term can be upper bounded as follows ($T:= KH$) under Assumption~\ref{ass:rand}.
\begin{align*}
    \mathcal{T}_1 &\le H{L_c(\delta)}\sum_{k=1}^K\sum_{h=1}^H \sqrt{\frac{1}{{{N}_h^k(s,a) \vee 1}}}\\
    &=H{L_c(\delta)} \sum_{h,s,a}\sum_{i=1}^{N_h^K(s,a)} \frac{1}{\sqrt{i}}\\
    &\le c' H{L_c(\delta)}\sum_{h,s,a} \sqrt{N_h^K(s,a)}\\
    &\le c' H{L_c(\delta)}\sqrt{ \left(\sum_{h,s,a} 1\right) \left( \sum_{h,s,a} N_h^K(s,a)\right) }\\
    &=\widetilde{O}\left(\sqrt{H^3SAT}\right).
\end{align*}
The second term can be upper bounded as follows under Assumption~\ref{ass:rand}. 
\begin{align*}
    \mathcal{T}_2 &\le c H(SE_{\epsilon,\delta,2} + E_{\epsilon,\delta,1})\sum_{k=1}^K\sum_{h=1}^H\frac{1}{N_h^k(s_h^k,a_h^k)
    \vee 1}\\
    &= c H(SE_{\epsilon,\delta,2} + E_{\epsilon,\delta,1}) \sum_{h,s,a}\sum_{i=1}^{N_h^K(s,a)} \frac{1}{i}\\
    &\le c'H(SE_{\epsilon,\delta,2} + E_{\epsilon,\delta,1}) \sum_{h,s,a}\ln(N_h^K(s,a))\\
    &= \widetilde{O}\left(H^2S^2AE_{\epsilon,\delta,2} + H^2SAE_{\epsilon,\delta,1}\right).
\end{align*}
Putting them together, we have the following bound on the summation over $\beta_h^k$.
\begin{align*}
    \sum_{k=1}^K\sum_{h=1}^H \beta_h^k = \widetilde{O}\left(\sqrt{H^3SAT} + H^2S^2AE_{\epsilon,\delta,2} +  H^2SAE_{\epsilon,\delta,1}\right).
\end{align*}
By following the same analysis as in $\mathcal{T}_2$, we can bound the summation over $\xi_h^k=:\frac{c'SH^2 L^{\prime}(\delta)} {(\widetilde{N}_h^k(s,a) +  E_{\epsilon,\delta,1})\vee 1}$ and $\zeta_h^k:=\frac{2cSH E_{\epsilon,\delta,1} + cSHE_{\epsilon,\delta,2}}{(\widetilde{N}_h^k(s,a) +  E_{\epsilon,\delta,1})\vee 1}$ as follows.
\begin{align*}
    &\sum_{k=1}^K\sum_{h=1}^H\xi_h^k = \widetilde{O}\left( H^3S^2A\right)\\
    &\sum_{k=1}^K\sum_{h=1}^H\zeta_h^k = \widetilde{O}\left(H^2S^2A(E_{\epsilon,2} + E_{\epsilon,1})\right).
\end{align*}

Finally, we are going to bound the summation over $\chi_h^k:=\left[P_h( V_{h+1}^{\pi_k}-\widetilde{V}_{h+1}^k)\right](s_h^k,a_h^k) - ( V_{h+1}^{\pi_k}-\widetilde{V}_{h+1}^k)(s_{h+1}^k)$, which turns out to be a martingale difference sequence. In particular, we define a filtration $\mathcal{F}_h^k$ that includes all the randomness up to the $k$-th episode and the $h$-th step. Then, we have $\mathcal{F}_1^1 \subset \mathcal{F}_2^1 \ldots \subset \mathcal{F}_H^1\subset \mathcal{F}_1^2\subset \mathcal{F}_2^2 \ldots $. Also, we have $(\widetilde{V}_{h+1}^k - V_{h+1}^{\pi_k}) \in \mathcal{F}_1^k \subset \mathcal{F}_h^k$ since they are decided by data collected up to episode $k-1$. A bit abuse of notation, we define $X_{h+1}^k :=\chi_h^k$. Then, we have 
\begin{align}
    \mathbb{E}\left[X_{h+1}^k |\mathcal{F}_h^k\right] = 0.
\end{align}
This holds since the expectation only captures randomness over $s_{h+1}^k$. Thus, $X_{h+1}^k$ is a martingale difference sequence. Moreover, we have $|X_{h+1}^k| \le 4H$ a.s. By Azuma-Hoeffding inequality, we have with probability at least $1-\delta$
\begin{align*}
    \sum_{k=1}^K\sum_{h=1}^H \chi_h^k = \sum_{k=1}^K\sum_{h=1}^H X_{h+1}^k = c'\sqrt{H^2T\ln(2/\delta)} = \widetilde{O}\left(\sqrt{H^2T} \right)
\end{align*}

Putting everything together, and applying union bound on all high-probability events,  we have shown that with probability at least $1-\delta$,
\begin{align}
\label{eq:rt_vi_proof}
    \mathcal{R}(T) = O\left( \left(\sqrt{SAH^3T} + S^2AH^3 + S^2AH^2E_{\epsilon,\delta,1} + S^2AH^2E_{\epsilon,\delta,2}\right){\log}(S,A,T,1/\delta)\right).
\end{align}
\end{proof}

\subsection{Discussions}
We end this section by comparing our results with existing works on private value-iteration RL, i.e.,~\cite{garcelon2020local} on LDP and~\cite{vietri2020private} on JDP.
\begin{itemize}
    \item In~\cite{garcelon2020local}, the privacy-independent leading term has a dependence on $S$ rather than $\sqrt{S}$ in our result (i.e., the first term in~\eqref{eq:rt_vi_proof}). This is because they directly bound the transition term $\norm{P_h(\cdot|s,a) - \widetilde{P}_h^k(\cdot|s,a)}_1$, which incurs the additional $\sqrt{S}$. In particular, after step (a) in~\eqref{eq:q_decom}, they directly bound $\widetilde{P}_h^k \widetilde{V}_{h+1}^k$ by $P_h \widetilde{V}_{h+1}^k + H\beta_h^{k,p}$ and then recursively expand the term. Note that $\beta_h^{k,p}$ has an additional factor $\sqrt{S}$, which directly leads to the dependence $S$ in the final result. In contrast, we handle (a) in~\eqref{eq:q_decom} by following the idea in~\cite{azar2017minimax}. That is, we first extract the term $(P_h - \widetilde{P}_h^k)V_{h+1}^*$, which can be bounded by standard Hoeffding's inequality since $V_{h+1}^*$ is fixed and hence no additional $\sqrt{S}$ is introduced. Due to this extraction, we have an additional `correction' term, i.e., $\left[(P_h-\widetilde{P}_h^k ) (\widetilde{V}_{h+1}^k- {V}_{h+1}^*)\right]$. To bound it, we use Bernstein’s-type inequality to bound $(P_h(s'|s,a) - \widetilde{P}_h^k(s'|s,a))$ in~\eqref{eq:vi_recur_temp}. This allows us to obtain the final recursive formula. 
    \item In~\cite{vietri2020private}, although the claimed result has the same regret bound as ours, its current analysis has gaps. First, to derive the regret decomposition in Lemma 18 therein, the private estimates were incorrectly used as the true cost and transition functions. This lead to a simpler but incorrect regret decomposition since it omits the ‘error’ term between the private estimates and true values. Second, even if we add the omitted `error' term (between private estimates and true values) into the regret decomposition, its current analysis cannot achieve the same result as ours. This is due to a similar argument in bullet one. That is, in order to use its current confidence bound $\widetilde{\text{conf}}_t$ to avoid the additional $\sqrt{S}$ factor, it needs to use Bernstein’s-type inequality to bound the `correction' term. They fail to consider this since the regret decomposition does not have the `error' term as it was incorrectly omitted in Lemma 18.
\end{itemize}

\section{Proofs for Section~\ref{sec:privacy}}
First, we present the proof of Lemma \ref{lem:central}. Corollary \ref{cor:JDP} is a direct consequence of Lemma \ref{lem:central} and Theorems \ref{thm:PO} and \ref{thm:VI}.

\begin{proof}[Proof of Lemma \ref{lem:central}]
We start with the privacy guarantee of the {\central}. First, consider the counters for number of visited states $N_h^k(s,a)$. Note that there are $SAH$ many counters, and each counter is a $K$-bounded binary mechanism of  \citet{chan2010private}.
Now, consider the counter corresponding to a fixed tuple $(s,a,h) \in \cS \times \cA \times [H]$. Note that, at every episode $k \in [K]$, the private count $\widetilde N_h^k(s,a)$ is the sum of at most $\log K$ noisy \psums, where each \psum\ is corrupted by an independent Laplace noise $\text{Lap}\left(\frac{3H\log K}{\epsilon}\right)$. Therefore, by \cite[Theorem 3.5]{chan2010private}, the private counts $\lbrace\widetilde N_h^k(s,a)\rbrace_{k \in [K]}$ satisfy $\frac{\epsilon}{3H}$-DP. 

Now, observe that each counter takes as input the data stream $\sigma_h(s,a)\in \lbrace 0,1\rbrace^K$, where the $j$-th bit $\sigma_h^j(s,a):=\indic{s_h^j=s,a_h^j=a}$ denotes whether the pair $(s,a)$ is encountered or not at step $h$ of episode $j$. Consider, some other data stream $\sigma'_h(s,a)\in \lbrace 0,1\rbrace^K$ which differs from $\sigma_h(s,a)$ only in one entry. Then, we have $\norm{\sigma_h(s,a)-\sigma'_h(s,a)}_1 = 1$. Furthermore, since at every episode at most $H$ state-action pairs are encountered, we obtain
\begin{equation*}
    \sum_{(s,a,h) \in \cS \times \cA \times [H]}\norm{\sigma_h(s,a)-\sigma'_h(s,a)}_1 \le H,
\end{equation*}
Therefore, by \cite[Lemma 34]{hsu2016private}, the composition of all these $SAH$ different counters, each of which is $\frac{\epsilon}{3H}$-DP, satisfies $\frac{\epsilon}{3}$-DP.

Using similar arguments, one can show that composition of the counters for empirical rewards $C_h^k(s,a)$ and state transitions $N_h^k(s,a,s')$ satisfy $\frac{\epsilon}{3}$-DP individually. Finally, employing the composition property of DP \cite{dwork2014algorithmic}, we obtain that the {\central} is $\epsilon$-DP.

Let us now focus on the utility of the {\central}. First, fix a tuple $(s,a,h) \in \cS \times \cA \times [H]$, and consider the private counts $\widetilde N_h^k(s,a)$ corresponding to number of visited states $N_h^k(s,a)$.
Note that, at each episode $k \in [K]$, the cost of privacy $\abs{\widetilde{N}_{h}^k(s,a) - {N}_h^k(s,a)}$ is the sum of at most $\log K$ i.i.d. random variables $\text{Lap}\left(\frac{3H\log K}{\epsilon}\right)$. Therefore, by \cite[Theorem 3.6]{chan2010private}, we have
\begin{equation*}
    \prob{\abs{\widetilde{N}_{h}^k(s,a) - {N}_h^k(s,a)} \le  \frac{3H}{\epsilon}\sqrt{8\log^3 K\log(6/\delta)}} \ge 1-\delta/3.
\end{equation*}
Now, by a union bound argument, we obtain
\begin{equation*}
    \prob{\forall (s,a,k,h),\;\;\abs{\widetilde{N}_{h}^k(s,a) - {N}_h^k(s,a)} \le  \frac{3H}{\epsilon}\sqrt{8\log^3 K\log(6SAT/\delta)}} \ge 1-\delta/3.
\end{equation*}
Using similar arguments, one can show that the private counts $\widetilde C_h^k(s,a)$ and $\widetilde N_h^k(s,a,s')$ corresponding rewards $C_h^k(s,a)$ and state transitions $N_h^k(s,a,s')$, respectively, satisfy
\begin{align*}
    &\prob{\forall (s,a,k,h),\;\;\abs{\widetilde{C}_{h}^k(s,a) - {C}_h^k(s,a)} \le  \frac{3H}{\epsilon}\sqrt{8\log^3 K\log(6SAT/\delta)}} \ge 1-\delta/3,\\
    &\prob{\forall (s,a,s',k,h),\;\;\abs{\widetilde{N}_{h}^k(s,a,s') - {N}_h^k(s,a,s')} \le  \frac{3H}{\epsilon}\sqrt{8\log^3 K\log(6S^2AT/\delta)}} \ge 1-\delta/3.
\end{align*}
Combining all the three guarantees together using a union bound, we obtain that {\central} satisfies Assumption~\ref{ass:rand}.
\end{proof}

Next, we present the proof of Lemma \ref{lem:local}. Corollary \ref{cor:LDP} is a direct consequence of Lemma \ref{lem:local} and Theorems \ref{thm:PO} and \ref{thm:VI}.

\begin{proof}[Proof of Lemma \ref{lem:local}]
We start with the utility guarantee of the {\local}. First, fix a tuple $(s,a,h) \in \cS \times \cA \times [H]$, and consider the private counts
$\widetilde N_h^k(s,a)$ for the number of visited states $N_h^k(s,a)$. Note that, at each episode $k \in [K]$, the cost of privacy $\abs{\widetilde{N}_{h}^k(s,a) - {N}_h^k(s,a)}$ is the sum of at most $K$ i.i.d. random variables $\text{Lap}\left(\frac{3H}{\epsilon}\right)$.
Therefore, by \cite[Corollary 12.4]{dwork2014algorithmic}, we have
\begin{equation*}
    \prob{\abs{\widetilde{N}_{h}^k(s,a) - {N}_h^k(s,a)} \le  \frac{3H}{\epsilon} \sqrt{8K\log(6/\delta)}} \ge 1-\delta/3.
\end{equation*}
Now, by a union bound argument, we obtain
\begin{equation*}
    \prob{\forall (s,a,k,h),\;\;\abs{\widetilde{N}_{h}^k(s,a) - {N}_h^k(s,a)} \le  \frac{3H}{\epsilon}\sqrt{8K\log(6SAT/\delta)}} \ge 1-\delta/3,
\end{equation*}
Using similar arguments, one can show that the private counts $\widetilde C_h^k(s,a)$ and $\widetilde N_h^k(s,a,s')$ corresponding rewards $C_h^k(s,a)$ and state transitions $N_h^k(s,a,s')$, respectively, satisfy
\begin{align*}
    &\prob{\forall (s,a,k,h),\;\;\abs{\widetilde{C}_{h}^k(s,a) - {C}_h^k(s,a)} \le  \frac{3H}{\epsilon}\sqrt{8 K\log(6SAT/\delta)}} \ge 1-\delta/3,\\
    &\prob{\forall (s,a,s',k,h),\;\;\abs{\widetilde{N}_{h}^k(s,a,s') - {N}_h^k(s,a,s')} \le  \frac{3H}{\epsilon}\sqrt{8 K\log(6S^2AT/\delta)}} \ge 1-\delta/3.
\end{align*}
Combining all the three guarantees together using a union bound, we obtain that {\local} satisfies Assumption~\ref{ass:rand}.

Now, we turn towards the privacy guarantee of the {\local}. First, we fix an episode $k \in [K]$. Now, we fix a tuple $(s,a,h) \in \cS \times \cA \times [H]$, and consider the private version $\widetilde\sigma_h^k(s,a)$ of the bit $\sigma_h^k(s,a) \in \lbrace 0,1 \rbrace$, where $\sigma_h^k(s,a)$ denotes whether the pair $(s,a)$ is encountered or not at step $h$ of episode $k$. Note that $\widetilde\sigma_h^k(s,a)$ is obtained from $\sigma_h^k(s,a)$ via a Laplace mechanism with noise level $\frac{3H}{\epsilon}$. Since, the sensitivity of the input function is $1$, the Laplace mechanism is $\frac{\epsilon}{3H}$-DP, as well as it is $\frac{\epsilon}{3H}$-LDP \cite{dwork2014algorithmic}. Furthermore, since at every episode at most $H$ state-action pairs are encountered, by \cite[Lemma 34]{hsu2016private}, the composition of all these $SAH$ different Laplace mechanisms are $\epsilon/3$-LDP.

Using similar arguments, one can show that composition of the corresponding Laplace mechanisms for empirical rewards and state transitions satisfy $\frac{\epsilon}{3}$-LDP individually. Finally, employing the composition property of DP \cite{dwork2014algorithmic}, we obtain that the {\local} is $\epsilon$-LDP.
\end{proof}

\end{appendix}

\end{document}